\newtheorem{theorem}{Theorem}
\newtheorem{lemma}{Lemma}
\newtheorem{assumption}{Assumption}
\newcommand{\E}{\mathbb{E}}
\newcommand{\R}{\mathbb{R}}
\definecolor{gray}{rgb}{0.6, 0.6, 0.6}
\DeclareMathOperator*{\argmax}{arg\,max}
\DeclareMathOperator*{\argmin}{arg\,min}
\definecolor{mycolorappendix1}{RGB}{112, 173, 71}
\definecolor{mycolorappendix2}{RGB}{237, 125, 49}
\title{Unsupervised Domain Adaptation with Dynamics-\\Aware Rewards in Reinforcement Learning}
\author{
	Jinxin Liu$^{124}$ \ \  
	Hao Shen$^3$\thanks{Work was done at Westlake University.  $^\dagger$ Corresponding author.} \ \  
	Donglin Wang$^{24 \dagger} $ \ \ \ 
	Yachen Kang$^{124}$ \ \  
	Qiangxing Tian$^{124}$ \\ 
	$^1$ Zhejiang University. \ $^2$ Westlake University. \ $^3$ UC Berkeley. \\
	$^4$ Institute of Advanced Technology, Westlake Institute for Advanced Study. \\
	liujinxin@westlake.edu.cn, haoshen@berkeley.edu, \\ \{wangdonglin, kangyachen, tianqiangxing\}@westlake.edu.cn
}
\begin{document}

\maketitle

\begin{abstract}
  Unsupervised reinforcement learning aims to acquire skills without prior goal representations, where an agent automatically explores an open-ended environment to represent goals and learn the goal-conditioned policy. 
  However, this procedure is often time-consuming, limiting the rollout in some potentially expensive target environments. 
  The intuitive approach of training in another interaction-rich environment disrupts the reproducibility of trained skills in the target environment due to the dynamics shifts and thus inhibits direct transferring. 
  %TODO: free or full
  Assuming free access to a source environment, we propose an unsupervised domain adaptation method to identify and acquire skills across dynamics. 
  Particularly, we introduce a KL regularized objective to encourage emergence of skills, rewarding the agent for both discovering skills and aligning its behaviors respecting dynamics shifts. This suggests that both dynamics (source and target) shape the reward to facilitate the learning of adaptive skills. 
  We also conduct empirical experiments to demonstrate that our method can effectively learn skills that can be smoothly deployed in target. 
\end{abstract}

\section{Introduction}
Recently, the machine learning community has devoted attention to unsupervised reinforcement learning (RL) to acquire useful skills, ie, the problem of automatic discovery of a goal-conditioned policy and its corresponding goal space~\citep{colas2020intrinsically}. 
% without an extrinsic reward function. 
As shown in Figure~\ref{fig-introduction-url-udarl} (left), the standard training~procedure of learning skills in an unsupervised way follows: (1) representing goals, consisting of automatically generating the goal distribution $p(g)$ and the corresponding goal-achievement reward function $r_g$; (2) learning the goal-conditioned policy $\pi_\theta$ with the acquired $p(g)$ and $r_g$. 
%with standard RL algorithms (on-policy or off-policy). 
%With the off-the-shelf RL methods~\citep{schulman2017proximal, haarnoja2018soft},  most of the unsupervised RL methods focus on representing goals. 
Leveraging~fully~autonomous interaction with the environment, the agent sets up goals, builds the goal-achievement~reward~function, and extrapolates the goal-conditioned policy in parallel by adopting off-the-shelf RL methods~\citep{schulman2017proximal, haarnoja2018soft}. 
While we can obtain skills without any prior goal representations ($p(g)$ and $r_g$)~in~an~unsupervised way, a major drawback of this approach is that it requires a large amount of rollout steps to represent goals and learn the policy itself, together. This procedure is often impractical~in some target environments (eg, the robot in real world), where online interactions are time-consuming and potentially~expensive.

That said, there often exist environments that resemble in structure (dynamics) yet provide more accessible rollouts (eg, unlimited in simulators). For problems with such source environments available, training the policy in a source environment significantly reduces the cost associated with interaction in the target environment. 
Critically, we can train a policy in one environment and deploy it in another by utilizing their structural similarity and the excess of interaction. 
%Considering the quadruped robot locomotion, we can learn arbitrary skills using collected data through (almost) unlimited interaction in a simulator (as source environment) before the deployment in real (as target). 
Considering the navigation in a room, we can learn arbitrary skills through the active exploration in a source simulated room (with different layout or friction) before the deployment in the target room. 
However, it is reasonable to suspect that the learned skills overfit the training environment, the dynamics of which, dictating the goal distribution and reward function, implicitly shape goal representation and guide policy acquisition. 
%The procedure above
Such deployment would then make learned skills struggle to adapt to new, unseen environments and produce a large drop in performance in target 
due to the dynamics~shifts,~as~shown in~Figure~\ref{fig-introduction}~(top). 
%To overcome the limitation, we aim to solve the problem of training policy in a source while making the acquired policy also effective for the target environment, like Figure~\ref{fig-introduction}~(bottom). 
%In other words, even learned in source, the policy should be shaped by the target dynamics. 
%, even though the policy is trained in the source, so that it can be deployed smoothly in the target. 
%incompatible dynamics 
In this paper, we overcome the limitations (of limited rollout in target and dynamics shifts) associated with the \emph{(source, target)} environments pair through unsupervised domain adaptation. 

\begin{figure}[t] 
	\centering % [scale=0.38] [width=\columnwidth * 2]
	\includegraphics[scale=0.65]{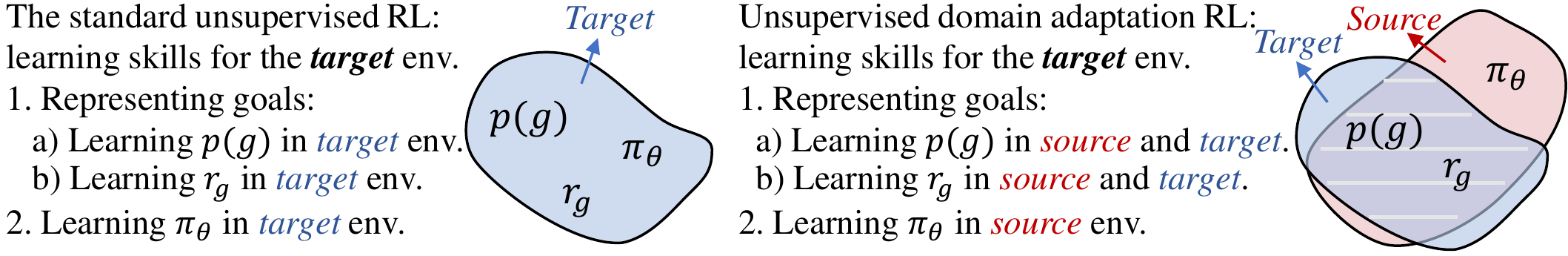}
	\vspace{-15pt}
	\caption{The training procedures of (left) the standard unsupervised RL in a single target environment, and (right) the unsupervised domain adaptation in RL with a pair of source and target environments. $p(g)$: the goal distribution; $r_g$: the goal-achievement reward function; $\pi_\theta$: the goal-conditioned~policy.} 
	\vspace{-11pt}
	\label{fig-introduction-url-udarl}
\end{figure}

In practice, while performing a \emph{full} unsupervised RL method in target that represents goals and captures all of them for learning the entire goal-conditioned policy (Figure~\ref{fig-introduction-url-udarl} left) can be extremely challenging with the limited rollout steps, learning a model for \emph{only} (partially) representing goals is much easier. This gives rise to learning the policy in source and taking the limited rollouts in target into account only for identifying the goal representations, which further shape the policy. 
As shown in Figure~\ref{fig-introduction-url-udarl} (right), we represent goals in both environments while optimizing the policy only in the~source environment, alleviating the excessive need for   rollout steps in the target~environment. 

\begin{wrapfigure}{r}{0.50\textwidth}
	\vspace{-10pt}
	\begin{center}
		\includegraphics[scale=0.315]{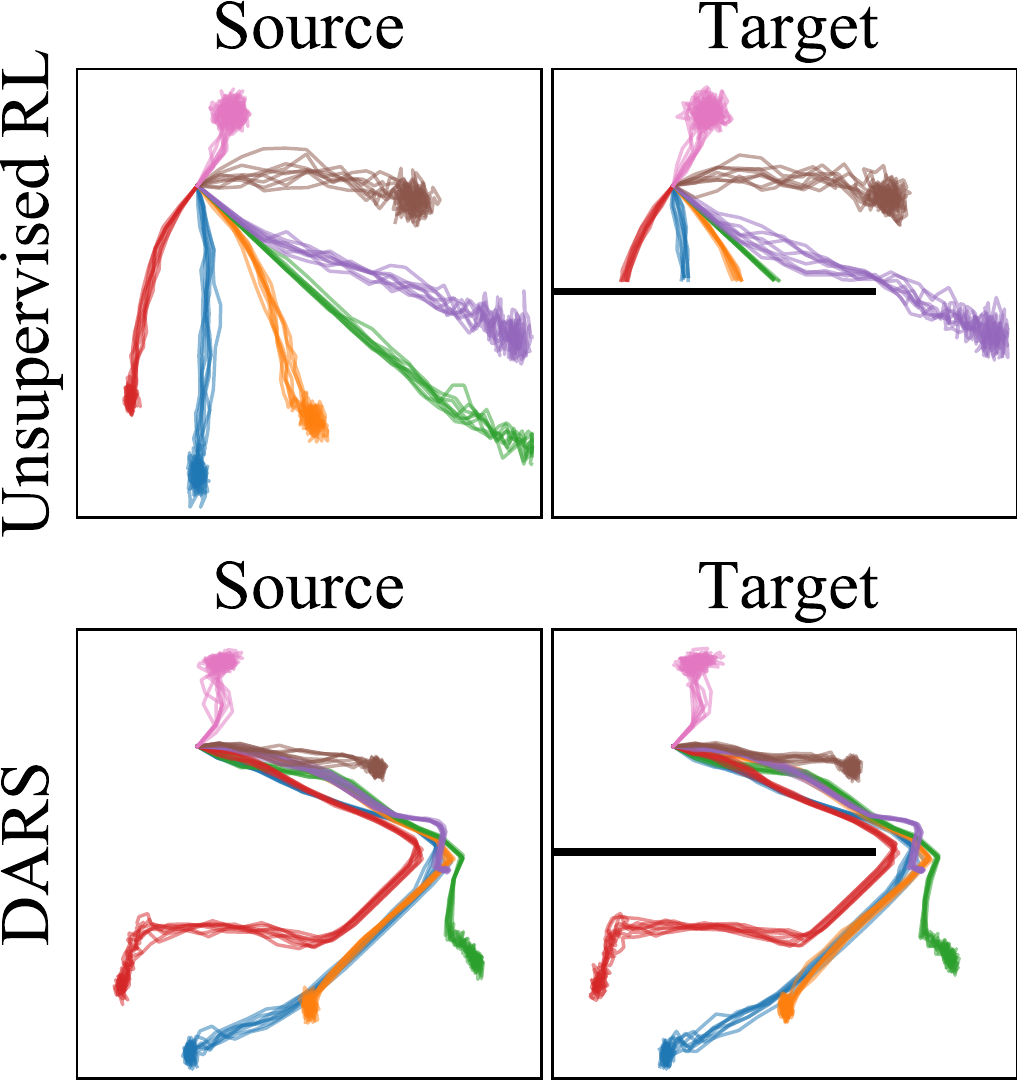} \ \ 
		\includegraphics[scale=0.315]{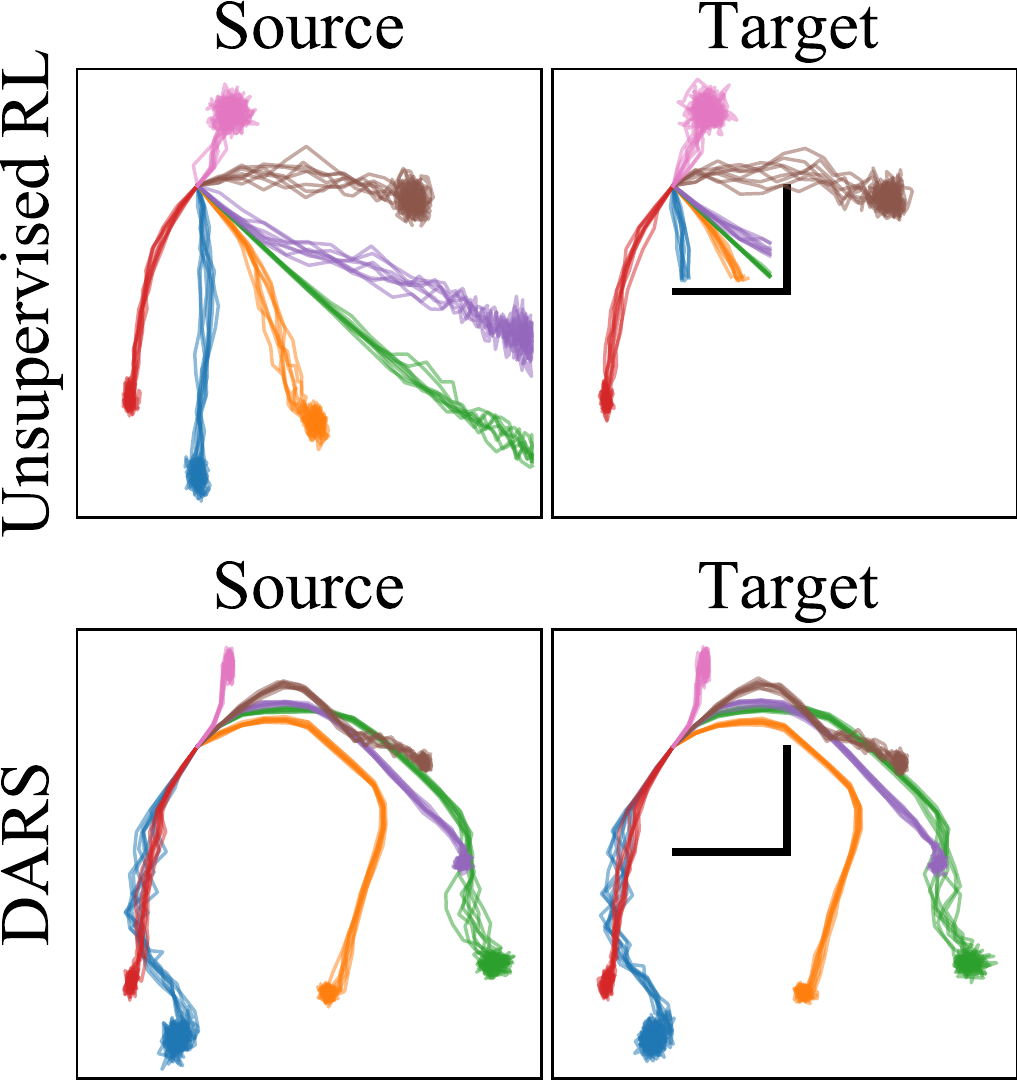}
	\end{center}
	\vspace{-3pt}
	\caption{Skills \emph{learned in the source environment}, each represented by a distinct color, are deployed in the source and target respectively. 
		\emph{Top plots} depict states visited by the standard unsupervised RL method, where skills fail to run in the target environment. \emph{Bottom plots} depict trajectories induced by policy $\pi_\theta$ trained with our DARS, resulting in successful deployment in the target environment. 
	}
	\vspace{-16pt}
	\label{fig-introduction}
\end{wrapfigure}
Furthermore, we introduce a KL regularization to address the challenge of dynamics shifts. This objective allows us to incorporate a reward modification into the goal-achievement reward function in the standard unsupervised RL, aligning the trajectory induced in the target environment against that induced in the source by the same policy. 
% % % The introduction of the KL regularization encourages the trajectory induced by the policy in the source environment to keep an identical embodiment with that induced by the same policy on the target environment. 
Importantly, it enables useful inductive biases towards the target dynamics: it allows the agent to specifically pursue skills that are competent in the target dynamics, and penalizes the agent for exploration in the source where the dynamics significantly differ. As shown in Figure~\ref{fig-introduction}~(bottom), the difference in dynamics (a wall in the target while no wall in the source) will pose a penalty when the agent attempts to go through an area in the source wherein the target stands a wall. Thus, skills learned in source with such modification are adaptive to the~target.

We name our method unsupervised domain adaptation with \underline{d}ynamics-\underline{a}ware \underline{r}eward\underline{s} (DARS), suggesting that source and target dynamics both shape $r_g$: (1) we employ a latent-conditioned probing policy in the source to represent goals \citep{liu2021learn}, making the goal-achievement reward source-oriented, and (2) we adopt two classifiers \citep{eysenbach2020off} to provide reward modification derived from the KL regularization. 
%Intuitively, the modification overcomes the difference in dynamics between the source and target.
This means that the repertoires of skills are well shaped by the dynamics of both the source and target. 
%As show in Figure~\ref{fig-introduction} (bottom), the difference in dynamics (a wall in the target while no wall in the source) will pose a penalty when an agent attempts to go through an area in the source wherein the target environment stands a wall. When a task is provided in the target environment, the agent utilizes the previously learned goal-conditioned policy to immediately perform the task without further performance drop. 
%The key contribution of our work is an unsupervised domain adaptation algorithm with dynamics-aware rewards, DARS, grounded in mutual-information-based skill acquisition and KL regularization. 
Formally, 
we further analyze the conditions under which our DARS produces a near-optimal goal-conditioned policy for the target environment. Empirically, we demonstrate that our objective can obtain dynamics-aware rewards, enabling the goal-conditioned policy learned in a source to perform well in the target environment in various settings (stable and unstable settings, and sim2real).

%Following~\citet{eysenbach2020off}, we extend the domain adaptation in RL to domain adaptation in unsupervised RL. Here we also focus on the domain adaptation of the dynamics, instead of the domain adaptation of observation in RL or unsupervised RL. 

\section{Preliminaries}

\label{section-preliminaries}

\textbf{Multi-goal Reinforcement Learning:} 
We formalize the multi-goal reinforcement learning (RL) as a goal-conditioned Markov Decision Process (MDP) defined by the tuple $\mathcal{M_G} = \{S, A, \mathcal{P}, \mathcal{R}_G, \gamma, \mathcal{\rho}_0\}$, 
where $S$ denotes the state space and $A$ denotes the action space. $\mathcal{P}:S \times A \times S \rightarrow \R_{\geq 0}$ is the transition probability density. 
$\mathcal{R}_G \triangleq \{G, r_{g}, p(g)\}$, where $G$ denotes the space of goals, $r_{g}$ denotes the corresponding goal-achievement reward function $r_{g}:G \times S \times A \times S \rightarrow \R$, and $p(g)$ denotes the given goal distribution. 
$\gamma$ is the discount factor and $\mathcal{\rho}_0$ is the initial state distribution. 
Given a $g \sim p(g)$, the $\gamma$-discounted return $R(g, \tau)$ of a goal-oriented trajectory $\tau = (s_0, a_0, s_1, \dots, s_T)$ is $\Sigma_{t=0}^{T-1}\gamma^t r_g(s_t, a_t, s_{t+1})$. 
Building on the universal value function approximators (UVFA,~\citet{schaul2015universal}), the standard multi-goal RL seeks to learn a unique goal-conditioned policy $\pi_\theta: A \times S \times G \rightarrow \R$ to maximize the objective  $\mathbb{E}_{\mathcal{P}, \mathcal{\rho}_0, \pi_{\theta}, p(g)}[R(g, \tau)]$,  where $\theta$ denotes the parameter of the policy. 

%$skills \triangleq \{p(g), \pi_\theta \}$
\textbf{Unsupervised Reinforcement Learning:}
In unsupervised RL, the agent is set in an open-ended environment without any pre-defined goals or related reward functions. The agent aims to acquire a repertoire of skills. Following~\citet{colas2020intrinsically}, we define skills as the association of goals and the goal-conditioned policy to reach them. The unsupervised skill acquisition problem can now be modeled by a goal-free MDP $\mathcal{M} = \{S, A, \mathcal{P}, \gamma, \mathcal{\rho}_0\}$ that only characterizes the agent, its environment and their possible interactions. As shown in Figure~\ref{fig-introduction-url-udarl} (left), the agent needs to autonomously interact with the environment and (1) \emph{learn goal representations} (eg, discovering the goal distribution~$p(g)$~and~learning the corresponding reward $r_g$), and (2) \emph{learn the goal-conditioned policy} $\pi_\theta$ as in multi-goal~RL. 

Here we define a universal (information theoretic) objective for learning the goal-conditioned policy $\pi_\theta$ in unsupervised RL, maximizing the mutual information $\mathcal{I}_{\mathcal{P}, \mathcal{\rho}_0, \pi_\theta}(g; \tau)$ between the goal $g$ and the trajectory $\tau$ induced by policy $\pi_\theta$ running in the environment $\mathcal{M}$ (with $\mathcal{P}$ and $\mathcal{\rho}_0$), 
%eqnarray
\begin{align}
\max \ \mathcal{I}_{\mathcal{P}, \mathcal{\rho}_0, \pi_\theta}(g; \tau) 
&= \mathcal{H}(g) - \mathcal{H}(g|\tau) %\\
= \mathcal{H}(g) + \mathbb{E}_{\mathcal{P}, \mathcal{\rho}_0, \pi_{\theta}, p(g)}[\log p(g|\tau)]. \label{eq-formulation}
\end{align}
%where $g \sim p(g)$. 
For representing goals, the specific manifold of the goal space could be a set of \emph{latent variables} (eg, one-hot indicators) or \emph{perceptually-specific goals} (eg, the joint torques of ant). 
%Some work assumes a prior over the skill space, while some jointly learns the skill space and the reward function. 
In the absence of any prior knowledge about $p(g)$, the maximum of $\mathcal{H}(g)$ will be achieved by fixing the distribution $p(g)$ to be uniform over all $g \in G$. 
%, which will encourage the discovery of goal-covering skills. 
The second term $\mathbb{E}_{\mathcal{P}, \mathcal{\rho}_0, \pi_{\theta}, p(g)}[\log p(g|\tau)]$ in Equation~\ref{eq-formulation} is analogous to the objective in the standard multi-goal RL, where the return $R(g, \tau)$ can be seen as the embodiment of $\log p(g|\tau)$. The objective specifically for learning $r_g$ in $p(g|\tau)$ is normally optimized by lens of the generative loss~\citep{nair2018visual} or the contrastive loss~\citep{sermanet2018time}.
%VAE or contrastive loss. 
With the learned goal distribution $p(g)$ and reward $r_g$, it is straightforward to learn the goal-conditioned policy $\pi_\theta$ using standard RL algorithms~\citep{schulman2017proximal, haarnoja2018soft}. In general, optimizations iteratively alternate for representing goals (including both goal-distribution $p(g)$ and reward function $r_g$) and learning the goal-conditioned policy $\pi_\theta$, as shown in Figure~\ref{fig-introduction-url-udarl} (left).

\section{Unsupervised Domain Adaptation with Dynamics-Aware Rewards}

\subsection{Problem Formulation}

Our work addresses domain adaptation in unsupervised RL, raising expectations that an agent trained without prior goal representations ($p(g)$ and $r_g$) in one environment can perform purposeful tasks in another. 
Following~\citet{wulfmeier2017mutual}, we also focus on the domain adaptation of the dynamics, as opposed to states. 
In this work, we consider two environments characterized by MDPs $\mathcal{M_S}$ (the source environment) and $\mathcal{M_T}$ (the target environment), the dynamics of which are $\mathcal{P_S}$ and $\mathcal{P_T}$ respectively. 
Both MDPs share the same state and action spaces $S$, $A$, discount factor $\gamma$ and initial state distribution $\rho_0$, while differing in the transition distributions $\mathcal{P_S}$, $\mathcal{P_T}$. 
%That is, $\mathcal{M_S} \setminus \mathcal{P_S} = \mathcal{M_T} \setminus \mathcal{P_T}$. 
Since the agent does not directly receive $\mathcal{R_G}$ from either environment, we adopt the information theoretic $\mathcal{I}_{\mathcal{P}, \mathcal{\rho}_0, \pi_\theta}(g; \tau)$ to acquire skills, equivalently learning a goal-conditioned policy $\pi_\theta$ that achieves distinguishable trajectory by maximizing this objective. 
For brevity, we now omit the $\rho_0$ term discussed in~Section~\ref{section-preliminaries}.

In our setup, agents can freely interact with the source $\mathcal{M_S}$. However, it has limited access to rollouts in the target $\mathcal{M_T}$ with which are insufficient to train a policy. 
To ensure that all potential trajectories in the target $\mathcal{M_T}$ can be attempted in the source environment, we make the following assumption:  
\begin{assumption} \label{assmption-no-transition}
	There is no transition that is possible in the target environment $\mathcal{M_T}$ but impossible in the source environment $\mathcal{M_S}$: 
	$\mathcal{P_T}(s_{t+1}|s_t, a_t) > 0 \implies \mathcal{P_S}(s_{t+1}|s_t, a_t) > 0.$
\end{assumption}

\subsection{Domain Adaptation in Unsupervised RL}

\begin{figure}[t] 
	\vspace{-5pt}
	\centering % [scale=0.38] [width=\columnwidth * 2]
	\includegraphics[scale=0.415]{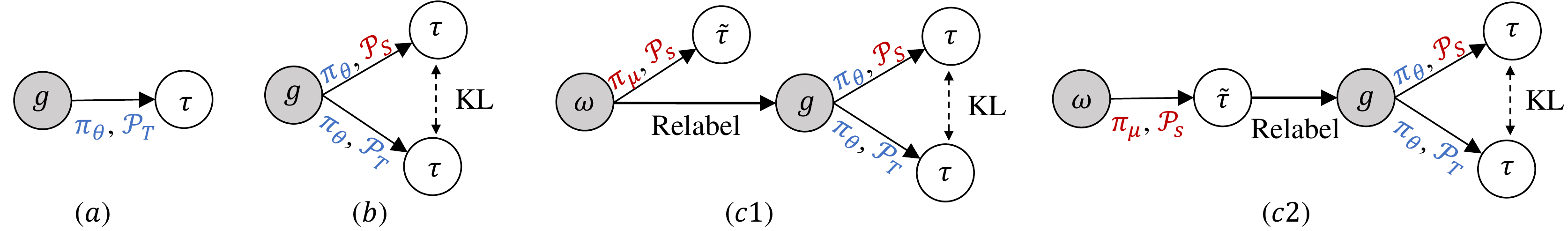}
	\vspace{-15pt}
	\caption{Graphical models of \emph{(a)} the standard unsupervised RL, and DARS with goals \emph{(b)} directly~inputted, \emph{(c1)} relabeled with  latent variable $\omega$, and \emph{(c2)} relabeled with state induced by probing~policy.}
	\vspace{-5pt}
	\label{fig-graphic-model} 
\end{figure}

We aim to acquire skills trained in the source environment $\mathcal{M_S}$, which can be deployed in the target environment $\mathcal{M_T}$. To facilitate the unsupervised learning of skills for the target environment $\mathcal{M_T}$ (with transition dynamics $\mathcal{P_T}$), we maximize the mutual information between the goal $g$ and the trajectory $\tau$ induced by the goal-conditioned policy $\pi_\theta$ over dynamics $\mathcal{P_T}$, as shown in Figure~\ref{fig-graphic-model}~(a): 
\begin{equation}\label{eq-objective-target}
\mathcal{I}_{\mathcal{P_T}, \pi_{\theta}}(g; \tau).  
\end{equation}
%as shown in Figure~\ref{fig-graphic-model} (a). 
However, since interaction with the target environment $\mathcal{M_T}$ is restricted, acquiring the goal-conditioned policy $\pi_\theta$ by optimizing the mutual information above is intractable. We instead maximize the mutual information in the source environment $\mathcal{I}_{\mathcal{P_S}, \pi_{\theta}}(g; \tau)$ modified by a KL divergence of trajectories induced by the goal-conditioned policy $\pi_\theta$ in both environments (Figure~\ref{fig-graphic-model}~b):
\begin{equation}\label{eq-objective-source-kl}
\mathcal{I}_{\mathcal{P_S}, \pi_{\theta}}(g; \tau) - 
\beta D_{\text{KL}}\left(p_{_\mathcal{P_S},\pi_\theta}(g, \tau) \Vert p_{_\mathcal{P_T},\pi_\theta}(g, \tau)\right), 
\end{equation}
where $\beta > 0 $ is the regularization coefficient,  $p_{_\mathcal{P_S},\pi_\theta}(g, \tau)$ and $p_{_\mathcal{P_T},\pi_\theta}(g, \tau)$ denote the joint distributions of the goal $g$ and the trajectory $\tau$ induced by policy $\pi_\theta$ in source $\mathcal{M_S}$ and target $\mathcal{M_T}$~respectively. 
%We will show in Section \ref{section-guarantee} 
%that we can obtain a near-optimal goal-conditioned policy for the target environment by maximizing this KL regularized objective. 

Intuitively, 
maximizing the mutual information term rewards distinguishable pairs of trajectories and goals, while minimizing the KL divergence term penalizes producing a trajectory that cannot be followed in the target environment. In other words, the KL term aligns the probability distributions of the mutual-information-maximizing trajectories under the two environment dynamics $\mathcal{P_S}$ and $\mathcal{P_T}$. This indicates that the dynamics of both environments ($\mathcal{P_S}$ and $\mathcal{P_T}$) shape the goal-conditioned policy $\pi_\theta$ (even though trained in the source $\mathcal{P_S}$), allowing $\pi_\theta$ to adapt to the shifts in dynamics. 

Building on the KL regularized objective in Equation~\ref{eq-objective-source-kl}, we introduce how to effectively represent goals: generating the goal distribution and acquiring the (partial) reward function. Here we assume the difference between environments in their dynamics negligibly affects the goal distribution\footnote{See Appendix~\ref{appendix-extensions-goal-shifts} for the extension when $\mathcal{M_S}$ and $\mathcal{M_T}$ have different goal distributions.}. Therefore, we follow GPIM~\citep{liu2021learn} and train a latent-conditioned probing policy $\pi_{\mu}$. The probing policy $\pi_{\mu}$ explores the source environment and represents goals for the source to train the goal-conditioned policy $\pi_{\theta}$ with. 
Specifically, the probing policy $\pi_{\mu}$ is conditioned on a latent variable $\omega \sim p(\omega)$\footnote{Following DIAYN \citep{eysenbach2018diversity} and DADS \citep{sharma2019dynamics}, we set $p(\omega)$ as a fixed prior. } and aims to generate diverse trajectories that are further relabeled as goals for the goal-conditioned~$\pi_{\theta}$. Such goals can take the form of the latent variable $\omega$ itself (Figure~\ref{fig-graphic-model}~c1) or the final state of a trajectory (Figure~\ref{fig-graphic-model}~c2). We jointly optimize the previous objective in Equation~\ref{eq-objective-source-kl} with the mutual information between $\omega$ and the trajectory $\tilde{\tau}$ induced by $\pi_{\mu}$ in source, and arrive at the following overall objective:  
\begin{align}
\label{eq-objective-3terms}
\max \mathcal{J}(\mu, \theta) &\  \triangleq \mathcal{I}_{\mathcal{P_S}, \pi_{\mu}}(\omega; \tilde{\tau})+ \mathcal{I}_{\mathcal{P_S}, \pi_{\theta}}(g; \tau)   
- \beta D_{\text{KL}}\left(p_{_\mathcal{P_S},\pi_\theta}(g, \tau) \Vert p_{_\mathcal{P_T},\pi_\theta}(g, \tau)\right), 
\end{align}
where the context between $p(g)$ and $p(\omega)$ are specified by the graphic model in Figure~\ref{fig-graphic-model}~(c1~or~c2).
Note that this objective (Equation~\ref{eq-objective-3terms}) explicitly decouples the goal representing (with $\pi_\mu$) and the policy learning (wrt $\pi_\theta$), providing a foundation for the theoretical guarantee in Section~\ref{section-guarantee}.

\subsection{Optimization with Dynamics-Aware Rewards}

Similar to~\citet{goyal2019infobot}, we take advantage of the data processing inequality (DPI~\citep{beaudry2012intuitive}) which implies $\mathcal{I}_{\mathcal{P_S}, \pi_{\theta}}(g; \tau) \geq \mathcal{I}_{\mathcal{P_S}, \pi_{\theta}}(\omega; \tau)$ from the graphical models in Figure~\ref{fig-graphic-model} (c1, c2). 
Consequently, maximizing $\mathcal{I}_{\mathcal{P_S}, \pi_\theta}(g; \tau)$ can be achieved by maximizing the information of $\omega$ encoded progressively to $\pi_\theta$. 
We therefore obtain the lower bound of Equation \ref{eq-objective-3terms}:  
\begin{align}
\mathcal{J}(\mu, \theta) &\geq \mathcal{I}_{\mathcal{P_S}, \pi_{\mu}}(\omega; \tilde{\tau})+ \mathcal{I}_{\mathcal{P_S}, \pi_{\theta}}(\omega; \tau)  
- \beta D_{\text{KL}}\left(p_{_\mathcal{P_S},\pi_\theta}(g, \tau) \Vert p_{_\mathcal{P_T},\pi_\theta}(g, \tau)\right).  \label{eq-max-j-mu-theta}
\end{align}
For the first term $\mathcal{I}_{\mathcal{P_S}, \pi_{\mu}}(\omega; \tilde{\tau})$ and the second term $\mathcal{I}_{\mathcal{P_S}, \pi_{\theta}}(\omega; \tau)$, we derive the state-conditioned Markovian rewards following \citet{jabri2019unsupervised}: 
\begin{align}
\mathcal{I}_{\mathcal{P}, \pi}(\omega; \tau) 
&\geq \frac{1}{T}\sum_{t=0}^{T-1}\left(\mathcal{H}\left( \omega \right) - \mathcal{H}\left(\omega | s_{t+1}\right)\right) %\\ 
= \mathcal{H}\left(\omega\right) + \E _{p_{_\mathcal{P}, \pi}(\omega, s_{t+1})}\left[\log p(\omega | s_{t+1})\right] \\
&\geq \mathcal{H}\left(\omega\right) + \E _{p_{_\mathcal{P}, \pi}(\omega, s_{t+1})} \left[\log q_\phi(\omega | s_{t+1})\right] \label{eq-kl-non-negativity}
, 
\end{align}
where $p_{_\mathcal{P}, \pi}(\omega, s_{t+1}) = p(\omega)p_{_\mathcal{P}, \pi}(s_{t+1}|\omega)$, and  $p_{_\mathcal{P}, \pi}(s_{t+1}|\omega)$ refers to the state distribution (at time step $t+1$) induced by policy $\pi$ conditioned on $\omega$ under the  environment dynamics $\mathcal{P}$; the lower bound in \ Equation~\ref{eq-kl-non-negativity} derives from training a discriminator network $q_\phi$ due to the non-negativity of KL divergence, $\E _{p_\pi(s_{t+1})}\left[ D_{\text{KL}}(p(\omega|s_{t+1})||q_\phi(\omega|s_{t+1})) \right] \geq 0$. 
Intuitively, the new bound rewards the discriminator $q_\phi$ for summarizing agent's behavior with $\omega$ as well as encouraging a variety of~states.

With the bound above, we construct the lower bound of the mutual information terms in Equation~\ref{eq-max-j-mu-theta}, taking the same discriminator $q_\phi$: 
\begin{align}
\mathcal{F_I} 
&\triangleq \mathcal{I}_{\mathcal{P_S}, \pi_{\mu}}(\omega; \tilde{\tau}) + \mathcal{I}_{\mathcal{P_S}, \pi_{\theta}}(\omega; \tau)  
\geq 2\mathcal{H}\left(\omega\right) + \E _{p_{\text{joint}}}\left[ \log q_\phi(\omega | \tilde{s}_{t+1}) + \log q_\phi(\omega | s_{t+1}) \right], \label{eq-j-q_phi}
\end{align} % \tilde{p}(\cdot, \cdot, \cdot)
%TODO: naming (steer, pilot? )
where $p_{\text{joint}}$ denotes the joint distribution of $\omega$, states $\tilde{s}_{t+1}$ and $s_{t+1}$. The states $\tilde{s}_{t+1}$ and $s_{t+1}$ are induced by the probing policy $\pi_\mu$ conditioned on the latent variable $\omega$ and the policy $\pi_\theta$ conditioned on the relabeled goals respectively, both in the source environment (Figure \ref{fig-graphic-model}~c1,~c2).

Now, we are ready to characterize the KL term in Equation~\ref{eq-max-j-mu-theta}. 
Note that only the transition probabilities terms ($\mathcal{P_S}$ and $\mathcal{P_T}$) differ since agent follows the same policy $\pi_\theta$ in the two environments. This conveniently leads to the expansion of the KL divergence term as a sum of differences in log likelihoods of the transition dynamics:  expansion  
%\begin{equation}
$p_{_\mathcal{P},\pi_\theta}(g, \tau) = p(g) \rho_0(s_0) \prod_{t=0}^{T-1} \left[ \mathcal{P}(s_{t+1}|s_t, a_t)\pi_{\theta}(a_t|s_t, g) \right]$, 
%\end{equation}
where $\mathcal{P} \in \{\mathcal{P_S}, \mathcal{P_T}\}$, gives rise to the following simplification of the KL term in Equation~\ref{eq-max-j-mu-theta}: 
\begin{equation}
\beta D_{\text{KL}}\left(p_{_\mathcal{P_S},\pi_\theta}(g, \tau) \Vert p_{_\mathcal{P_T},\pi_\theta}(g, \tau)\right) = \E _{\mathcal{P_S},\pi_\theta}\left[ \beta \Delta r(s_t, a_t, s_{t+1}) \right], \label{eq-kl-delta-r}
\end{equation}
where the reward modification $\Delta r(s_t, a_t, s_{t+1}) \triangleq \log \mathcal{P_S}(s_{t+1}|s_t, a_t) 
- \log \mathcal{P_T}(s_{t+1}|s_t, a_t)$.

\begin{wrapfigure}{r}{0.445\textwidth}
	\vspace{-9pt}
	\begin{center}
		\includegraphics[scale=0.512]{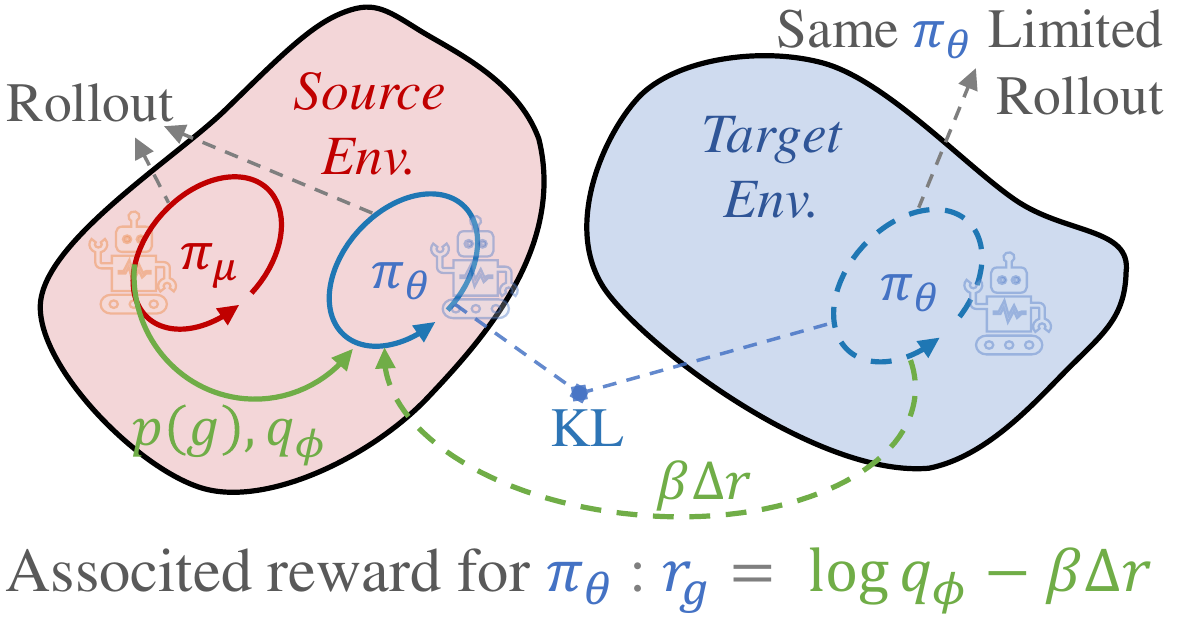}
	\end{center}
	\vspace{-7pt}
	\caption{Framework of DARS: the latent-conditioned probing policy $\pi_\mu$ provides $p(g)$ and $q_\phi$ for learning  goal-conditioned   $\pi_{\theta}$, associated with the reward modification~$\beta \Delta r$. } 
	\vspace{-17pt}
	\label{fig-framework-half} 
\end{wrapfigure}
Combining the lower bound of the mutual information terms (Equation~\ref{eq-j-q_phi}) and the KL divergence term pursuing the aligned trajectories in two environments (Equation~\ref{eq-kl-delta-r}), we optimize $\mathcal{J}(\mu, \theta)$ by maximizing the following lower bound: 
\begin{align}
\label{eq-final-objective}
&\  2\mathcal{H}\left(\omega\right) + \E _{p_{\text{joint}}}\left[ \log q_\phi(\omega | \tilde{s}_{t+1}) + \log q_\phi(\omega | s_{t+1}) \right] \nonumber \\
&\  - \E _{\mathcal{P_S},\pi_\theta}\left[ \beta \Delta r(s_t, a_t, s_{t+1}) \right]. 
\end{align}

Overall, as shown in Figure~\ref{fig-framework-half}, DARS rewards the goal-conditioned policy $\pi_\theta$ with the dynamics-aware rewards (associating $\log q_\phi$ with $\beta \Delta r$), where (1) $\log q_\phi$ is shaped by the source dynamics $\mathcal{P_S}$, and (2) $\beta \Delta r$ is derived from the difference of the two dynamics ($\mathcal{P_S}$ and $\mathcal{P_T}$). This indicates that the learned goal-conditioned policy $\pi_\theta$ is shaped by both source and target environments, holding the promise of acquiring adaptive skills for the target environment by training mostly in the source environment. 

%\vspace{-17pt}

\subsection{Optimality Analysis} 
\label{section-guarantee}

Here we discuss the condition under which our method produces near-optimal skills for the target environment. 
We first mildly require that the most suitable policy for the target environment $\mathcal{M_T}$ does not produce drastically different trajectories in the source environment $\mathcal{M_S}$: 
\begin{assumption}
	\label{pi-star-assumption}
	Let $\pi^* = \argmax_{\pi} \mathcal{I}_{\mathcal{P_T}, \pi}(g; \tau)$ be the policy that maximizes the (non-kl-regularized) objective in the target environment (Equation~\ref{eq-objective-target}). 
	Then the joint distributions of the goal and its trajectories differ in both environments by no more than a small number $\epsilon/\beta > 0$: 
	\begin{equation}
	D_{\text{KL}}\left(p_{_{\mathcal{P_S}}, \pi^*}(g, \tau) || p_{_{\mathcal{P_T}}, \pi^*}(g, \tau)\right) \leq \frac{\epsilon}{\beta}. 
	\end{equation}
\end{assumption}
Given a desired joint distribution $p^*(g, \tau)$ (inferred from a potential goal representation), our problem can be reformulated as finding a closest match \citep{levine2018reinforcement, lee2020efficient}. Consequently, 
we quantify the optimality of a policy $\pi$ by measuring $D_{\text{KL}}\left(p_{_{\mathcal{P}}, \pi}(g, \tau) \Vert p^*_{_{\mathcal{P}}}(g, \tau)\right)$, the discrepancy between its joint distribution and the desired one. 
With a potential goal representation, we prove that its joint distributions with the trajectories induced by our policy and the optimal one satisfy the following theoretical guarantee.  
\begin{theorem}
	\label{guarantee}
	Let $\pi_{\text{DARS}}^*$ be the optimal policy that maximizes the KL regularized objective in the source environment (Equation~\ref{eq-objective-source-kl}), let $\pi^*$ be the policy that maximizes the (non-regularized) objective in the target environment (Equation~\ref{eq-objective-target}), let $p^*_{_{\mathcal{P_T}}}(g, \tau)$ be the desired joint distribution of trajectory and goal in the target (with the potential goal representations), and assume that $\pi^*$ satisfies Assumption~\ref{pi-star-assumption}. Then the following holds:
	\begin{align*}
	& \ D_{\text{KL}}\left(p_{_\mathcal{P_T}, \pi_{\text{DARS}}^*}(g, \tau) \Vert p^*_{_\mathcal{P_T}}(g, \tau)\right)  
	\leq   \ D_{\text{KL}}\left(p_{_\mathcal{P_T}, \pi^*}(g, \tau) \Vert p^*_{_\mathcal{P_T}}(g, \tau)\right) + 2 \sqrt{\frac{2\epsilon}{\beta}}L_{max}, 
	\end{align*}
	where $L_{max}$ refers to the worst case absolute difference between log likelihoods of the desired joint distribution and that induced by a policy. 
\end{theorem}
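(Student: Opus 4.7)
The plan is to stitch together three tools: the control-as-inference rewriting that turns each MI maximization into a KL minimization against the desired joint $p^*_{_{\mathcal{P}}}(g,\tau)$ \citep{levine2018reinforcement, lee2020efficient}; the optimality of $\pi_{\text{DARS}}^*$ in the KL-regularized source objective, which upper-bounds its value by the value attained by the competitor $\pi^*$ (at which Assumption~\ref{pi-star-assumption} bites); and Pinsker's inequality combined with the uniform log-likelihood bound $L_{max}$ to convert a small cross-dynamics KL into a small gap between the source- and target-side KLs against $p^*$. Writing $p^* := p^*_{_{\mathcal{P_T}}}$, the reformulation makes $\pi^*$ the minimizer of $D_{\text{KL}}\bigl(p_{_{\mathcal{P_T}},\pi} \Vert p^*\bigr)$ and $\pi_{\text{DARS}}^*$ the minimizer of $D_{\text{KL}}\bigl(p_{_{\mathcal{P_S}},\pi} \Vert p^*\bigr) + \beta D_{\text{KL}}\bigl(p_{_{\mathcal{P_S}},\pi} \Vert p_{_{\mathcal{P_T}},\pi}\bigr)$, turning the theorem into a transfer-learning statement about a KL-regularized $M$-estimator.

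I would then telescope the target-side gap through the source as
\begin{align*}
& D_{\text{KL}}\bigl(p_{_{\mathcal{P_T}},\pi_{\text{DARS}}^*} \Vert p^*\bigr) - D_{\text{KL}}\bigl(p_{_{\mathcal{P_T}},\pi^*} \Vert p^*\bigr) \\
&\quad = \underbrace{\bigl[D_{\text{KL}}\bigl(p_{_{\mathcal{P_T}},\pi_{\text{DARS}}^*} \Vert p^*\bigr) - D_{\text{KL}}\bigl(p_{_{\mathcal{P_S}},\pi_{\text{DARS}}^*} \Vert p^*\bigr)\bigr]}_{\Delta_1} \\
&\qquad{}+ \underbrace{\bigl[D_{\text{KL}}\bigl(p_{_{\mathcal{P_S}},\pi_{\text{DARS}}^*} \Vert p^*\bigr) - D_{\text{KL}}\bigl(p_{_{\mathcal{P_S}},\pi^*} \Vert p^*\bigr)\bigr]}_{\Delta_2} \\
&\qquad{}+ \underbrace{\bigl[D_{\text{KL}}\bigl(p_{_{\mathcal{P_S}},\pi^*} \Vert p^*\bigr) - D_{\text{KL}}\bigl(p_{_{\mathcal{P_T}},\pi^*} \Vert p^*\bigr)\bigr]}_{\Delta_3}.
\end{align*}
For $\Delta_1$ and $\Delta_3$, I would use the algebraic identity $D_{\text{KL}}(p \Vert r) - D_{\text{KL}}(q \Vert r) = \mathbb{E}_{p-q}[\log(p/r)] - D_{\text{KL}}(q \Vert p)$, the transport bound $|\mathbb{E}_{p-q}[f]| \le 2 L_{max}\, \Vert p - q \Vert_{\text{TV}}$ under $\Vert f \Vert_\infty \le L_{max}$, and Pinsker's $\Vert p - q \Vert_{\text{TV}} \le \sqrt{D_{\text{KL}}(q \Vert p)/2}$, giving bounds scaling like $L_{max}\sqrt{2\,D_{\text{KL}}(p_{_{\mathcal{P_S}},\pi} \Vert p_{_{\mathcal{P_T}},\pi})}$. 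Assumption~\ref{pi-star-assumption} immediately handles $\Delta_3$ with an overhead of $L_{max}\sqrt{2\epsilon/\beta}$. For $\Delta_2$, the DARS optimality inequality yields $\Delta_2 \le \beta\,D_{\text{KL}}(p_{_{\mathcal{P_S}},\pi^*} \Vert p_{_{\mathcal{P_T}},\pi^*}) - \beta\,D_{\text{KL}}(p_{_{\mathcal{P_S}},\pi_{\text{DARS}}^*} \Vert p_{_{\mathcal{P_T}},\pi_{\text{DARS}}^*}) \le \epsilon - \beta\,D_{\text{KL}}(p_{_{\mathcal{P_S}},\pi_{\text{DARS}}^*} \Vert p_{_{\mathcal{P_T}},\pi_{\text{DARS}}^*})$; adding this to the Pinsker bound on $\Delta_1$ leaves the unknown source/target KL at $\pi_{\text{DARS}}^*$ as a free scalar whose worst-case envelope produces the remaining $L_{max}\sqrt{2\epsilon/\beta}$ contribution, yielding the claimed $2L_{max}\sqrt{2\epsilon/\beta}$ slack.

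The hard part will be $\Delta_1$, because Assumption~\ref{pi-star-assumption} bounds the cross-dynamics KL only at $\pi^*$ and gives no a priori control at $\pi_{\text{DARS}}^*$. The resolution is to let the $-\beta\,D_{\text{KL}}(p_{_{\mathcal{P_S}},\pi_{\text{DARS}}^*} \Vert p_{_{\mathcal{P_T}},\pi_{\text{DARS}}^*})$ term that DARS optimality contributes to $\Delta_2$ cancel with the $+L_{max}\sqrt{2\cdot}$ term that Pinsker contributes to $\Delta_1$, so that the coefficient $\beta$ is precisely what converts the crude $\epsilon/\beta$ dependence into the sharper $\sqrt{\epsilon/\beta}$ scaling stated in the theorem. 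Once that cancellation is laid out cleanly and Assumption~\ref{pi-star-assumption} is invoked in the two expected places, collecting constants closes the proof.
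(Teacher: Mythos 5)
Your skeleton matches the paper's: the same telescoping through the source environment, the same H\"older-plus-Pinsker device with the uniform bound $L_{max}$ for the two cross-dynamics legs, and the same use of the optimality of $\pi^*_{\text{DARS}}$ against the competitor $\pi^*$. The divergence, and the gap, is in how you control the cross-dynamics KL at $\pi^*_{\text{DARS}}$. The paper does \emph{not} work with the Lagrangian form directly: its Lemma~1 first reformulates the KL-regularized objective (via KKT) as a \emph{constrained} problem, $\min_{\pi\in\Pi} D_{\text{KL}}(p_{_{\mathcal{P_S}},\pi}\Vert p^*_{_{\mathcal{P_S}}})$ with $\Pi=\{\pi:\beta D_{\text{KL}}(p_{_{\mathcal{P_S}},\pi}\Vert p_{_{\mathcal{P_T}},\pi})\le\epsilon\}$. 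This places $\pi^*_{\text{DARS}}$ inside $\Pi$, so the cross-dynamics KL at $\pi^*_{\text{DARS}}$ is bounded by $\epsilon/\beta$ \emph{by construction}, your $\Delta_1$ is at most $\sqrt{2\epsilon/\beta}\,L_{max}$ immediately, and your $\Delta_2$ is at most $0$ (since $\pi^*\in\Pi$ by Assumption~\ref{pi-star-assumption} and $\pi^*_{\text{DARS}}$ is the minimizer over $\Pi$). Adding the $\Delta_3$ leg gives exactly $2\sqrt{2\epsilon/\beta}\,L_{max}$.

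Your alternative --- keeping the Lagrangian, writing $K\triangleq D_{\text{KL}}(p_{_{\mathcal{P_S}},\pi^*_{\text{DARS}}}\Vert p_{_{\mathcal{P_T}},\pi^*_{\text{DARS}}})$ as a free scalar, and taking a worst-case envelope of $\Delta_1+\Delta_2\le L_{max}\sqrt{2K}+\epsilon-\beta K$ --- does not recover the stated constant. The supremum over $K\ge 0$ is attained at $K=L_{max}^2/(2\beta^2)$ and equals $\epsilon+L_{max}^2/(2\beta)$, which by AM--GM satisfies $\epsilon+L_{max}^2/(2\beta)\ \geq\ L_{max}\sqrt{2\epsilon/\beta}$ with equality only in the degenerate case $L_{max}^2=2\beta\epsilon$. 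So the cancellation you are counting on in the last paragraph does not happen: your route proves a bound with slack $\epsilon+L_{max}^2/(2\beta)+L_{max}\sqrt{2\epsilon/\beta}$, which is always at least the theorem's $2L_{max}\sqrt{2\epsilon/\beta}$ and strictly larger in general. To close the proof as stated you need the constrained reformulation (or some other argument that the optimizer of the regularized objective itself satisfies $K\le\epsilon/\beta$). Two smaller points: (i) the paper keeps $p^*_{_{\mathcal{P_S}}}$ and $p^*_{_{\mathcal{P_T}}}$ distinct and exploits that the ratio $p_{_{\mathcal{P}},\pi}(g,\tau)/p^*_{_{\mathcal{P}}}(g,\tau)$ is independent of $\mathcal{P}$ (the transition factors cancel), which makes the difference of the two KLs \emph{exactly} $\E_{p_{_{\mathcal{P_S}},\pi}-p_{_{\mathcal{P_T}},\pi}}[\log(p_{_{\mathcal{P_S}},\pi}/p^*_{_{\mathcal{P_S}}})]$ with no leftover $-D_{\text{KL}}$ term; your generic identity conflates the two desired distributions into a single $p^*$ and carries an extra term you would then have to sign-check in both directions. (ii) Your control-as-inference reformulation of the MI objectives as KL minimizations is exactly the paper's Lemma~1 premise, so that part is fine.
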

%\vspace{-2pt}
Please see Appendix~\ref{appendix-proof} for more details and the proof of the theorem. 
Note that Theorem~\ref{guarantee} requires a potential goal representation, which can be precisely provided by the probing policy $\pi_\mu$ in Equation~\ref{eq-objective-3terms}.  

%Also note that $D_{\text{KL}}(p_{\pi}(g, \tau) \Vert p^*(g, \tau))$ measures the discrepancy between the probability density of trajectories induced by the policy and the optimal one (\cite{levine2018reinforcement, lee2020efficient}) and the problem can be reformulated as finding a closest match of the desired distribution. }

\subsection{Implementation}
\label{implementation}
As shown in Algorithm~\ref{pseudocode}, 
we alternately train the probing policy $\pi_\mu$ and the goal-conditioned policy $\pi_\theta$ by optimizing the objective in Equation \ref{eq-final-objective} with respect to $\mu$, $\phi$, $\theta$ and $\Delta r$. In the first phase, we update $\pi_\mu$ with reward $\tilde{r} = \log q_\phi(\omega \vert \tilde{s}_{t+1})$. This is compatible with most RL methods and we refer to SAC here. We additionally optimize discriminator $q_\phi$ with SGD to maximizing $\E _{\omega, \tilde{s}_{t+1}} \left[ q_\phi(\omega | \tilde{s}_{t+1}) \right]$ at the same time. Similarly, $\pi_\theta$ is updated with $r_g = \log q_\phi(\omega \vert s_{t+1}) - \beta \Delta r$ by SAC in the second phase, where $\pi_\theta$ also collects (limited) data in the target environment to approximate $\Delta r$ by training two classifiers $q_\psi$ (wrt state-action $q_{\psi}^{{sa}}$ and state-action-state $q_{\psi}^{{sas}}$) as in \citep{eysenbach2020off} according to Bayes' rule:
%The alternating optimization procedure is repeated until convergence. 
\begin{align}
\max &\ \ \E_{\mathcal{B_S}}\left[ \log q_{\psi}^{sas} (\text{source}|{s}_t, a_t, {s}_{t+1}) \right] + \E_{\mathcal{B_T}}\left[ \log q_{\psi}^{sas} (\text{target}|{s}_t, a_t, {s}_{t+1}) \right], \label{eq-bayes-classifier-sas} \\
\max &\ \ \E_{\mathcal{B_S}}\left[ \log q_{\psi}^{sa} (\text{source}|{s}_t, a_t) \right] + \E_{\mathcal{B_T}}\left[ \log q_{\psi}^{sa} (\text{target}|{s}_t, a_t) \right]. \label{eq-bayes-classifier-sa}
\end{align}
Then, we have $\Delta r (s_t, a_t, s_{t+1}) = \log \frac{q_{\psi}^{sas} (\text{source}|{s}_t, a_t, {s}_{t+1})}{q_{\psi}^{sas} (\text{target}|{s}_t, a_t, {s}_{t+1})} - \log \frac{q_{\psi}^{sa} (\text{source}|{s}_t, a_t)}{q_{\psi}^{sa} (\text{target}|{s}_t, a_t)}$. 

\begin{algorithm}[t]
	\caption{DARS}
	\small 
	\label{pseudocode}
	\begin{multicols}{2}
		\begin{algorithmic} [1]
			\STATE \textbf{Input:} source and target MDPs $\mathcal{M_S}$ and $\mathcal{M_T}$; \\
			ratio $R$ of experience from source vs. target.
			\STATE \textbf{Output:} goal-reaching policy $\pi_\theta$. 
			\STATE Initialize parameters $\mu$, $\theta$, $\phi$ and $\psi$. 
			\STATE Initialize buffers $\mathcal{\tilde{B}_S}$, $\mathcal{B_S}$ and $\mathcal{B_T}$.
			\FOR{$iter = 0, \dots, \text{MAX\_ITER}$}
			\STATE Sample latent variable: 
			$\omega \sim p(\omega)$. 
			\STATE Collect probing data in source: \\
			$\mathcal{\tilde{B}_S} \leftarrow \mathcal{\tilde{B}_S} \cup \text{ROLLOUT}(\pi_\mu, \mathcal{M_S}, \omega)$. 
			\STATE Update discriminator $q_\phi$: $\phi \leftarrow \text{Update} (\phi, \mathcal{\tilde{B}_S})$ 
			\STATE Set reward function for the probing policy $\pi_\mu$: 
			$\tilde{r} = \log q_\phi (\omega \vert \tilde{s}_{t+1})$. 
			\STATE Train probing policy $\pi_\mu$: 
			$\mu \leftarrow \text{SAC}(\mu, \mathcal{\tilde{B}_S}, \tilde{r})$.
			\STATE Relabel goals: \textcolor{gray}{\# According to Figure~\ref{fig-graphic-model} (c1, c2)} \\
			$g \leftarrow \text{Relabel}(\omega, \tilde{\tau})$. 
			\STATE  Collect source data: \\
			$\mathcal{B_S} \leftarrow \mathcal{B_S} \cup \text{ROLLOUT}(\pi_\theta, \mathcal{M_S}, g, \omega)$. 
			\IF {$iter$ mod $R = 0$}
			\STATE Collect target data: \\
			$\mathcal{B_T} \leftarrow \mathcal{B_T} \cup \text{ROLLOUT}(\pi_\theta, \mathcal{M_T}, g)$. 
			\ENDIF
			\STATE Update classifiers $q_\psi$ for computing $\Delta r$: \\
			$\psi \leftarrow \text{Update}(\psi, \mathcal{B_S}, \mathcal{B_T})$. \textcolor{gray}{(Equations~\ref{eq-bayes-classifier-sas}, \ref{eq-bayes-classifier-sa})}
%			\textcolor{gray}{\# See \citep{eysenbach2020off}}
			%		$\psi \leftarrow \text{Update}(\psi, \mathcal{B_S}, \mathcal{B_T})$. 
			%		\STATE Compute $\Delta r$ according to Equation \ref{eq-kl-delta-r}: 
			%		adopting classifiers $q_\psi$ as in . 
			\STATE Set reward function for $\pi_\theta$:\\ $r_g \leftarrow \log q_\phi (\omega \vert {s}_{t+1}) - \beta \Delta r(s_t, a_t, s_{t+1})$. 
			\STATE Train policy $\pi_\theta$: 
			$\theta \leftarrow \text{SAC}(\theta, \mathcal{B_S}, r_g)$. 
			\ENDFOR
		\end{algorithmic}
	\end{multicols}
\end{algorithm}

\subsection{Connections to Prior Work}
\textbf{Unsupervised RL:} 
Two representative unsupervised RL approaches acquire (diverse) skills by maximizing empowerment \citep{eysenbach2018diversity, sharma2019dynamics} or minimizing surprise \citep{berseth2019smirl}. 
\citet{liu2021learn} also employs a latent-conditioned policy to explore the environment and relabels goals along with the corresponding~reward, 
which can be considered as a special case of DARS with identical source and target environments. 
However, none of these methods can produce skills tailored to new environments with dynamics~shifts.

\textbf{Off-Dynamics RL:} 
\citet{eysenbach2020off} proposes domain adaptation with rewards from classifiers (DARC), adopting the control as inference framework~\cite{levine2018reinforcement} to maximize~$-D_{\text{KL}}( p_{_\mathcal{P_S}, \pi_\theta}(\tau) \Vert p_{_\mathcal{P_T}}^*(\tau) )$, but this objective cannot be directly applied to the unsupervised setting. 
While we adopt the same classifier to provide the reward modification, one major distinction of our work is that we do not require a given goal distribution $p(g)$ or a prior reward function $r_g$. 
%Assuming an extrinsic goal-reaching reward in the source environment (ie, the potential $p_{_\mathcal{P_S}}^*(\tau)$), the special case of our proposed DARS also reveals a connection to DARC, where Equation~\ref{eq-objective-source-kl} can be simplified to a \emph{decoupled objective}: maximizing $ -D_{\text{KL}}( p_{_\mathcal{P_S}, \pi_\theta}(\tau) \Vert p_{_\mathcal{P_S}}^*(\tau) ) - \beta  D_{\text{KL}}( p_{_\mathcal{P_S}, \pi_\theta}(\tau) \Vert p_{_\mathcal{P_T}, \pi_\theta}(\tau) )$. 
Moreover, 
assuming an extrinsic goal-reaching reward in the source environment (ie, the potential $p_{_\mathcal{P_S}}^*(\tau)$), our proposed DARS can be simplified to a \emph{ decoupled objective}: maximizing $ -D_{\text{KL}}( p_{_\mathcal{P_S}, \pi_\theta}(\tau) \Vert p_{_\mathcal{P_S}}^*(\tau) ) - \beta  D_{\text{KL}}( p_{_\mathcal{P_S}, \pi_\theta}(\tau) \Vert p_{_\mathcal{P_T}, \pi_\theta}(\tau) )$. 
Particularly, DARC can be considered as a special case of our decoupled objective with the restriction --- a prior goal specified by its corresponding reward and $\beta = 1$.
In Appendix~\ref{appendix-additional-experiment}, we show that the stronger pressure ($\beta>1$) for the KL term to align the trajectories puts extra reward signals for the policy $\pi_\theta$ to be $\Delta r$ oriented while still being sufficient to acquire skills. 

\section{Related Work}

The proposed DARS has interesting connections with unsupervised learning~\cite{eysenbach2018diversity, sharma2019dynamics} and transfer learning~\cite{zhao2020transfer} in model-free RL. Adopting the self-supervised objective~\cite{kingma2013auto,  schroff2015facenet, anand2019unsupervised, oord2018representation}, most approaches in this field consider learning features~\cite{guo2020bootstrap, Schwarzer2021DATAEFFICIENTRL} of high-dimensional (eg, image-based) states in the environment, then (1) adopt the non-parametric measurement function to acquire rewards~\cite{higgins2017darla, nair2018visual, sermanet2018time, wardefarley2018unsupervised, singh2019end, nair2020contextual} or (2) enable policy transfer~\cite{higgins2017darla, goyal2019infobot, Goyal2020Reinforcement, galashov2019information, DBLP:conf/icml/HasencleverPHHM20} over the learned features. 
These approaches can be seen as a procedure on the perception level~\cite{hafner2020action}, while we focus on the action level~\cite{hafner2020action} wrt the transition dynamics of the environment, and we consider \emph{both} cases (learning the goal-achievement reward function and enabling policy transfer between different~environments). 

Previous works on the action level~\cite{hafner2020action} have either (1) focused on learning dynamics-oriented rewards in the unsupervised RL setting~\cite{hartikainen2019dynamical,  venkattaramanujam2019self, tian2020model, liu2021learn} or (2) considered the transition-oriented modification in the supervised RL setting (given prior tasks described with reward functions or expert trajectories)~\cite{eysenbach2020off, wulfmeier2017mutual, Kim2020DomainAI, Gangwani2020StateonlyIW, desai2020imitation, Viano2020RobustIR, liu2019state}. Thus, the desirability of our approach is that the acquired reward function uncovers \emph{both} the source dynamics ($q_\phi$) and the dynamics difference ($\beta \Delta r$) across source and target environment.  
Complementary to our work, several other works also encourage the emergence of a state-covering goal distribution~\cite{pong2020skewfit, campos2020explore, Kova2020GRIMGEPLP} or enable transfer by introducing the regularization over policies~\citep{strouse2019learning, ghosh2018divideandconquer, teh2017distral, tian2020independent, petangoda2019disentangled, tian2021unsupervised} instead of the adaptation over different dynamics.

\section{Experiments}
In this section, we aim to experimentally answer the following questions: 
(1) Can our method DARS learn diverse skills, in the source environment, that can be executed in the target environment and keep the same embodiment in the two environments? Specifically, can our proposed associated dynamics-aware rewards ($\log q_\phi - \beta \Delta r$) reveal the perceptible dynamics of the two environments? 
(2) Does DARS lead to better transferring in the presence of dynamics mismatch, compared to other related approaches, in both stable and unstable environments? 
(3) Can DARS contribute to acquiring behavioral skills under the sim2real circumstances, where the interaction in the real world is limited? 
% We also illustrate the several application of DARS, showing that it can accelerate task learning, facilitate exploration, and implement a form of sim2real. 

We adopt tuples \emph{(source, target)} to denote the source and target environment pairs, with details of the corresponding MDPs in Appendix~\ref{appendix-environments-details}. Illustrations of the environments are shown in Figure \ref{fig-envs}. 
For all tuples, we set $\beta=10$ and the ratio of experience from the source environment vs. the target environment $R=10$ (Line 13 in Algorithm~\ref{pseudocode}).
See Appendix~\ref{appendix-hyper-parameters} for the other hyperparameters. 

\begin{figure}[t]
	\centering
	\includegraphics[scale=0.315]{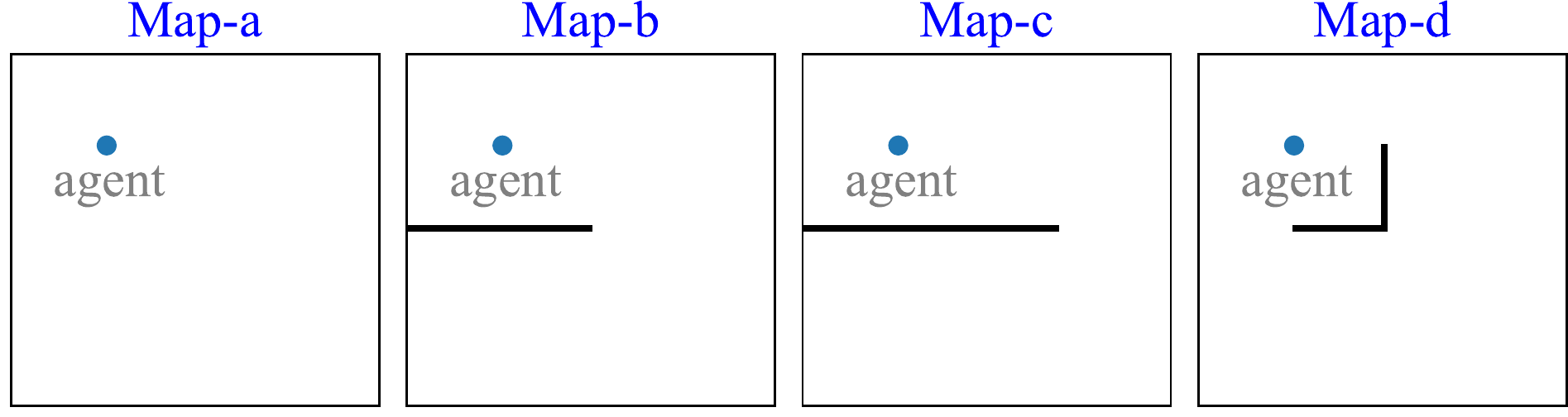} \ 
	\includegraphics[scale=0.3175]{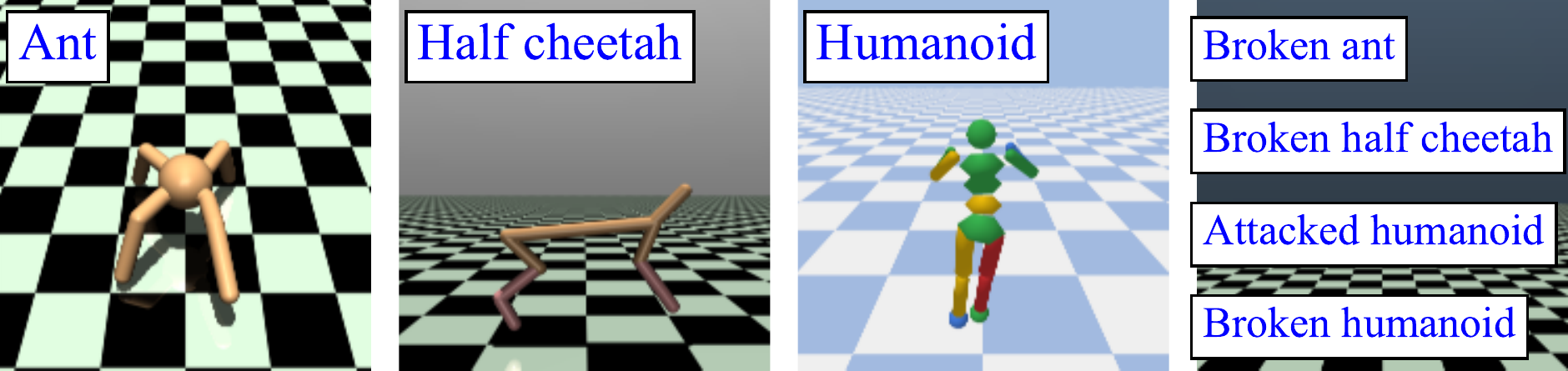} \ 
	\includegraphics[scale=0.36710]{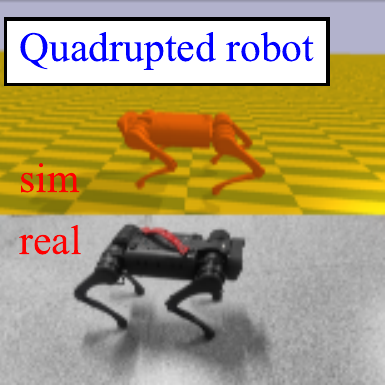}
	\caption{We evaluate our method in 10 \emph{(source, target)} transition tasks, where the shifts in dynamics are either external (the map pairs and the attacked series) or internal (the broken series) to the robot.} 
	\label{fig-envs}
\end{figure}

\emph{\textbf{Map.}}  
We consider the maze environments: \emph{Map-a}, \emph{Map-b}, \emph{Map-c} and \emph{Map-d}, where the wall can block the agent (a point), which can move around to explore the maze environment. For the domain adaptation tasks, we consider the following five \emph{(source, target)} pairs: \emph{(Map-a, Map-b)}, \emph{(Map-a, Map-c)}, \emph{(Map-a, Map-d)}, \emph{(Map-b, Map-c)} and \emph{(Map-b, Map-d)}. 

\emph{\textbf{Mujoco.}} We use two simulated robots from OpenAI Gym \citep{brockman2016openai}: half cheetah (\emph{HC}) and ant. We define two new environments by crippling one of the joints of each robot (\emph{B-HC} and \emph{B-ant}) as described in~\citep{eysenbach2020off}, where \emph{B-} is short for broken. The \emph{(source, target)} pairs include: \emph{(HC, B-HC)} and \emph{(ant, B-ant)}. 

\emph{\textbf{Humanoid.}} In this environment, a (source) simulated humanoid (\emph{H}) agent must avoid falling in the face of the gravity disturbances. 
Two target environments each contain a humanoid \emph{attacked} by blocks from a fixed direction (\emph{A-H}) and a humaniod with a part of \emph{broken} joints (\emph{B-H}). 
%Two target environments \textcolor{blue}{contain attached humanoids (abbr. as \emph{A.H.}), denoting the humanoid will attached by blocks from a fixed direction, and a broken humanoid (abbr. as \emph{B.H.}), denoting the part of joints are broken. The \emph{(source, target)} pairs include: \emph{(H., A.H.)} and \emph{(H., B.H.)}}. \textcolor{red}{each contain a humanoid \emph{attached} by blocks from a fixed direction (\emph{A.H.}) and a humaniod with \emph{broken} parts of joints (\emph{B.H.}). }

\emph{\textbf{Quadruped robot.}} We also consider the sim2real setting for transferring the simulated quadruped robot to a real quadruped robot. For more evident comparison, we break the~left~hind~leg~of~the~real-world robot (see Appendix~\ref{appendix-environments-details}).  We adopt \emph{(sim-robot, real-robot)} to denote this sim2real transition. 

% \emph{\textbf{Sim2real.}}

\subsection{Emergent Behaviors with DARS}
\label{exp-4-1}

\begin{figure}[h]
	\centering
	\subfigure[\emph{(Map-a, Map-b)}]{
		\includegraphics[scale=0.315]{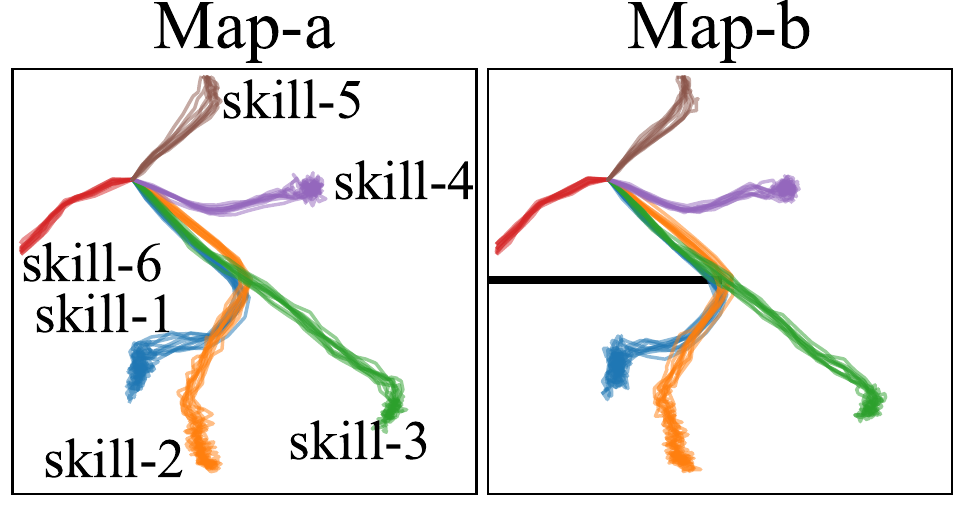} 
	}
	\subfigure[\emph{(Map-b, Map-c)}]{
		\includegraphics[scale=0.315]{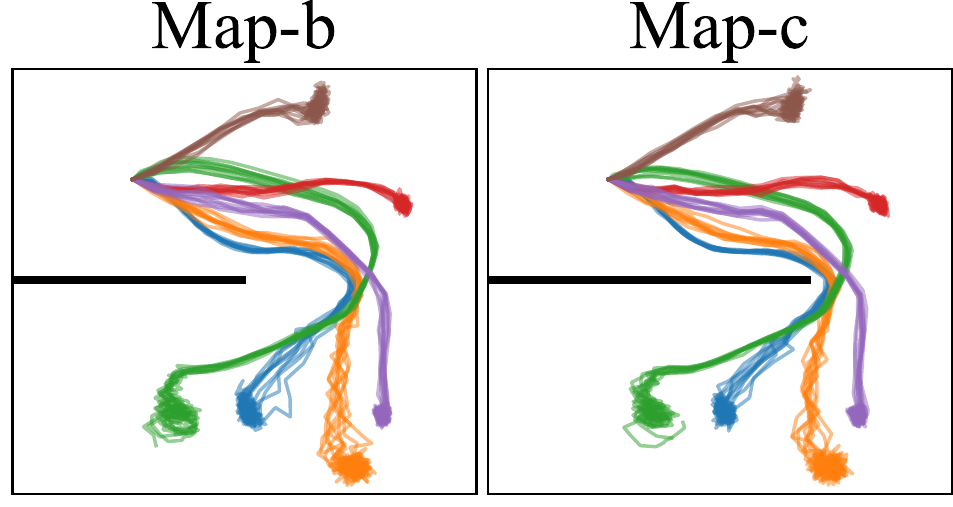}
	}
	\subfigure[\emph{(HC, B-HC)}]{
		\includegraphics[scale=0.30]{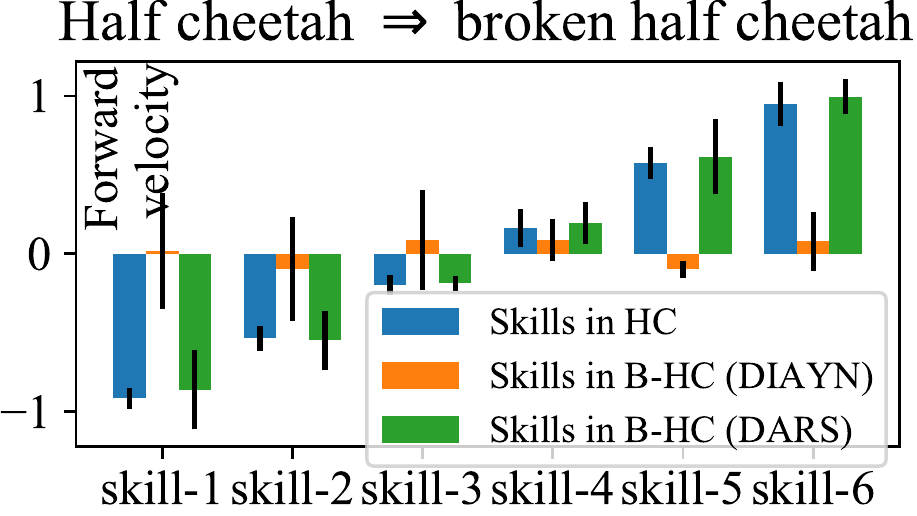}
	}
	\subfigure[\emph{(ant, B-ant)}]{
		\includegraphics[scale=0.257]{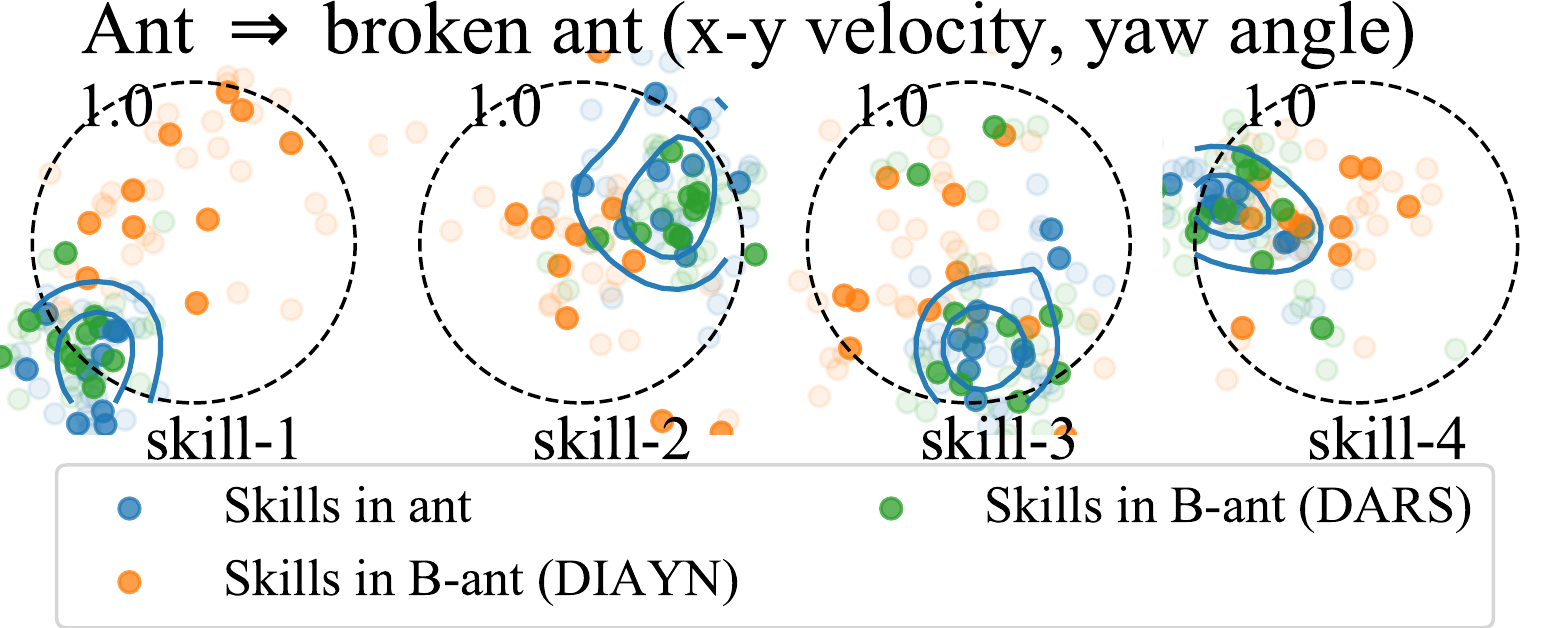} 
	}
	\caption{Visualization of skills. 
		\emph{(a, b)}: colored trajectories in \emph{map} pairs depict the skills, learned with DARS, deployed in source (left) and target (right). 
		\emph{(c, d)}: colored bars and dots depict the~velocity of each skill wrt different environments of \emph{mujoco} and models. 
		The variation (blue) across velocities for \emph{HC} and \emph{ant} corroborates the diversity of skills. DARS demonstrates its better adaptability by performing similar skills on broken agents (green) to the original ones (blue) while~DIAYN~(orange)~fails.}  
	%	Visualization of the skills learned in source, which (deployed in source and target environments) keep the same behaviors. In comparison, we employ DIAYN as a representation of the standard unsupervised RL methods.
	\label{fig-skills-new-less}
\end{figure}
\textbf{Visualization of the learned skills.}
We first apply DARS to the \emph{map} pairs and the \emph{mujoco} pairs, where we learn the goal-conditioned policy $\pi_\theta$ in the source environments with our dynamics-aware rewards ($\log q_\phi - \beta \Delta r$). 
Here, we relabel the latent random variable $\omega$ as the goal $g$ for the goal-conditioned policy $\pi_\theta$: $g \triangleq \text{Relabel}(\pi_\mu, \omega, \tilde{\tau}) = \omega$ (Figure \ref{fig-graphic-model} c1). 
The learned skills are shown in Figures \ref{fig-introduction}, \ref{fig-skills-new-less} and Appendix~\ref{appendix-additional-experiment}. We can see that the skills learned by our method keep the same embodiment when they are deployed in the source and target environments. If we directly apply the skills learned in the source environment (without $\beta \Delta r$), the dynamics mismatch is likely to disrupt the skills (see Figure~\ref{fig-introduction}~\emph{top}, and the deployment of DIAYN in \emph{half cheetah} and \emph{ant} pairs in Figure \ref{fig-skills-new-less}).  

\begin{figure}[h]
	\centering
	\subfigure[Heatmaps of $\log q_\phi$.]{
		\includegraphics[scale=0.415]{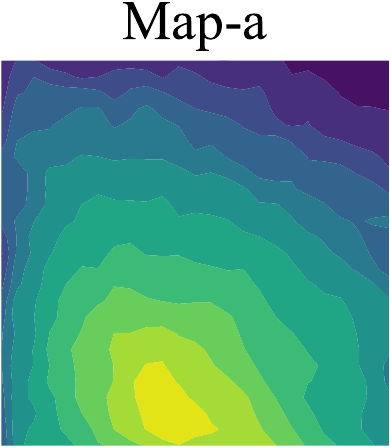}
		\includegraphics[scale=0.415]{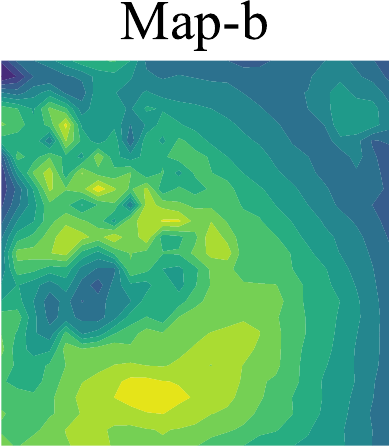}
	}\ \ \ \ \ 
	\subfigure[Three trajectories and and the associated rewards.]{
		\includegraphics[scale=0.4137]{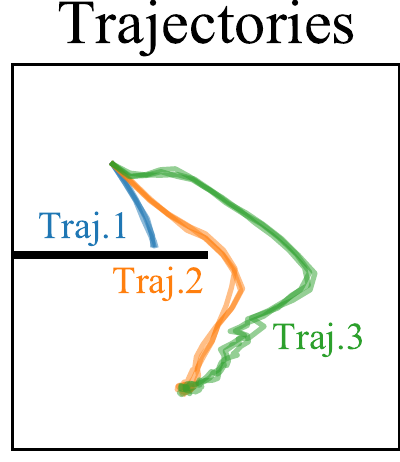} \ \ \ \ 
		\includegraphics[scale=0.304]{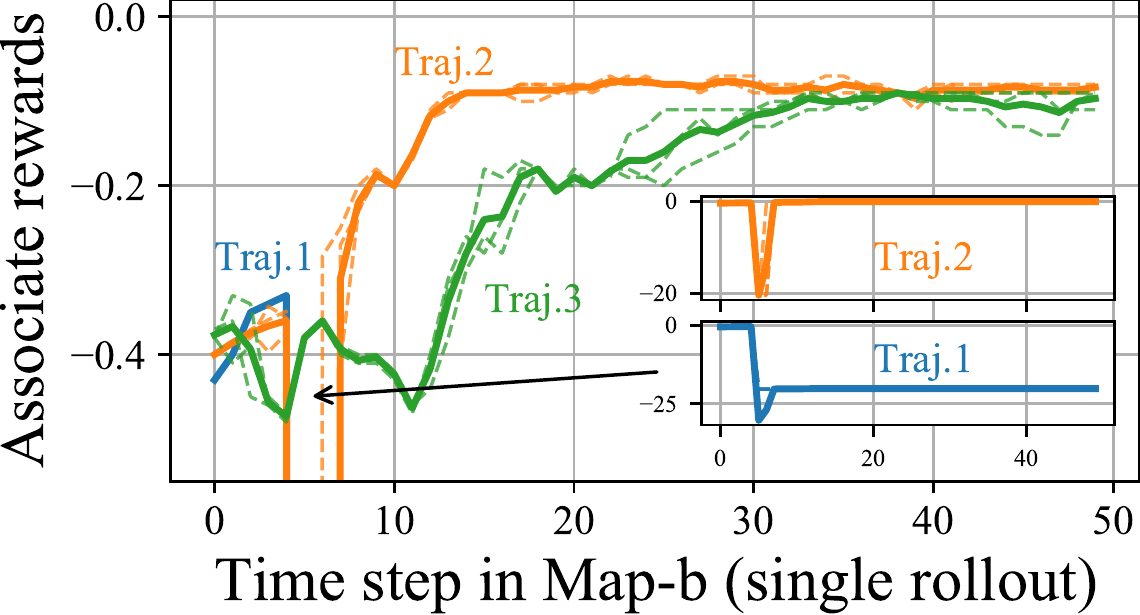}
		\includegraphics[scale=0.304]{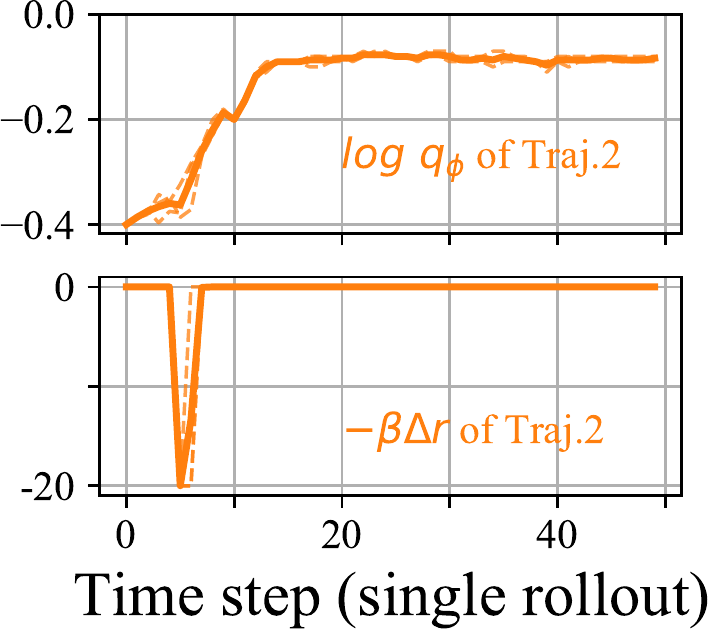}
	}
	\caption{\emph{(a)}: The value of $\log q_\phi$ in \emph{Map-a} for \emph{(Map-a, Map-b)} and $\log q_\phi$ in \emph{Map-b} for \emph{(Map-b, Map-c)}. \emph{(b)}: Three trajectories in \emph{Map-b} for the \emph{(Map-b, Map-c)} task, and the recorded rewards.} 
	\label{fig-representing-skills}
\end{figure}
\textbf{Visualizing the dynamics-aware rewards.} 
To gain more intuition that the proposed dynamics-aware rewards capture  the perceptible dynamics of both the source and target environments and enable an adaptive policy for the target, we visualize the learned probing reward $\log q_\phi$ and the  reward modification $ \beta \Delta r$ throughout the training for \emph{(Map-a, Map-c)} and \emph{(Map-b, Map-c)} pairs in Figure~\ref{fig-representing-skills}. 

The probing policy learns $q_\phi$ by summarizing the behaviors with the latent random variable $\omega$ in source environments. Setting \emph{Map-a} as the source (Figure~\ref{fig-representing-skills}~(a)~\emph{left}), we can see that $\log q_\phi$ resembles the usual L2-norm-based punishment. Further, in the pair \emph{(Map-b, Map-c)}, we can find that the learned $\log q_\phi$ is well shaped by the dynamics of the source environment \emph{Map-b} (Figure~\ref{fig-representing-skills}~(a)~\emph{right}): even if the agent simply moves in the direction of reward increase, it almost always sidesteps the wall and avoids the entrapment in a local optimal solution produced by the usual L2-norm based reward. 

To see how the modification $\beta \Delta r$ guides the policy, we track three trajectories (with the same goal) and the associated rewards ($\log q_\phi - \beta \Delta r$) in the \emph{(Map-b, Map-c)} task, as shown in Figure~\ref{fig-representing-skills}~(b). We see that \emph{Traj.2} receives an incremental $\log q_\phi$ along the whole trajectory while a severe punishment from $\beta \Delta r$ around step 6. This indicates that \emph{Traj.2} is inapplicable to the target dynamics (\emph{Map-c}), even if it is feasible in the source (\emph{Map-b}). With this modification, we indeed obtain the adaptive skills (eg. \emph{Traj.3}) by training in the source. 
This answers our first question, where both dynamics (source and target) explicitly shape the associated rewards, guiding the skills to be domain adaptive. 

%The $\Delta r$ gives rise to the modification of the dynamics shift when the agent rollouts in the source environment. However, the $\Delta r$ (at time step $t$) is conditioned on $s_t$, $a_t$ and $s_{t+1}$, which is difficult to be visualized on the 2D heatmap. So, we record the induced the trajectories in the source environment, observing the corresponding rewards at each time step. 

\subsection{Comparison with Baselines}

\begin{figure*}[h]
	\centering
	%	\subfigure[]{
	\begin{minipage}[t]{0.480\linewidth}
		\centering
		\includegraphics[scale=0.335]{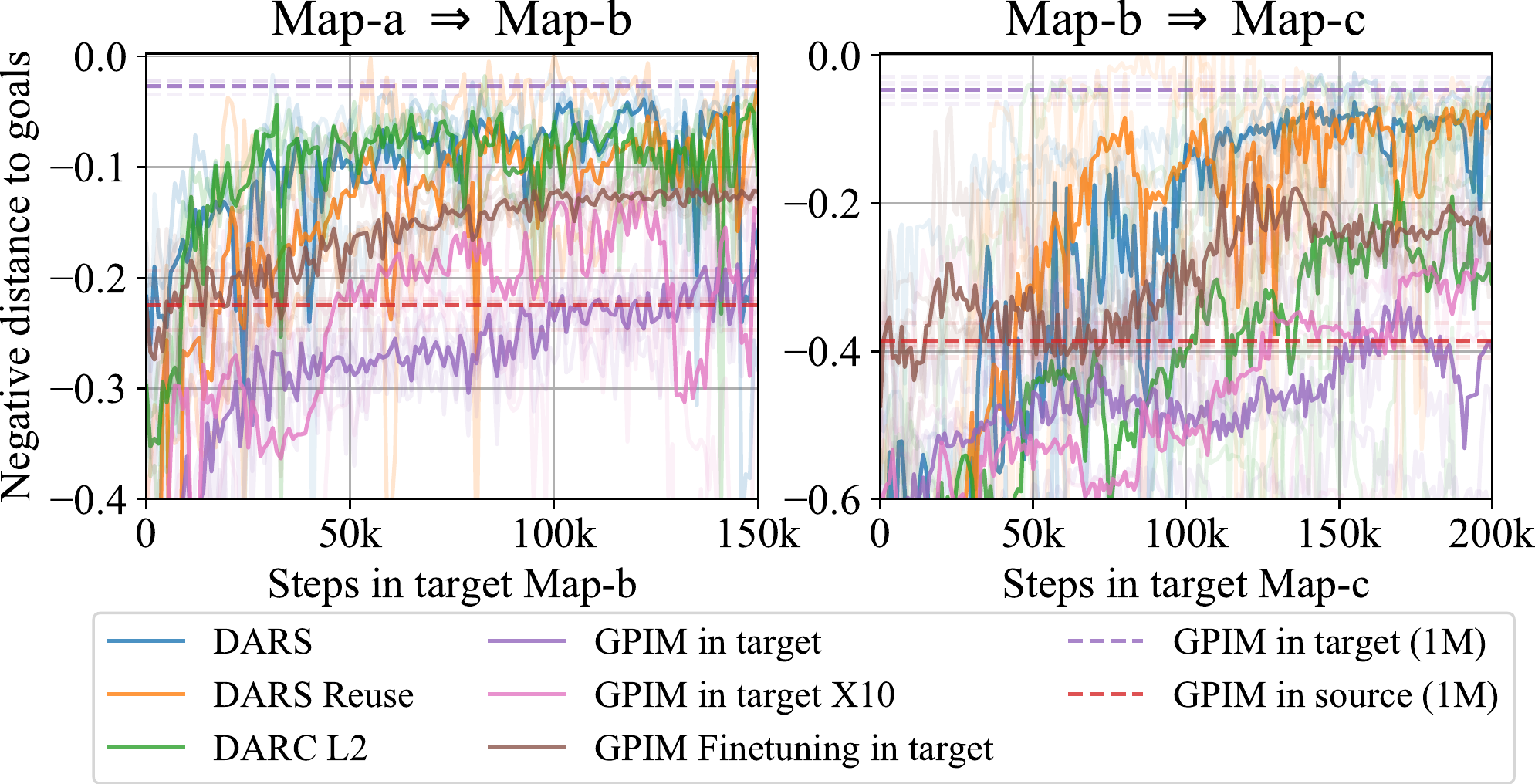}
		\vspace{-11pt}
		\caption{Comparison (training process) with alternative methods for learning skills for target environments. We plot each random seed as a transparent line; each solid line corresponds to the average across four random seeds; the dashed lines denote the performance of trained policies.} 
		\label{fig-skills-training}
	\end{minipage}%
	%	}
	\ \ \ \ 
	%	\subfigure[]{
	\begin{minipage}[t]{0.494\linewidth}
		\centering
		\includegraphics[scale=0.35]{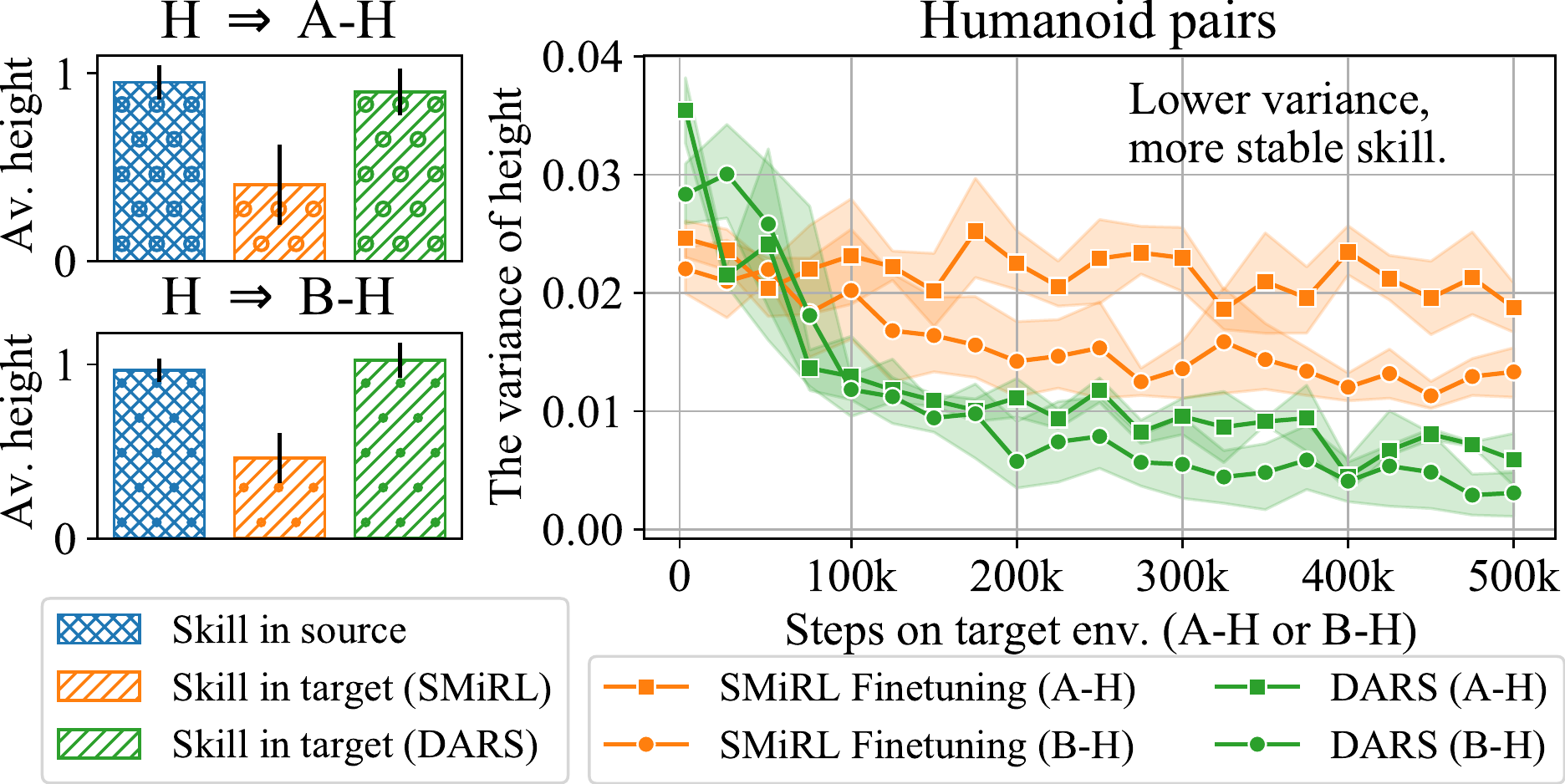}
		\caption{(left): The visualization of skills for humanoid (avoid falling) and the comparisons with \emph{SMiRL Finetuning}, where the stable skills for humanoid keep the average height around 1. (right): Training process. The decrease in the variance of the height implies the emergence of a stable skill.}
		\label{fig-skills-humaniod}
	\end{minipage}%
	%	}
	%	\caption{asdf}
	%	\label{fig:asdf}
\end{figure*}

\textbf{Behaviors in stable environments.} 
For the second question, we apply our method to state-reaching tasks: $g \triangleq \text{Relabel}(\pi_\mu, \omega, \tilde{\tau}) = \tilde{s}_T$ (Figure \ref{fig-graphic-model} c2). We adopt the negative L2 norm (between the goal and the final state in each episode) as the distance metric. 
We compare our method (\emph{DARS}) against six alternative goal-reaching strategies\footnote{We do not compare with other unsupervised RL methods (eg.~\citet{wardefarley2018unsupervised}) because they generally study the rewards wrt the high-dimensional states. DARS does not focus on high-dimensional states. Domain randomization~\citep{Peng_2018, tobin2017domain} and system (dynamics) identification~\cite{farchy2013humanoid, chebotar2019closing, allevato2019tunenet} are also not compared because they requires the access of the physical parameters of source environment, while we do not assume this access.}:
(1) additionally updating $\pi_{\theta}$ with data $\mathcal{B_T}$ collected in the target (\emph{DARS Reuse}); (2) employing DARC with a negative L2-norm-based reward (DARC L2); training skills with GPIM in the source and target respectively (3) \emph{GPIM in source} and (4) \emph{GPIM in target}); (5) updating GPIM in the target 10 times more (\emph{GPIM in target X10}; $R=10$ and see more interpretation in~\citep{eysenbach2020off}); (6) finetuning \emph{GPIM in source} in the target (\emph{GPIM Finetuning in target}). 

We report the results in Figure \ref{fig-skills-training}. 
\emph{GPIM in source} performs much worse than \emph{DARS} due to the dynamics shifts as we show in Section \ref{exp-4-1}. 
With the same amount of rollout steps in the target, \emph{DARS} achieves better performance than \emph{GPIM in target X10} and \emph{GPIM Finetuning in target}, and approximates \emph{GPIM in target} within 1M steps in effectiveness, suggesting that the modification $\beta \Delta r$ provides sufficient information regarding the target dynamics. Further, reusing the buffer $\mathcal{B_T}$ ($\emph{DARS Resue}$) does not significantly improve the performance. 
Despite not requiring a prior reward function, our unsupervised DARS reaches comparable performance to (supervised) \emph{DARC L2} in \emph{(Map-a, Map-b)} pair. 
The more exploratory task \emph{(Map-b, Map-c)} further reinforces the advantage of our dynamics-aware rewards, where the probing policy $\pi_\mu$ boosts the representational potential of $q_\phi$.

\textbf{Behaviors in unstable environments.}
%shows the learned behaviors in the typically static environments, in the sense that unexpected events don not happen unless the agent carries out a specific and coordinated sequence of actions \cite{anonymous2021smirl}
Further, when we set $p(\omega)$ as the Dirac distribution, $p(\omega) = \delta (\omega)$, the discriminator $q_\phi$ will degrade to a density estimator: $q_\phi(s_{t+1})$, which keeps the same form as in SMiRL \cite{berseth2019smirl}. 
Assuming the environment will pose unexpected events to the agent, SMiRL seeks out stable and repeatable
situations that counteract the environment’s prevailing sources of entropy. 

With such properties, we evaluate DARS in unstable environment pairs, where the source  and the target  are both unstable and exhibit dynamics mismatch. Figure \ref{fig-skills-humaniod} (left) charts the emergence of a stable skill with DARS, while SMiRL suffers from the failure of domain adaptation for both \emph{(H, A-H)} and \emph{(H, B-H)}. Figure \ref{fig-skills-humaniod} (right) shows the comparisons with \emph{SMiRL Finetuning}, denoting training in the source and then finetuning in the target with SMiRL. 
With the same amount of rollout steps, we can find that \emph{DARS} can learn a more stable skill for the target than \emph{SMiRL Finetuning}, revealing the competence of our regularization term for learning adaptive skills even in the unstable environments. 
%\footnote{\emph{Skill in target} indicates the skill is deployed in the target, while it is learned in the source (with SMiRL or DARS).}

\subsection{Sim2real Transfer on Quadruped Robot}

\begin{figure}[t] 
	\centering % [scale=0.38] [width=\columnwidth * 2]
	\includegraphics[scale=0.252]{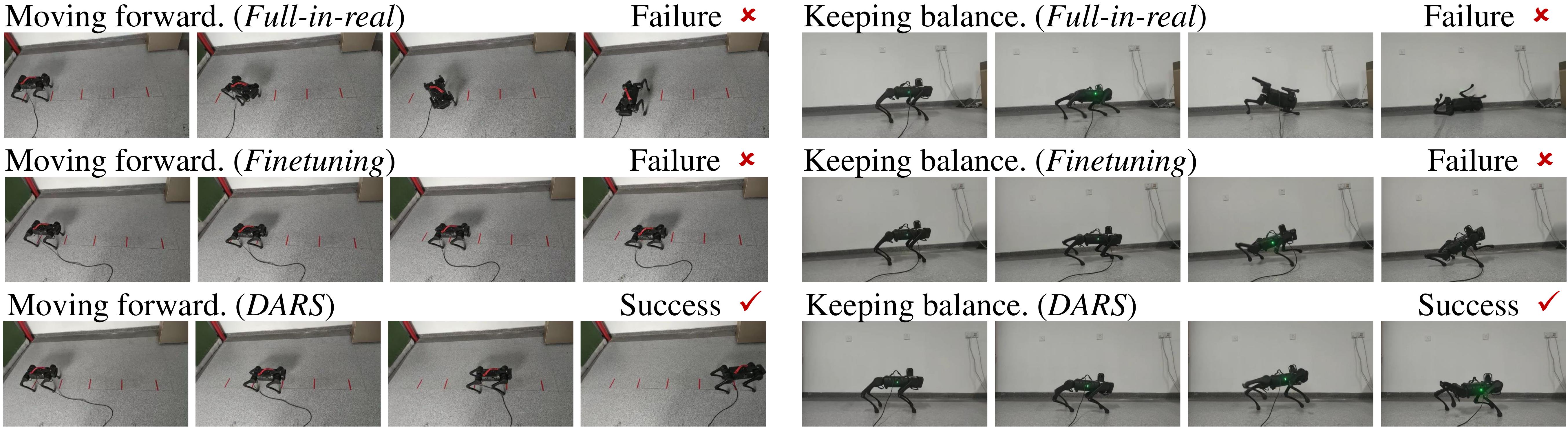}
	\caption{Deploying the learned skills into the real quadruped robot, where all models are trained with limited interaction (three hours for moving forward and one hour for keeping balance) in real.} 
	\label{fig-real-robot-before-introduction}
\end{figure}

\begin{wraptable}{r}{0.391\textwidth}
	\vspace{-19pt} 
	\centering
	\caption{Time (hours) spent for valid skill emergence in real-world interaction (covering the manual reset time).} %
	\begin{tabular}{ccc}
		\toprule
		& \makecell*[c]{forward \& \\backward} & \makecell*[c]{keeping \\balance} \\
		\midrule
		Full-in-real & $>6$~h   & $>6$~h \\
		Finetuning & $>6$~h   & $4$~h \\
		DARS  & $3$~h    & $1$~h \\
		\bottomrule
	\end{tabular}%
	\label{tab-spent-time}%
	\vspace{-11pt}
\end{wraptable}

We now deploy our DARS on pair \emph{(sim-robot, real-robot)} to learn diverse skills (moving forward and moving backward) and balance-keeping skill in stable and unstable setting respectively. We compare DARS with two baselines:~(1)~training directly in the real world (\emph{Full-in-real}), (2) finetuning the model, pre-trained in simulator, in real (\emph{Finetuning}). 
As shown in Figure~\ref{fig-real-robot-before-introduction}, after three hours (or one hour) of real-world interaction, our DARS demonstrates the emergence of moving skills (or the balance-keeping skill), while baselines are unable to do so. As shown in Table~\ref{tab-spent-time}, \emph{Finetuning} takes significantly more time (four hours vs. one hour) to discover  balance-keeping skill in the unstable setting, and the other three comparisons are unable to acquire valid skills given six hours of interaction in real world. 
We refer reader to the result video  \href{https://sites.google.com/view/milab-dars/}{(site)} showing this sim2real~deployment.
%See videos at \href{https://sites.google.com/view/milab-dars/}{https://sites.google.com/view/milab-dars/} for this sim2real~deployment.
%Supplementary material contains  \href{https://sites.google.com/view/milab-dars/}{videos} from this sim2real~deployment.

\section{Conclusion}

In this paper, we propose DARS to acquire adaptive skills for a target environment by training mostly 
in a source environment especially in the presence of dynamics shifts. Specifically, we employ a latent-conditioned policy rollouting in the source environment to represent goals (including goal-distribution and goal-achievement reward function) and introduce a KL regularization to further identify consistent behaviors for the goal-conditioned policy in both source and target environments. We show that DARS obtains a near-optimal policy for target, as long as a mild assumption is met. 
We also conduct extensive experiments to show the effectiveness of our approach: (1) DARS can acquire dynamics-aware rewards, which further enables adaptive skills for the target environment, (2) the rollout steps in the target environment can be significantly reduced while adaptive skills are~preserved. 

%keep adaptive skills. 

%Experiments on a range of tasks confirm the effectiveness of our approach. 

% encourage the trajectories induced by the policy in source to keep an identical embodiment with that induced by the same policy in target. 

\begin{ack}

The authors would like to thank Hongyin Zhang for help with running experiments on the quadruped robot. 
This work is supported by NSFC General Program (62176215).

\end{ack}

%\section*{References}

\bibliography{neurips_2021}

\begin{thebibliography}{55}
\providecommand{\natexlab}[1]{#1}
\providecommand{\url}[1]{\texttt{#1}}
\expandafter\ifx\csname urlstyle\endcsname\relax
  \providecommand{\doi}[1]{doi: #1}\else
  \providecommand{\doi}{doi: \begingroup \urlstyle{rm}\Url}\fi

\bibitem[Allevato et~al.(2019)Allevato, Short, Pryor, and
  Thomaz]{allevato2019tunenet}
Adam Allevato, Elaine~Schaertl Short, Mitch Pryor, and Andrea Thomaz.
\newblock Tunenet: One-shot residual tuning for system identification and
  sim-to-real robot task transfer.
\newblock In Leslie~Pack Kaelbling, Danica Kragic, and Komei Sugiura, editors,
  \emph{3rd Annual Conference on Robot Learning, CoRL 2019, Osaka, Japan,
  October 30 - November 1, 2019, Proceedings}, volume 100 of \emph{Proceedings
  of Machine Learning Research}, pages 445--455. {PMLR}, 2019.
\newblock URL \url{http://proceedings.mlr.press/v100/allevato20a.html}.

\bibitem[Anand et~al.(2019)Anand, Racah, Ozair, Bengio, C{\^o}t{\'e}, and
  Hjelm]{anand2019unsupervised}
Ankesh Anand, Evan Racah, Sherjil Ozair, Yoshua Bengio, Marc-Alexandre
  C{\^o}t{\'e}, and R~Devon Hjelm.
\newblock Unsupervised state representation learning in atari.
\newblock \emph{arXiv preprint arXiv:1906.08226}, 2019.

\bibitem[Beaudry and Renner(2012)]{beaudry2012intuitive}
Normand~J. Beaudry and Renato Renner.
\newblock An intuitive proof of the data processing inequality, 2012.

\bibitem[Berseth et~al.(2019)Berseth, Geng, Devin, Finn, Jayaraman, and
  Levine]{berseth2019smirl}
Glen Berseth, Daniel Geng, Coline Devin, Chelsea Finn, Dinesh Jayaraman, and
  Sergey Levine.
\newblock Smirl: Surprise minimizing {RL} in dynamic environments.
\newblock \emph{CoRR}, abs/1912.05510, 2019.
\newblock URL \url{http://arxiv.org/abs/1912.05510}.

\bibitem[Brockman et~al.(2016)Brockman, Cheung, Pettersson, Schneider,
  Schulman, Tang, and Zaremba]{brockman2016openai}
Greg Brockman, Vicki Cheung, Ludwig Pettersson, Jonas Schneider, John Schulman,
  Jie Tang, and Wojciech Zaremba.
\newblock Openai gym, 2016.

\bibitem[Campos et~al.(2020)Campos, Trott, Xiong, Socher,
  Gir{\'{o}}{-}i{-}Nieto, and Torres]{campos2020explore}
Victor Campos, Alexander Trott, Caiming Xiong, Richard Socher, Xavier
  Gir{\'{o}}{-}i{-}Nieto, and Jordi Torres.
\newblock Explore, discover and learn: Unsupervised discovery of state-covering
  skills.
\newblock 119:\penalty0 1317--1327, 2020.
\newblock URL \url{http://proceedings.mlr.press/v119/campos20a.html}.

\bibitem[Chebotar et~al.(2019)Chebotar, Handa, Makoviychuk, Macklin, Issac,
  Ratliff, and Fox]{chebotar2019closing}
Yevgen Chebotar, Ankur Handa, Viktor Makoviychuk, Miles Macklin, Jan Issac,
  Nathan~D. Ratliff, and Dieter Fox.
\newblock Closing the sim-to-real loop: Adapting simulation randomization with
  real world experience.
\newblock In \emph{International Conference on Robotics and Automation, {ICRA}
  2019, Montreal, QC, Canada, May 20-24, 2019}, pages 8973--8979. {IEEE}, 2019.
\newblock \doi{10.1109/ICRA.2019.8793789}.
\newblock URL \url{https://doi.org/10.1109/ICRA.2019.8793789}.

\bibitem[Colas et~al.(2020)Colas, Karch, Sigaud, and
  Oudeyer]{colas2020intrinsically}
C{\'{e}}dric Colas, Tristan Karch, Olivier Sigaud, and Pierre{-}Yves Oudeyer.
\newblock Intrinsically motivated goal-conditioned reinforcement learning: a
  short survey.
\newblock \emph{CoRR}, abs/2012.09830, 2020.
\newblock URL \url{https://arxiv.org/abs/2012.09830}.

\bibitem[Desai et~al.(2020)Desai, Durugkar, Karnan, Warnell, Hanna, and
  Stone]{desai2020imitation}
Siddharth Desai, Ishan Durugkar, Haresh Karnan, Garrett Warnell, Josiah Hanna,
  and Peter Stone.
\newblock An imitation from observation approach to transfer learning with
  dynamics mismatch.
\newblock \emph{Advances in Neural Information Processing Systems}, 33, 2020.

\bibitem[Eysenbach et~al.(2018)Eysenbach, Gupta, Ibarz, and
  Levine]{eysenbach2018diversity}
Benjamin Eysenbach, Abhishek Gupta, Julian Ibarz, and Sergey Levine.
\newblock Diversity is all you need: Learning skills without a reward function.
\newblock \emph{arXiv preprint arXiv:1802.06070}, 2018.

\bibitem[Eysenbach et~al.(2020)Eysenbach, Asawa, Chaudhari, Salakhutdinov, and
  Levine]{eysenbach2020off}
Benjamin Eysenbach, Swapnil Asawa, Shreyas Chaudhari, Ruslan Salakhutdinov, and
  Sergey Levine.
\newblock Off-dynamics reinforcement learning: Training for transfer with
  domain classifiers.
\newblock \emph{arXiv preprint arXiv:2006.13916}, 2020.

\bibitem[Farchy et~al.(2013)Farchy, Barrett, MacAlpine, and
  Stone]{farchy2013humanoid}
Alon Farchy, Samuel Barrett, Patrick MacAlpine, and Peter Stone.
\newblock Humanoid robots learning to walk faster: From the real world to
  simulation and back.
\newblock In \emph{Proceedings of the 2013 international conference on
  Autonomous agents and multi-agent systems}, pages 39--46, 2013.

\bibitem[Galashov et~al.(2019)Galashov, Jayakumar, Hasenclever, Tirumala,
  Schwarz, Desjardins, Czarnecki, Teh, Pascanu, and
  Heess]{galashov2019information}
Alexandre Galashov, Siddhant~M. Jayakumar, Leonard Hasenclever, Dhruva
  Tirumala, Jonathan Schwarz, Guillaume Desjardins, Wojciech~M. Czarnecki,
  Yee~Whye Teh, Razvan Pascanu, and Nicolas Heess.
\newblock Information asymmetry in kl-regularized rl, 2019.

\bibitem[Gangwani and Peng(2020)]{Gangwani2020StateonlyIW}
Tanmay Gangwani and J.~Peng.
\newblock State-only imitation with transition dynamics mismatch.
\newblock \emph{ArXiv}, abs/2002.11879, 2020.

\bibitem[Ghosh et~al.(2018)Ghosh, Singh, Rajeswaran, Kumar, and
  Levine]{ghosh2018divideandconquer}
Dibya Ghosh, Avi Singh, Aravind Rajeswaran, Vikash Kumar, and Sergey Levine.
\newblock Divide-and-conquer reinforcement learning, 2018.

\bibitem[Goyal et~al.(2019)Goyal, Islam, Strouse, Ahmed, Botvinick, Larochelle,
  Bengio, and Levine]{goyal2019infobot}
Anirudh Goyal, Riashat Islam, Daniel Strouse, Zafarali Ahmed, Matthew
  Botvinick, Hugo Larochelle, Yoshua Bengio, and Sergey Levine.
\newblock Infobot: Transfer and exploration via the information bottleneck,
  2019.

\bibitem[Goyal et~al.(2020)Goyal, Sodhani, Binas, Peng, Levine, and
  Bengio]{Goyal2020Reinforcement}
Anirudh Goyal, Shagun Sodhani, Jonathan Binas, Xue~Bin Peng, Sergey Levine, and
  Yoshua Bengio.
\newblock Reinforcement learning with competitive ensembles of
  information-constrained primitives.
\newblock In \emph{International Conference on Learning Representations}, 2020.
\newblock URL \url{https://openreview.net/forum?id=ryxgJTEYDr}.

\bibitem[Guo et~al.(2020)Guo, Pires, Piot, Grill, Altch{\'{e}}, Munos, and
  Azar]{guo2020bootstrap}
Zhaohan~Daniel Guo, Bernardo~{\'{A}}vila Pires, Bilal Piot, Jean{-}Bastien
  Grill, Florent Altch{\'{e}}, R{\'{e}}mi Munos, and Mohammad~Gheshlaghi Azar.
\newblock Bootstrap latent-predictive representations for multitask
  reinforcement learning.
\newblock In \emph{Proceedings of the 37th International Conference on Machine
  Learning, {ICML} 2020, 13-18 July 2020, Virtual Event}, volume 119 of
  \emph{Proceedings of Machine Learning Research}, pages 3875--3886. {PMLR},
  2020.
\newblock URL \url{http://proceedings.mlr.press/v119/guo20g.html}.

\bibitem[Haarnoja et~al.(2018)Haarnoja, Zhou, Abbeel, and
  Levine]{haarnoja2018soft}
Tuomas Haarnoja, Aurick Zhou, Pieter Abbeel, and Sergey Levine.
\newblock Soft actor-critic: Off-policy maximum entropy deep reinforcement
  learning with a stochastic actor.
\newblock \emph{arXiv preprint arXiv:1801.01290}, 2018.

\bibitem[Hafner et~al.(2020)Hafner, Ortega, Ba, Parr, Friston, and
  Heess]{hafner2020action}
Danijar Hafner, Pedro~A. Ortega, Jimmy Ba, Thomas Parr, Karl~J. Friston, and
  Nicolas Heess.
\newblock Action and perception as divergence minimization.
\newblock \emph{CoRR}, abs/2009.01791, 2020.
\newblock URL \url{https://arxiv.org/abs/2009.01791}.

\bibitem[Hartikainen et~al.(2019)Hartikainen, Geng, Haarnoja, and
  Levine]{hartikainen2019dynamical}
Kristian Hartikainen, Xinyang Geng, Tuomas Haarnoja, and Sergey Levine.
\newblock Dynamical distance learning for semi-supervised and unsupervised
  skill discovery.
\newblock \emph{arXiv preprint arXiv:1907.08225}, 2019.

\bibitem[Hasenclever et~al.(2020)Hasenclever, Pardo, Hadsell, Heess, and
  Merel]{DBLP:conf/icml/HasencleverPHHM20}
Leonard Hasenclever, Fabio Pardo, Raia Hadsell, Nicolas Heess, and Josh Merel.
\newblock Comic: Complementary task learning {\&} mimicry for reusable skills.
\newblock In \emph{Proceedings of the 37th International Conference on Machine
  Learning, {ICML} 2020, 13-18 July 2020, Virtual Event}, volume 119 of
  \emph{Proceedings of Machine Learning Research}, pages 4105--4115. {PMLR},
  2020.
\newblock URL \url{http://proceedings.mlr.press/v119/hasenclever20a.html}.

\bibitem[Higgins et~al.(2017)Higgins, Pal, Rusu, Matthey, Burgess, Pritzel,
  Botvinick, Blundell, and Lerchner]{higgins2017darla}
Irina Higgins, Arka Pal, Andrei Rusu, Loic Matthey, Christopher Burgess,
  Alexander Pritzel, Matthew Botvinick, Charles Blundell, and Alexander
  Lerchner.
\newblock Darla: Improving zero-shot transfer in reinforcement learning.
\newblock In \emph{Proceedings of the 34th International Conference on Machine
  Learning-Volume 70}, pages 1480--1490. JMLR. org, 2017.

\bibitem[Jabri et~al.(2019)Jabri, Hsu, Gupta, Eysenbach, Levine, and
  Finn]{jabri2019unsupervised}
Allan Jabri, Kyle Hsu, Abhishek Gupta, Ben Eysenbach, Sergey Levine, and
  Chelsea Finn.
\newblock Unsupervised curricula for visual meta-reinforcement learning.
\newblock In \emph{Advances in Neural Information Processing Systems}, pages
  10519--10531, 2019.

\bibitem[Kim et~al.(2020)Kim, Gu, Song, Zhao, and Ermon]{Kim2020DomainAI}
Kuno Kim, Yihong Gu, Jiaming Song, Shengjia Zhao, and S.~Ermon.
\newblock Domain adaptive imitation learning.
\newblock In \emph{ICML}, 2020.

\bibitem[Kingma and Welling(2013)]{kingma2013auto}
Diederik~P Kingma and Max Welling.
\newblock Auto-encoding variational bayes.
\newblock \emph{arXiv preprint arXiv:1312.6114}, 2013.

\bibitem[Kovač et~al.(2020)Kovač, Laversanne-Finot, and
  Oudeyer]{Kova2020GRIMGEPLP}
G.~Kovač, A.~Laversanne-Finot, and Pierre-Yves Oudeyer.
\newblock Grimgep: Learning progress for robust goal sampling in visual deep
  reinforcement learning.
\newblock \emph{arXiv: Learning}, 2020.

\bibitem[Lee et~al.(2020)Lee, Eysenbach, Parisotto, Xing, Levine, and
  Salakhutdinov]{lee2020efficient}
Lisa Lee, Benjamin Eysenbach, Emilio Parisotto, Eric Xing, Sergey Levine, and
  Ruslan Salakhutdinov.
\newblock Efficient exploration via state marginal matching, 2020.

\bibitem[Levine(2018)]{levine2018reinforcement}
Sergey Levine.
\newblock Reinforcement learning and control as probabilistic inference:
  Tutorial and review, 2018.
\newblock URL \url{http://arxiv.org/abs/1805.00909}.

\bibitem[Liu et~al.(2019)Liu, Ling, Mu, and Su]{liu2019state}
Fangchen Liu, Zhan Ling, Tongzhou Mu, and Hao Su.
\newblock State alignment-based imitation learning.
\newblock \emph{arXiv preprint arXiv:1911.10947}, 2019.

\bibitem[Liu et~al.(2021)Liu, Wang, Tian, and Chen]{liu2021learn}
Jinxin Liu, Donglin Wang, Qiangxing Tian, and Zhengyu Chen.
\newblock Learn goal-conditioned policy with intrinsic motivation for deep
  reinforcement learning, 2021.
\newblock URL \url{https://openreview.net/forum?id=MmcywoW7PbJ}.

\bibitem[Nair et~al.(2020)Nair, Bahl, Khazatsky, Pong, Berseth, and
  Levine]{nair2020contextual}
Ashvin Nair, Shikhar Bahl, Alexander Khazatsky, Vitchyr Pong, Glen Berseth, and
  Sergey Levine.
\newblock Contextual imagined goals for self-supervised robotic learning.
\newblock In \emph{Conference on Robot Learning}, pages 530--539. PMLR, 2020.

\bibitem[Nair et~al.(2018)Nair, Pong, Dalal, Bahl, Lin, and
  Levine]{nair2018visual}
Ashvin~V Nair, Vitchyr Pong, Murtaza Dalal, Shikhar Bahl, Steven Lin, and
  Sergey Levine.
\newblock Visual reinforcement learning with imagined goals.
\newblock In \emph{Advances in Neural Information Processing Systems}, pages
  9191--9200, 2018.

\bibitem[Oord et~al.(2018)Oord, Li, and Vinyals]{oord2018representation}
Aaron van~den Oord, Yazhe Li, and Oriol Vinyals.
\newblock Representation learning with contrastive predictive coding.
\newblock \emph{arXiv preprint arXiv:1807.03748}, 2018.

\bibitem[Peng et~al.(2018)Peng, Andrychowicz, Zaremba, and Abbeel]{Peng_2018}
Xue~Bin Peng, Marcin Andrychowicz, Wojciech Zaremba, and Pieter Abbeel.
\newblock Sim-to-real transfer of robotic control with dynamics randomization.
\newblock \emph{2018 IEEE International Conference on Robotics and Automation
  (ICRA)}, May 2018.
\newblock \doi{10.1109/icra.2018.8460528}.
\newblock URL \url{http://dx.doi.org/10.1109/ICRA.2018.8460528}.

\bibitem[Petangoda et~al.(2019)Petangoda, Pascual{-}Diaz, Adam, Vrancx, and
  Grau{-}Moya]{petangoda2019disentangled}
Janith~C. Petangoda, Sergio Pascual{-}Diaz, Vincent Adam, Peter Vrancx, and
  Jordi Grau{-}Moya.
\newblock Disentangled skill embeddings for reinforcement learning.
\newblock \emph{CoRR}, abs/1906.09223, 2019.
\newblock URL \url{http://arxiv.org/abs/1906.09223}.

\bibitem[Pong et~al.(2020)Pong, Dalal, Lin, Nair, Bahl, and
  Levine]{pong2020skewfit}
Vitchyr~H. Pong, Murtaza Dalal, Steven Lin, Ashvin Nair, Shikhar Bahl, and
  Sergey Levine.
\newblock Skew-fit: State-covering self-supervised reinforcement learning,
  2020.

\bibitem[Schaul et~al.(2015)Schaul, Horgan, Gregor, and
  Silver]{schaul2015universal}
Tom Schaul, Daniel Horgan, Karol Gregor, and David Silver.
\newblock Universal value function approximators.
\newblock In \emph{International conference on machine learning}, pages
  1312--1320, 2015.

\bibitem[Schroff et~al.(2015)Schroff, Kalenichenko, and
  Philbin]{schroff2015facenet}
Florian Schroff, Dmitry Kalenichenko, and James Philbin.
\newblock Facenet: A unified embedding for face recognition and clustering.
\newblock In \emph{Proceedings of the IEEE conference on computer vision and
  pattern recognition}, pages 815--823, 2015.

\bibitem[Schulman et~al.(2017)Schulman, Wolski, Dhariwal, Radford, and
  Klimov]{schulman2017proximal}
John Schulman, Filip Wolski, Prafulla Dhariwal, Alec Radford, and Oleg Klimov.
\newblock Proximal policy optimization algorithms.
\newblock \emph{arXiv preprint arXiv:1707.06347}, 2017.

\bibitem[Schwarzer et~al.(2021)Schwarzer, Anand, Goel, Hjelm, Courville, and
  Bachman]{Schwarzer2021DATAEFFICIENTRL}
Max Schwarzer, Ankesh Anand, Rishab Goel, R.~Devon Hjelm, Aaron~C. Courville,
  and Philip Bachman.
\newblock Data-efficient reinforcement learning with self-predictive
  representations.
\newblock 2021.

\bibitem[Sermanet et~al.(2018)Sermanet, Lynch, Chebotar, Hsu, Jang, Schaal,
  Levine, and Brain]{sermanet2018time}
Pierre Sermanet, Corey Lynch, Yevgen Chebotar, Jasmine Hsu, Eric Jang, Stefan
  Schaal, Sergey Levine, and Google Brain.
\newblock Time-contrastive networks: Self-supervised learning from video.
\newblock In \emph{2018 IEEE International Conference on Robotics and
  Automation (ICRA)}, pages 1134--1141. IEEE, 2018.

\bibitem[Sharma et~al.(2020)Sharma, Gu, Levine, Kumar, and
  Hausman]{sharma2019dynamics}
Archit Sharma, Shixiang Gu, Sergey Levine, Vikash Kumar, and Karol Hausman.
\newblock Dynamics-aware unsupervised discovery of skills.
\newblock 2020.
\newblock URL \url{https://openreview.net/forum?id=HJgLZR4KvH}.

\bibitem[Singh et~al.(2019)Singh, Yang, Finn, and Levine]{singh2019end}
Avi Singh, Larry Yang, Chelsea Finn, and Sergey Levine.
\newblock End-to-end robotic reinforcement learning without reward engineering.
\newblock In Antonio Bicchi, Hadas Kress{-}Gazit, and Seth Hutchinson, editors,
  \emph{Robotics: Science and Systems XV, University of Freiburg, Freiburg im
  Breisgau, Germany, June 22-26, 2019}, 2019.
\newblock \doi{10.15607/RSS.2019.XV.073}.
\newblock URL \url{https://doi.org/10.15607/RSS.2019.XV.073}.

\bibitem[Strouse et~al.(2018)Strouse, Kleiman{-}Weiner, Tenenbaum, Botvinick,
  and Schwab]{strouse2019learning}
Daniel Strouse, Max Kleiman{-}Weiner, Josh Tenenbaum, Matthew Botvinick, and
  David~J. Schwab.
\newblock Learning to share and hide intentions using information
  regularization, 2018.
\newblock URL
  \url{https://proceedings.neurips.cc/paper/2018/hash/1ef03ed0cd5863c550128836b28ec3e9-Abstract.html}.

\bibitem[Teh et~al.(2017)Teh, Bapst, Czarnecki, Quan, Kirkpatrick, Hadsell,
  Heess, and Pascanu]{teh2017distral}
Yee~Whye Teh, Victor Bapst, Wojciech~M. Czarnecki, John Quan, James
  Kirkpatrick, Raia Hadsell, Nicolas Heess, and Razvan Pascanu.
\newblock Distral: Robust multitask reinforcement learning, 2017.
\newblock URL
  \url{https://proceedings.neurips.cc/paper/2017/hash/0abdc563a06105aee3c6136871c9f4d1-Abstract.html}.

\bibitem[Tian et~al.(2020{\natexlab{a}})Tian, Wang, Liu, Wang, and
  Kang]{tian2020independent}
Qiangxing Tian, Guanchu Wang, Jinxin Liu, Donglin Wang, and Yachen Kang.
\newblock Independent skill transfer for deep reinforcement learning.
\newblock In \emph{IJCAI}, pages 2901--2907, 2020{\natexlab{a}}.

\bibitem[Tian et~al.(2021)Tian, Liu, Wang, and Wang]{tian2021unsupervised}
Qiangxing Tian, Jinxin Liu, Guanchu Wang, and Donglin Wang.
\newblock Unsupervised discovery of transitional skills for deep reinforcement
  learning.
\newblock In \emph{2021 International Joint Conference on Neural Networks
  (IJCNN)}, pages 1--8. IEEE, 2021.

\bibitem[Tian et~al.(2020{\natexlab{b}})Tian, Nair, Ebert, Dasari, Eysenbach,
  Finn, and Levine]{tian2020model}
Stephen Tian, Suraj Nair, Frederik Ebert, Sudeep Dasari, Benjamin Eysenbach,
  Chelsea Finn, and Sergey Levine.
\newblock Model-based visual planning with self-supervised functional
  distances.
\newblock \emph{CoRR}, abs/2012.15373, 2020{\natexlab{b}}.
\newblock URL \url{https://arxiv.org/abs/2012.15373}.

\bibitem[Tobin et~al.(2017)Tobin, Fong, Ray, Schneider, Zaremba, and
  Abbeel]{tobin2017domain}
Josh Tobin, Rachel Fong, Alex Ray, Jonas Schneider, Wojciech Zaremba, and
  Pieter Abbeel.
\newblock Domain randomization for transferring deep neural networks from
  simulation to the real world, 2017.

\bibitem[Venkattaramanujam et~al.(2019)Venkattaramanujam, Crawford, Doan, and
  Precup]{venkattaramanujam2019self}
Srinivas Venkattaramanujam, Eric Crawford, Thang Doan, and Doina Precup.
\newblock Self-supervised learning of distance functions for goal-conditioned
  reinforcement learning.
\newblock \emph{CoRR}, abs/1907.02998, 2019.
\newblock URL \url{http://arxiv.org/abs/1907.02998}.

\bibitem[Viano et~al.(2020)Viano, Huang, Kamalaruban, and
  Cevher]{Viano2020RobustIR}
Luca Viano, Y.~Huang, P.~Kamalaruban, and V.~Cevher.
\newblock Robust inverse reinforcement learning under transition dynamics
  mismatch.
\newblock \emph{ArXiv}, abs/2007.01174, 2020.

\bibitem[Warde-Farley et~al.(2018)Warde-Farley, de~Wiele, Kulkarni, Ionescu,
  Hansen, and Mnih]{wardefarley2018unsupervised}
David Warde-Farley, Tom~Van de~Wiele, Tejas Kulkarni, Catalin Ionescu, Steven
  Hansen, and Volodymyr Mnih.
\newblock Unsupervised control through non-parametric discriminative rewards,
  2018.

\bibitem[Wulfmeier et~al.(2017)Wulfmeier, Posner, and
  Abbeel]{wulfmeier2017mutual}
Markus Wulfmeier, Ingmar Posner, and Pieter Abbeel.
\newblock Mutual alignment transfer learning.
\newblock In \emph{Conference on Robot Learning}, pages 281--290. PMLR, 2017.

\bibitem[Zhao et~al.(2020)Zhao, Queralta, and Westerlund]{zhao2020transfer}
Wenshuai Zhao, Jorge~Pe{\~{n}}a Queralta, and Tomi Westerlund.
\newblock Sim-to-real transfer in deep reinforcement learning for robotics: a
  survey.
\newblock In \emph{2020 {IEEE} Symposium Series on Computational Intelligence,
  {SSCI} 2020, Canberra, Australia, December 1-4, 2020}, pages 737--744.
  {IEEE}, 2020.
\newblock \doi{10.1109/SSCI47803.2020.9308468}.
\newblock URL \url{https://doi.org/10.1109/SSCI47803.2020.9308468}.

\end{thebibliography}
\bibliographystyle{plainnat}

%%%%%%%%%%%%%%%%%%%%%%%%%%%%%%%%%%%%%%%%%%%%%%%%%%%%%%%%%%%%

%%%%%%%%%%%%%%%%%%%%%%%%%%%%%%%%%%%%%%%%%%%%%%%%%%%%%%%%%%%%

\newpage 
\appendix

\section{Appendix}

\section{Broader Impacts}
\label{broader-impacts}

%\textbf{Limitations:}
%\textbf{Societal impacts:} 

Our DARS for learning adaptive skills can be beneficial for bringing unsupervised RL to real-world applications, such as robot control, indoor navigation, automatic driving, and industrial design. It is critical for robotics to acquire various skills, such as keeping balance, moving in different directions and interacting with objects. Considering the high cost for the interaction (in real world), goal generation and associated reward designing, one natural solution to acquire such skills for the real world (as target) is to reuse the simulator (as source) and apply the unsupervised procedure to autonomously generate goals and rewards. Our unsupervised domain adaptation of learning adaptive skills is a step towards achieving the desired solution. 

%Moreover, a potential benefit of DARS is that our acquired reward functions are well shaped by both the source and target dynamics (simulator and real), which makes skills acquisition more efficient. 

However, one challenge that cannot be ignored is that the unsupervised exploration of the probing policy struggles to acquire complex tasks or reward functions. 
This motivates us to incorporate some prior knowledge into the skill learning process, where such prior knowledge can be given reward function or offline (expert) data in the source domain. 
In the training process, deploying the training policy (some useless skills) to the real robot is very damaging to the structure of the robot body, especially in sim2real setting. Thus, one potential direction is to consider the offline setting and eliminate the online interaction in the target domain. 

Moreover, another issue of our objective  arises from Assumption~\ref{assmption-no-transition}, which specifies that the acquired skills must find common behaviors (trajectories) in source and target environments.  In pair \emph{(Map-b, Map-d)}, see Figure~\ref{fig-skills}, all skills go through the upper hole in the map, not the left hole, even though some trajectories over the left hole may be better in \emph{Map-d}. In other words, our acquired  skills are "optimal" for both source and target under Assumption~\ref{assmption-no-transition} rather than for the target environment. Therefore, the dynamics of the source essentially interfere the learning of skills in the target environment. However, this is accompanied by the benefit that we can guide the emergence of skills we want in target by explicitly constraining the dynamics of the source environment, eg, in the safe RL.

\section{Proof of Theoretical Guarantee}
\label{appendix-proof}

\begin{figure*}[h] 
	\centering 
	\includegraphics[scale=0.35]{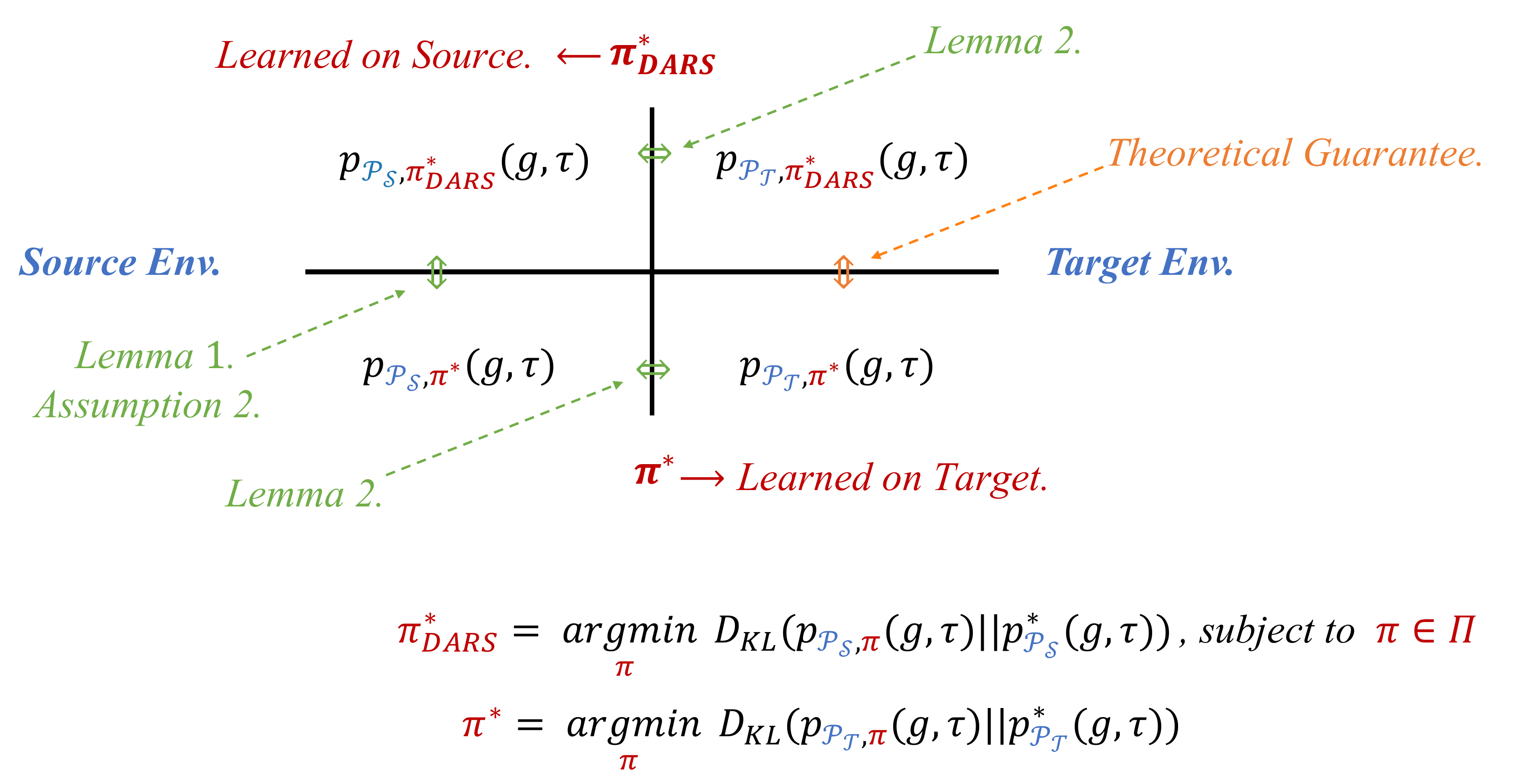}
	\caption{Diagram of our theoretical guarantee, where we need to prove our $\pi^*_{\text{DARS}}$ (learned in the source environment) can produce near-optimal trajectories in the target environment compared with that induced by the desired $\pi^*$ (directly learned in the target environment).} 
	\label{fig-appendix-guarantee}
\end{figure*}

This section provides insight for why the optimal goal-conditioned policy for the source environment obtained by maximizing the KL regularized objective (Equation 4) performs near-optimally in the target environment subject to Assumption 2. 
We first rewrite Equation 4 under the KKT conditions and show that maximizing it is equivalent to finding a policy $\pi$ to maximize the non-regularized objective in source $\mathcal{I}_{\mathcal{P_S}, \pi_{\theta}}(g; \tau)$ within a set of policies that produce similar trajectories in both environments. Eysenbach et al. (2020) refers to this constraint set as "no exploit".

\begin{lemma}
	For given goal representations (goal distribution $p(g)$ and goal-achievement reward function $r_g$), maximizing $\mathcal{I}_{\mathcal{P}, \pi}(g; \tau)$ is equivalent to minimizing $ D_{\text{KL}}(p_{_\mathcal{P}, \pi}(g, \tau) \Arrowvert p^*_{_{\mathcal{P}}}(g, \tau)) $, where $p^*_{_{\mathcal{P}}}(g, \tau)$ denotes the desired joint distribution over goal $g$ and trajectory $\tau$ in the environment with dynamics $\mathcal{P}$. 
	Then there exists $\epsilon > 0$ such the optimization problem  KL regularized objective in Equation 4 is equivalent to 
	\begin{align*}
	\min_{\pi \in \Pi} \ \ D_{\text{KL}}\left(p_{_{\mathcal{P_S}}, \pi}(g, \tau) \Arrowvert p^*_{_{\mathcal{P_S}}}(g, \tau)\right), 
	\end{align*}
	where $\Pi$ denotes the set of policies that 
	do not produce drastically different trajectories in two environments: 
	\begin{align*}
	\Pi \triangleq \left\{\pi \arrowvert \beta D_{\text{KL}}\left(p_{_{P_\mathcal{S}}, \pi}(g, \tau) \Arrowvert p_{_{P_\mathcal{T}}, \pi}(g, \tau)\right) \leq \epsilon \right\}.
	\end{align*}
\end{lemma}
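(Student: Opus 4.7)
The plan is to prove the lemma in two stages, one per sentence. First I would establish the MI-to-KL identity for a fixed goal representation; then I would invoke Lagrangian duality to recast Equation~4 as the claimed constrained minimization.

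For Stage~1 I would explicitly construct the desired joint via the control-as-inference template of \citet{levine2018reinforcement} (already cited in the excerpt):
\begin{equation*}
p^{*}_{_{\mathcal{P}}}(g,\tau) \;=\; \frac{1}{Z}\,p(g)\,\rho_0(s_0)\prod_{t=0}^{T-1}\mathcal{P}(s_{t+1}\vert s_t,a_t)\exp\!\bigl(r_g(s_t,a_t,s_{t+1})\bigr).
\end{equation*}
Expanding $p_{_{\mathcal{P},\pi}}(g,\tau)$ by the factorization displayed just before Equation~8 of the excerpt and subtracting $\log p^{*}_{_{\mathcal{P}}}$, the shared $p(g)$, $\rho_0$ and $\mathcal{P}$ factors cancel inside the log, yielding
\begin{equation*}
D_{\text{KL}}\!\bigl(p_{_{\mathcal{P},\pi}}(g,\tau)\,\Vert\,p^{*}_{_{\mathcal{P}}}(g,\tau)\bigr) \;=\; \log Z \;-\; \mathbb{E}_{\mathcal{P},\pi,p(g)}\!\bigl[R(g,\tau)\bigr] \;-\; \mathcal{H}_{\mathcal{P},\pi}(\pi),
\end{equation*}
where $\mathcal{H}_{\mathcal{P},\pi}(\pi) = -\mathbb{E}[\sum_t\log\pi(a_t\vert s_t,g)]$ is the trajectory-averaged policy entropy. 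Since the discussion after Equation~1 identifies $R(g,\tau)$ with the embodiment of $\log p(g\vert\tau)$, we have $\mathbb{E}[R]=-\mathcal{H}(g\vert\tau)$; combining with $\mathcal{I}(g;\tau)=\mathcal{H}(g)-\mathcal{H}(g\vert\tau)$ and noting that $\mathcal{H}(g)$ and $\log Z$ do not depend on $\pi$, minimizing the KL is equivalent to maximizing the MI (under the MaxEnt reading of both objectives).

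For Stage~2 I would substitute Stage~1 into Equation~4 to rewrite the KL-regularized objective, up to $\pi$-independent constants, as
\begin{equation*}
\min_{\pi}\;D_{\text{KL}}\!\bigl(p_{_{\mathcal{P_S},\pi}}(g,\tau)\,\Vert\,p^{*}_{_{\mathcal{P_S}}}(g,\tau)\bigr) \;+\; \beta\,D_{\text{KL}}\!\bigl(p_{_{\mathcal{P_S},\pi}}(g,\tau)\,\Vert\,p_{_{\mathcal{P_T},\pi}}(g,\tau)\bigr).
\end{equation*}
This is precisely the Lagrangian (multiplier fixed to one) of the constrained program $\min_{\pi} D_{\text{KL}}(p_{_{\mathcal{P_S},\pi}}\Vert p^{*}_{_{\mathcal{P_S}}})$ subject to $\beta D_{\text{KL}}(p_{_{\mathcal{P_S},\pi}}\Vert p_{_{\mathcal{P_T},\pi}})\le\epsilon$. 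By the standard penalty-to-constraint equivalence in KKT theory, I would pick $\epsilon$ equal to the value of the constraint functional at any optimizer $\pi^{\star}$ of the Lagrangian; complementary slackness then forces the constraint active at $\pi^{\star}$, so $\pi^{\star}$ also minimizes the constrained problem, giving the set $\Pi$ of the statement.

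The main obstacle I foresee is Stage~1. The identity $\mathbb{E}[R]=-\mathcal{H}(g\vert\tau)$ is exact only when $r_g$ is calibrated so that the cumulative return equals $\log p(g\vert\tau)$; in the unsupervised setting the paper uses a variational surrogate $\log q_\phi(\omega\vert s)$ instead (Equation~9), so the identity is tight only when $p^{*}_{_{\mathcal{P}}}$ is \emph{defined} by the Gibbs formula above. I would sidestep this by adopting that formula as the definition of the desired joint, consistent with the control-as-inference convention. The residual policy-entropy term in the KL expansion is absent from the bare MI, so both claims should be read in the MaxEnt RL sense, under which the equivalence is on the nose. Stage~2 is then routine duality; the only subtlety is that as $\beta$ ranges over $(0,\infty)$ the induced $\epsilon(\beta)$ traces a Pareto frontier of the two KL objectives, so the mapping between regularization strength and constraint level is well-defined and monotone.
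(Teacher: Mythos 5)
Your proposal is correct and follows essentially the route the paper intends: the paper states this lemma without a written proof, merely asserting the rewriting ``under the KKT conditions'' following Eysenbach et al.\ (2020), and your Stage~2 penalty-to-constraint argument (choosing $\epsilon$ as the constraint value at the penalized optimizer, so that any competitor feasible for the constrained problem with smaller objective would contradict optimality of the penalized problem --- note complementary slackness is not even needed for this direction, which is the only direction the subsequent theorem uses) is exactly that argument made explicit. Your Stage~1 supplies the identity the paper leaves entirely implicit, and you are right to flag that defining $p^{*}_{_{\mathcal{P}}}$ by the control-as-inference Gibbs formula leaves a residual policy-entropy term, so the stated ``equivalence'' of maximizing $\mathcal{I}_{\mathcal{P},\pi}(g;\tau)$ and minimizing $D_{\text{KL}}(p_{_{\mathcal{P},\pi}}\Vert p^{*}_{_{\mathcal{P}}})$ is exact only under the MaxEnt reading (or after folding the entropy into the reward); this is a gap in the lemma as stated that the paper glosses over and that your treatment handles more carefully than the original.
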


Next, we will show that policies that produce similar trajectories in the source and target environments also produce joint distributions of goals and trajectories similarly close to the desired one in both environments: 

\begin{lemma}
	Let policy $\pi \in \Pi$ be given, and let $L_{max}$ be the worst case absolute difference between log likelihoods of the desired joint distribution and that induced by a policy. Then the following inequality holds: 
	\begin{align*} 
	\abs{D_{\text{KL}}\left(p_{_\mathcal{P_S}, \pi } (g, \tau) \Arrowvert p^*_{_{\mathcal{P_S}}}(g, \tau)\right) 
		- D_{\text{KL}}\left(p_{_\mathcal{P_T}, \pi} (g, \tau) \Arrowvert p^*_{_{\mathcal{P_T}}}(g, \tau)\right) }
	\leq  \sqrt{\frac{2\epsilon}{\beta}}L_{max}. 
	\end{align*} 
\end{lemma}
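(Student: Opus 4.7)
The plan is to bound the difference of the two KL divergences through a change-of-measure argument combined with Pinsker's inequality, leveraging the factorization structure of the joint distributions.

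First I would exploit the key structural observation that both $p_{_\mathcal{P},\pi}(g,\tau)$ and $p^*_{_\mathcal{P}}(g,\tau)$ share the dynamics factor $\mathcal{P}(s_{t+1}|s_t,a_t)$ and the common terms $\rho_0(s_0)$, $p(g)$ in their trajectory factorizations (as written explicitly in the paragraph preceding Equation 9). Consequently, the log-likelihood ratio
\begin{equation*}
\ell(g,\tau) \;\triangleq\; \log \frac{p_{_\mathcal{P_S},\pi}(g,\tau)}{p^*_{_{\mathcal{P_S}}}(g,\tau)} \;=\; \log \frac{p_{_\mathcal{P_T},\pi}(g,\tau)}{p^*_{_{\mathcal{P_T}}}(g,\tau)}
\end{equation*}
is independent of which dynamics we use, because the $\mathcal{P}$-factors cancel in the ratio. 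By the definition of $L_{\max}$, the function $\ell$ is bounded in absolute value by $L_{\max}$.

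Next I would rewrite the quantity of interest using this pointwise identity:
\begin{equation*}
D_{\text{KL}}(p_{_\mathcal{P_S},\pi} \Vert p^*_{_{\mathcal{P_S}}}) - D_{\text{KL}}(p_{_\mathcal{P_T},\pi} \Vert p^*_{_{\mathcal{P_T}}}) \;=\; \mathbb{E}_{p_{_\mathcal{P_S},\pi}}[\ell] - \mathbb{E}_{p_{_\mathcal{P_T},\pi}}[\ell].
\end{equation*}
The right-hand side is a difference of expectations of a bounded function under two measures, which the standard variational bound controls by $|\mathbb{E}_P[f] - \mathbb{E}_Q[f]| \leq 2\|f\|_\infty \cdot \mathrm{TV}(P,Q)$. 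Applying this with $f = \ell$, $P = p_{_\mathcal{P_S},\pi}$, and $Q = p_{_\mathcal{P_T},\pi}$ gives an upper bound of $2 L_{\max} \cdot \mathrm{TV}(p_{_\mathcal{P_S},\pi}, p_{_\mathcal{P_T},\pi})$.

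Finally I would convert the total variation distance into a KL divergence via Pinsker's inequality, then use the membership $\pi \in \Pi$ to cap it:
\begin{equation*}
\mathrm{TV}(p_{_\mathcal{P_S},\pi}, p_{_\mathcal{P_T},\pi}) \;\leq\; \sqrt{\tfrac{1}{2} D_{\text{KL}}(p_{_\mathcal{P_S},\pi} \Vert p_{_\mathcal{P_T},\pi})} \;\leq\; \sqrt{\tfrac{\epsilon}{2\beta}},
\end{equation*}
which multiplied by $2L_{\max}$ yields exactly the claimed bound $\sqrt{2\epsilon/\beta}\, L_{\max}$. The main obstacle I anticipate is justifying the cancellation in the log-ratio: it relies on the desired distribution $p^*_{_\mathcal{P}}$ being specified as a factorization that reuses the environment's dynamics $\mathcal{P}$ (only the ``policy-like'' factor differs from $p_{_\mathcal{P},\pi}$). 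If $p^*$ were instead specified by an unnormalized reward-shaped tilt as in control-as-inference, the partition function could depend on $\mathcal{P}$ and require an additional small-perturbation argument; absorbing that potential gap into the constant $L_{\max}$ is the cleanest fix.
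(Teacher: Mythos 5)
Your proposal is correct and follows essentially the same route as the paper's proof: the key cancellation that makes the log-likelihood ratio $\ell(g,\tau)$ identical under both dynamics (because the transition factors appear in both numerator and denominator), followed by bounding the resulting difference of expectations via H\"older/the variational characterization of total variation, then Pinsker's inequality and the constraint $\pi\in\Pi$ to obtain $\sqrt{2\epsilon/\beta}\,L_{\max}$. Your closing caveat about $p^*_{_{\mathcal{P}}}$ needing to factor through the environment dynamics is a fair observation, but the paper makes exactly the same implicit assumption, so there is no substantive divergence.
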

\begin{proof}
	We first rewrite the KL divergence in the target environment as the following by substituting the transition probabilities in the target environment with those in the source in both parts of the fraction: 
	\begin{align*}
	D_{\text{KL}}\left(p_{_\mathcal{P_T}, \pi}(g, \tau) || p^*_{_{\mathcal{P_T}}}(g, \tau)\right) 
	= \E_{\mathcal{P_T}, \pi} \left[\log \frac{p_{_\mathcal{P_T}, \pi} (g, \tau)}{p^*_{_{\mathcal{P_T}}}(g, \tau)}\right] 
	= \E_{\mathcal{P_T}, \pi} \left[\log \frac{p_{_\mathcal{P_S}, \pi} (g, \tau)}{p^*_{_{\mathcal{P_S}}}(g, \tau)}\right]. 
	\end{align*}
	We then apply it combined with Holder’s inequality and Pinsker’s inequality to obtain the desired results: 
	\begin{align*}
	&\ D_{\text{KL}}\left(p_{_\mathcal{P_S}, \pi} (g, \tau) || p^*_{_{\mathcal{P_S}}}(g, \tau)\right) 
	- D_{\text{KL}}\left(p_{_\mathcal{P_T}, \pi} (g, \tau) || p^*_{_{\mathcal{P_T}}}(g, \tau)\right) \\
	= &\ \E_{\mathcal{P_S}, \pi} \left[\log \frac{p_{_\mathcal{P_S}, \pi} (g, \tau)}{p^*_{_{\mathcal{P_S}}}(g, \tau)}\right] 
	- \E_{\mathcal{P_T}, \pi} \left[\log \frac{p_{_\mathcal{P_S}, \pi} (g, \tau)}{p^*_{_{\mathcal{P_S}}}(g, \tau)}\right] \\
	= &\ \sum_{g, \tau} \left( p_{_\mathcal{P_S}, \pi} (g, \tau) - p_{_\mathcal{P_T}, \pi} (g, \tau)\right) \left(\log \frac{p_{_\mathcal{P_S}, \pi} (g, \tau)}{p^*_{_{\mathcal{P_S}}}(g, \tau)}\right) \\
	\leq &\ \norm{\log \frac{p_{_\mathcal{P_S}, \pi} (g, \tau)}{p^*_{_{\mathcal{P_S}}} (g, \tau)}}_{\infty}  \norm{p_{_\mathcal{P_S}, \pi} (g, \tau) - p_{_\mathcal{P_T}, \pi} (g, \tau)}_1\\
	\leq &\ \left(\max_{\tau, g} \norm{\log \frac{p_{_\mathcal{P_S}, \pi} (g, \tau)}{p^*_{_{\mathcal{P_S}}} (g, \tau)}} \right)  \sqrt{2 D_{\text{KL}} \left(p_{_\mathcal{P_S}, \pi} (g, \tau) || p_{_\mathcal{P_T}, \pi}(g, \tau)\right)}\\
	\leq &\  \sqrt{\frac{2\epsilon}{\beta}} L_{max} ,  
	\end{align*}
	where $L_{max}$ is the maximal absolute difference of log likelihoods of joint distribution $p(g,\tau)$ between the policy induced trajectory and the desired one. 
	Note that the $\sqrt{2\epsilon/\beta}$ term actually applies to any trajectory and can be considered as a scaling coefficient that shrinks the difference of log likelihoods in the joint distribution. 
\end{proof}

\begin{theorem}
	Let $\pi_{\text{DARS}}^*$ be the optimal policy that maximizes the KL regularized objective in the source environment (Equation4), let $\pi^*$ be the policy that maximizes the (non-regularized) objective in the target environment (Equation 3), let $p^*_{_{\mathcal{P_T}}}(g, \tau)$ be the desired joint distribution of trajectory and goal in the target (with the potential goal representations), and assume that $\pi^*$ satisfies Assumption 2. Then the following holds:
	\begin{align*}
	D_{\text{KL}}\left(p_{_\mathcal{P_T}, \pi_{\text{DARS}}^*}(g, \tau) \Vert p^*_{_\mathcal{P_T}}(g, \tau)\right)
	\leq D_{\text{KL}}\left(p_{_\mathcal{P_T}, \pi^*}(g, \tau) \Vert p^*_{_\mathcal{P_T}}(g, \tau)\right) + 2 \sqrt{\frac{2\epsilon}{\beta}}L_{max}.  
	\end{align*}
\end{theorem}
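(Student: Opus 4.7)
The plan is to chain the two lemmas stated in the appendix into a three-hop bound that moves from $\pi^*_{\text{DARS}}$'s performance in the target, back to its performance in the source (via Lemma 2), then across to $\pi^*$'s performance in the source (via optimality and Lemma 1), and finally forward to $\pi^*$'s performance in the target (via Lemma 2 once more). Each hop will cost at most one $\sqrt{2\epsilon/\beta}\,L_{max}$ term, and since the middle hop is free (by optimality of $\pi^*_{\text{DARS}}$ on the reformulated constrained problem), the two remaining hops produce precisely the $2\sqrt{2\epsilon/\beta}\,L_{max}$ slack in the statement.

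First, I would verify that $\pi^* \in \Pi$, where $\Pi$ is the constraint set from Lemma 1. This is exactly what Assumption 2 guarantees: $D_{\text{KL}}(p_{\mathcal{P_S},\pi^*}(g,\tau)\|p_{\mathcal{P_T},\pi^*}(g,\tau)) \leq \epsilon/\beta$, so $\beta$ times this is at most $\epsilon$, putting $\pi^*$ inside $\Pi$. By Lemma 1, $\pi^*_{\text{DARS}}$ is the minimizer of $D_{\text{KL}}(p_{\mathcal{P_S},\pi}(g,\tau)\|p^*_{\mathcal{P_S}}(g,\tau))$ over $\pi\in\Pi$, and since $\pi^*$ is a feasible competitor we get
\[
D_{\text{KL}}\!\left(p_{\mathcal{P_S},\pi^*_{\text{DARS}}}(g,\tau)\,\Vert\,p^*_{\mathcal{P_S}}(g,\tau)\right)
\;\leq\;
D_{\text{KL}}\!\left(p_{\mathcal{P_S},\pi^*}(g,\tau)\,\Vert\,p^*_{\mathcal{P_S}}(g,\tau)\right).
\]
This is the zero-cost middle hop of the chain.

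Second, I would apply Lemma 2 twice. Since $\pi^*_{\text{DARS}} \in \Pi$ automatically (it is the minimizer over $\Pi$), Lemma 2 says that its source and target KLs to the desired distribution differ by at most $\sqrt{2\epsilon/\beta}\,L_{max}$, giving the first hop
\[
D_{\text{KL}}\!\left(p_{\mathcal{P_T},\pi^*_{\text{DARS}}}(g,\tau)\,\Vert\,p^*_{\mathcal{P_T}}(g,\tau)\right)
\;\leq\;
D_{\text{KL}}\!\left(p_{\mathcal{P_S},\pi^*_{\text{DARS}}}(g,\tau)\,\Vert\,p^*_{\mathcal{P_S}}(g,\tau)\right)
+ \sqrt{\tfrac{2\epsilon}{\beta}}\,L_{max}.
\]
Since $\pi^* \in \Pi$ by the first step, Lemma 2 applies to $\pi^*$ as well, yielding the third hop
\[
D_{\text{KL}}\!\left(p_{\mathcal{P_S},\pi^*}(g,\tau)\,\Vert\,p^*_{\mathcal{P_S}}(g,\tau)\right)
\;\leq\;
D_{\text{KL}}\!\left(p_{\mathcal{P_T},\pi^*}(g,\tau)\,\Vert\,p^*_{\mathcal{P_T}}(g,\tau)\right)
+ \sqrt{\tfrac{2\epsilon}{\beta}}\,L_{max}.
\]
Concatenating these three inequalities in order (target-DARS $\to$ source-DARS $\to$ source-$\pi^*$ $\to$ target-$\pi^*$) produces exactly the claimed bound with two accumulated penalty terms.

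The easy parts are mechanical: everything reduces to bookkeeping once Lemmas 1 and 2 are in hand. The only subtlety I would flag is making sure that Lemma 2 is being applied with the correct sign each time, since its statement bounds an absolute difference; in the first hop I use it in the direction source-to-target for $\pi^*_{\text{DARS}}$, while in the third hop I use it in the opposite direction for $\pi^*$. Because Lemma 2 bounds $|{\cdot}|$, both orientations are admissible. The other point to watch is that Assumption 1 (no target transition is impossible in source) is what prevents the source-KL and the log-ratio $\log(p_{\mathcal{P_S},\pi}/p^*_{\mathcal{P_S}})$ from being vacuously infinite, keeping $L_{max}$ finite and making the whole chain meaningful; I would note this explicitly so the reader sees why the earlier assumption is load-bearing here.
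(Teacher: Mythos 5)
Your proposal is correct and follows essentially the same three-hop chain as the paper's own proof: Lemma 2 applied to $\pi^*_{\text{DARS}}$ (source to target), the zero-cost optimality comparison in the source using $\pi^* \in \Pi$ from Assumption 2, and Lemma 2 applied to $\pi^*$ in the reverse direction. The only addition is your remark on Assumption 1 keeping $L_{max}$ finite, which the paper leaves implicit but which does not change the argument.
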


\begin{proof}
	$\pi_{\text{DARS}}^*$ is the policy that produces the joint distribution in $\Pi$ closest to the desired distribution:
	\begin{align*}
	\pi_{\text{DARS}}^* = \argmin_{\pi \in \Pi} \ \ D_{\text{KL}}\left(p_{_{\mathcal{P_S}}, \pi}(g, \tau) \Arrowvert p^*_{_{\mathcal{P_S}}}(g, \tau)\right). 
	\end{align*}
	The fact that $\pi^*$ also belongs to the constraint set $\Pi$ guarantees (the left arrow: \textcolor{mycolorappendix1}{$\Updownarrow$} in Figure~\ref{fig-appendix-guarantee})
	\begin{align*}
	D_{\text{KL}}\left(p_{_{\mathcal{S}}, \pi_{\text{DARS}}^*}(g, \tau) || p^*_{_{\mathcal{P_S}}}(g, \tau)\right) 
	\leq D_{\text{KL}}\left(p_{_{\mathcal{P_S}}, \pi^*}(g, \tau) || p^*_{_{\mathcal{P_S}}}(g, \tau)\right). 
	\end{align*}
	Building on Lemma 2, the property of how close policies within $\Pi$ are from being optimal in two environments conveniently applies to both policies as well. In the worst case, the KL divergence for $\pi_{\text{DARS}}^*$ increases by this amount from the source environment to the target (the top arrow  \textcolor{mycolorappendix1}{$\Leftrightarrow$} in Figure~\ref{fig-appendix-guarantee}:  
	\begin{align*}
	D_{\text{KL}}\left(p_{_{\mathcal{P_T}}, \pi_{\text{DARS}}^*}(g, \tau) || p^*_{_{\mathcal{P_T}}}(g, \tau)\right) 
	\leq D_{\text{KL}}\left(p_{_{\mathcal{P_S}}, \pi_{\text{DARS}}^*}(g, \tau) || p^*_{_{\mathcal{P_S}}}(g, \tau)\right) +  \sqrt{\frac{2\epsilon}{\beta}} L_{max} ,  
	\end{align*}
	and the KL divergence for $\pi^*$ decreases by this amount (the bottom arrow  \textcolor{mycolorappendix1}{$\Leftrightarrow$} in Figure~\ref{fig-appendix-guarantee}: 
	\begin{align*}
	D_{\text{KL}}\left(p_{_{\mathcal{P_S}}, \pi^*}(g, \tau) || p^*_{_{\mathcal{P_S}}}(g, \tau)\right) 
	\leq D_{\text{KL}}\left(p_{_{\mathcal{P_T}}, \pi^*}(g, \tau) || p^*_{_{\mathcal{P_T}}}(g, \tau)\right) 
	+ \sqrt{\frac{2\epsilon}{\beta}}L_{max} . 
	\end{align*}
	Rearranging these inequalities of the left arrow \textcolor{mycolorappendix1}{$\Updownarrow$}, the top arrow  \textcolor{mycolorappendix1}{$\Leftrightarrow$}, and the bottom arrow  \textcolor{mycolorappendix1}{$\Leftrightarrow$} in Figure~\ref{fig-appendix-guarantee}, we obtain Theorem 1 (the right arrow \textcolor{mycolorappendix2}{$\Updownarrow$} in Figure~\ref{fig-appendix-guarantee}).
\end{proof}

\section{Extension of DARS: Goal Distribution Shifts}
\label{appendix-extensions-goal-shifts}
Our proposed DARS assumes the goal distribution of source and target environments to be the same, encouraging the probing policy $\pi_\mu$ to explore the source environment and generate the goal distribution $p(g)$ to learn the goal-conditioned policy $\pi_\theta$ with for the target environment.

\begin{wrapfigure}{r}{0.35\textwidth}
	\vspace{-15pt}
	\begin{center}
		\includegraphics[scale=0.45]{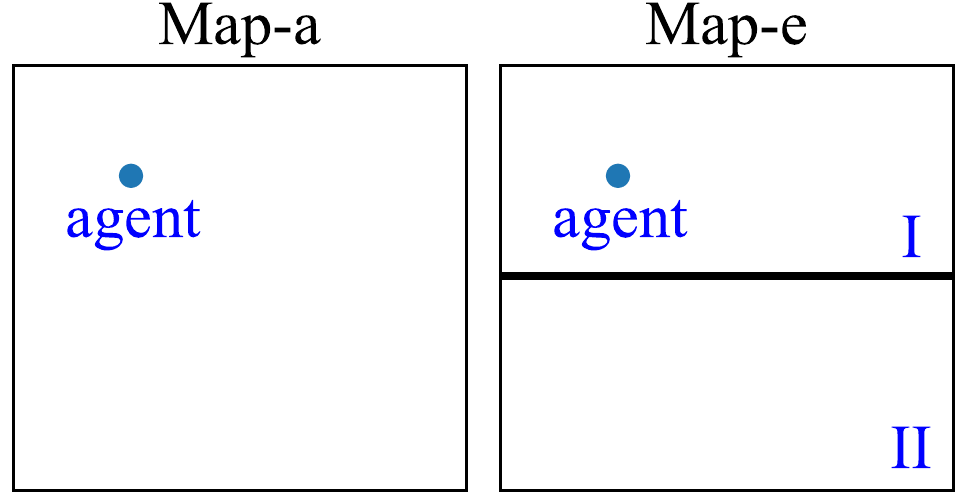}
	\end{center}
	\vspace{-10pt}
	\caption{\emph{Map-a} and \emph{Map-e}.}
	\vspace{-10pt}
	\label{fig-app-maze-e}
\end{wrapfigure}
One issue with adopting the goal distribution $p(g)$ learned in the source is that the goals of the two environments are conflicting, resulting in the goal-conditioned policy $\pi_\theta$ trying to pursue some impractical goals for the target. 
For example, we increase the length of the wall in \emph{Map-b} and \emph{Map-c} further, up to the new environment \emph{Map-e} (Figure~\ref{fig-app-maze-e} right) where the wall divides the target goal space into \emph{area-I} and \emph{area-II}. 
We assume the initial states are all in \emph{area-I}, such that the acquired goals in \emph{area-II} of the source environment are not applicable to shape the adaptive policy $\pi_\theta$ for the target environment.  
Hence, we need to require the goal distribution $p(g)$ to be also target-oriented. 

\begin{figure}[h] 
	\centering
	\includegraphics[scale=0.40]{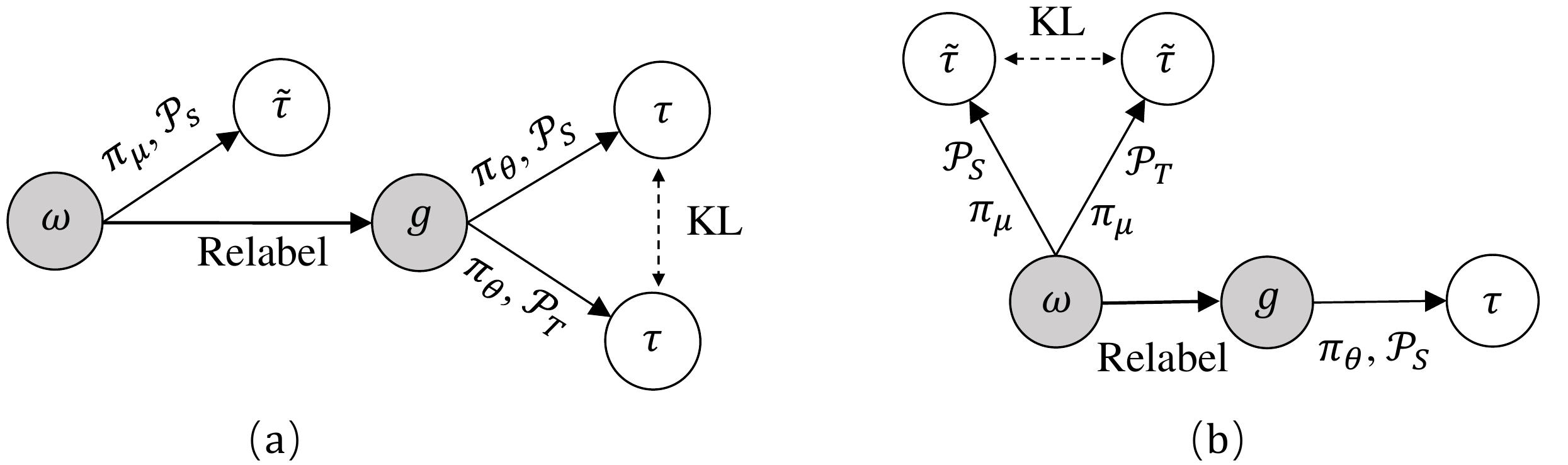}
	\caption{Graphical models of \emph{(a)} DARS with goals relabeled with the latent variable $\omega$ (repeated from the main paper), and \emph{(b)} the extension of DARS for aligning the goal distributions in two environments. Note that here, we take latent variables itself as goals: $g = \text{Relabel}(\pi_\mu, \omega, \tilde{\tau}) = \omega$.}
	\label{fig-app-graphic-model} 
\end{figure}

In fact, a straightforward application of DARS aiming to solve this issue is to align the reachable goals in the two environments with a KL regularization. 
Here we present the another advantage of introducing the probing policy $\pi_\mu$: instead of directly adopting classifiers to distinguish the reachable goals in two environments (eg, GAN), we could acquire the more principled objective for the goal distribution shifts, by autonomously exploring the source environment to provide the goal representation (see the graphic model in Figure~\ref{fig-app-graphic-model} \emph{(b)}): 
\begin{align}
\label{eq-app-objective}
\max \mathcal{J'}(\mu, \theta) &\  \triangleq \mathcal{I}_{\mathcal{P_S}, \pi_{\mu}}(\omega; \tilde{\tau})+ \mathcal{I}_{\mathcal{P_S}, \pi_{\theta}}(g; \tau)  %\nonumber \\
- \beta D_{\text{KL}}\left(p_{_\mathcal{P_S},\pi_\mu}(\omega, \tilde{\tau}) \Vert p_{_\mathcal{P_T},\pi_\mu}(\omega, \tilde{\tau})\right). 
\end{align}
To make it easier for the reader to compare with DARS, we repeat the KL regularized objective of DARS in the main paper (as well as the graphic model in Figure~\ref{fig-app-graphic-model} \emph{(a)}): 
\begin{align}
\max \mathcal{J}(\mu, \theta) &\  \triangleq \mathcal{I}_{\mathcal{P_S}, \pi_{\mu}}(\omega; \tilde{\tau})+ \mathcal{I}_{\mathcal{P_S}, \pi_{\theta}}(g; \tau) % \nonumber \\
- \beta D_{\text{KL}}\left(p_{_\mathcal{P_S},\pi_\theta}(g, \tau) \Vert p_{_\mathcal{P_T},\pi_\theta}(g, \tau)\right).  \nonumber
\end{align}

In Equation~\ref{eq-app-objective}, 
$\mathcal{J'}(\mu, \theta)$ acquires the probing policy $\pi_\mu$ to induce the aligned joint distributions $p_{_\mathcal{P_S},\pi_\mu}(\omega, \tilde{\tau})$ and $p_{_\mathcal{P_T},\pi_\mu}(\omega, \tilde{\tau})$ in the source and target environments by minimizing the new KL regularization $ D_{\text{KL}}\left(p_{_\mathcal{P_S},\pi_\mu}(\omega, \tilde{\tau}) \Vert p_{_\mathcal{P_T},\pi_\mu}(\omega, \tilde{\tau})\right)$. This indicates that the exploration process of the probing policy $\pi_\mu$ in the source will be shaped by the target dynamics. 
Once the agent reaches the goals unavailable for the target environment, the KL regularization will produce the corresponding penalty. 

Employing the similar derivation as in the main paper --- acquiring the lower bound of the mutual information terms and expanding the KL regularization term, 
we optimize $\mathcal{J'}(\mu, \theta)$ in Equation~\ref{eq-app-objective} by maximizing the following lower bound: 
\begin{align}
\label{eq-app-objective-lower}
&\  2\mathcal{H}\left(\omega\right) + \E _{p_{\text{joint}}}\left[ \log q_\phi(\omega | \tilde{s}_{t+1}) + \log  q_\phi(\omega | s_{t+1}) \right] %\nonumber \\  \quad \quad \quad 
- \E _{\mathcal{P_S},\pi_\mu}\left[ \beta \Delta r(\tilde{s}_t, a_t, \tilde{s}_{t+1}) \right], 
\end{align}
where $p_{\text{joint}}$ denotes the joint distribution of $\omega$, states $\tilde{s}_{t+1}$ and $s_{t+1}$. The states $\tilde{s}_{t+1}$ and $s_{t+1}$ are induced by the probing policy $\pi_\mu$ conditioned on the latent variable $\omega$ and the policy $\pi_\theta$ conditioned on the relabeled goals respectively, both in the source environment (Figure \ref{fig-app-graphic-model}~b). 

Furthermore, the exploration process of $\pi_\mu$ simultaneously shapes the goal distribution $p(g)$ and the goal-achievement reward ($q_\phi$ in Equation~\ref{eq-app-objective-lower}), 
which suggests the reward $q_\phi$ also being target dynamics oriented. 
The well shaped $q_\phi$ encourages the emergence of a target-adaptive goal-conditioned policy $\pi_\theta$, although learned in the source without the further reward modification (ie, $\Delta r$ in DARS).  
As such, for the objective $\mathcal{J'}(\mu, \theta)$ in Equation~\ref{eq-app-objective}, we do not incorporate the KL regularization over $\pi_\theta$ in the two environments (ie, $D_{\text{KL}}\left(p_{_\mathcal{P_S},\pi_\theta}(g, \tau) \Vert p_{_\mathcal{P_T},\pi_\theta}(g, \tau)\right)$).

\begin{figure}[t] 
	\centering
	\subfigure[DARS (Repeated from the main paper)]{
		\centering
		\includegraphics[scale=0.50]{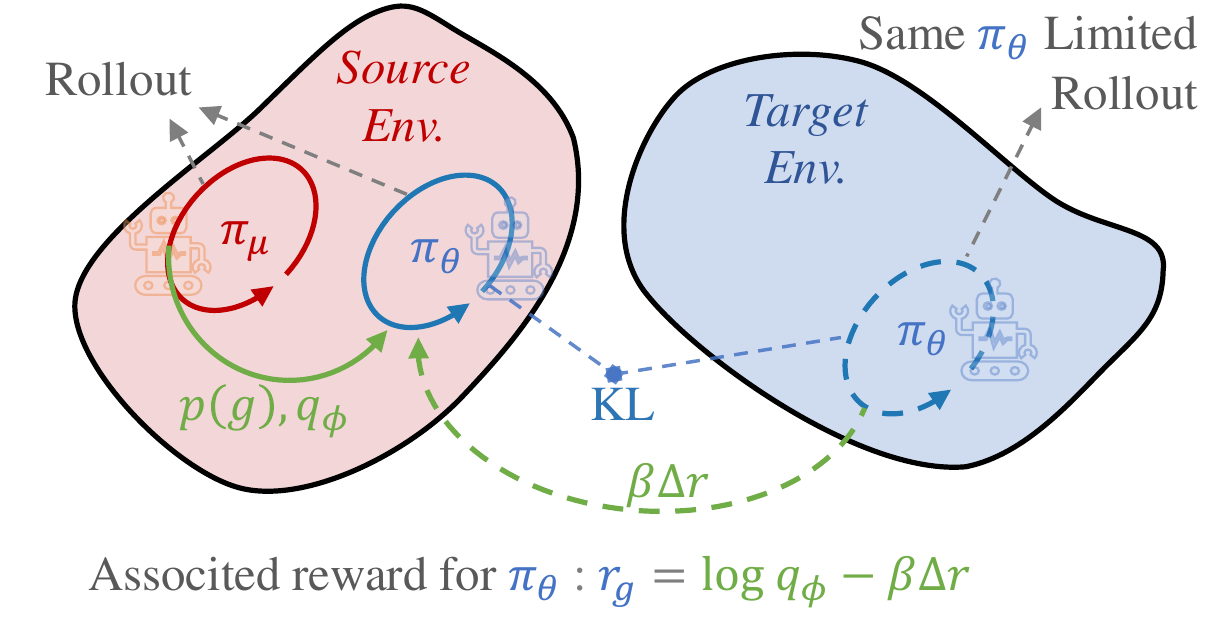}
	}
	\subfigure[Extension of DARS]{
		\centering
		\includegraphics[scale=0.50]{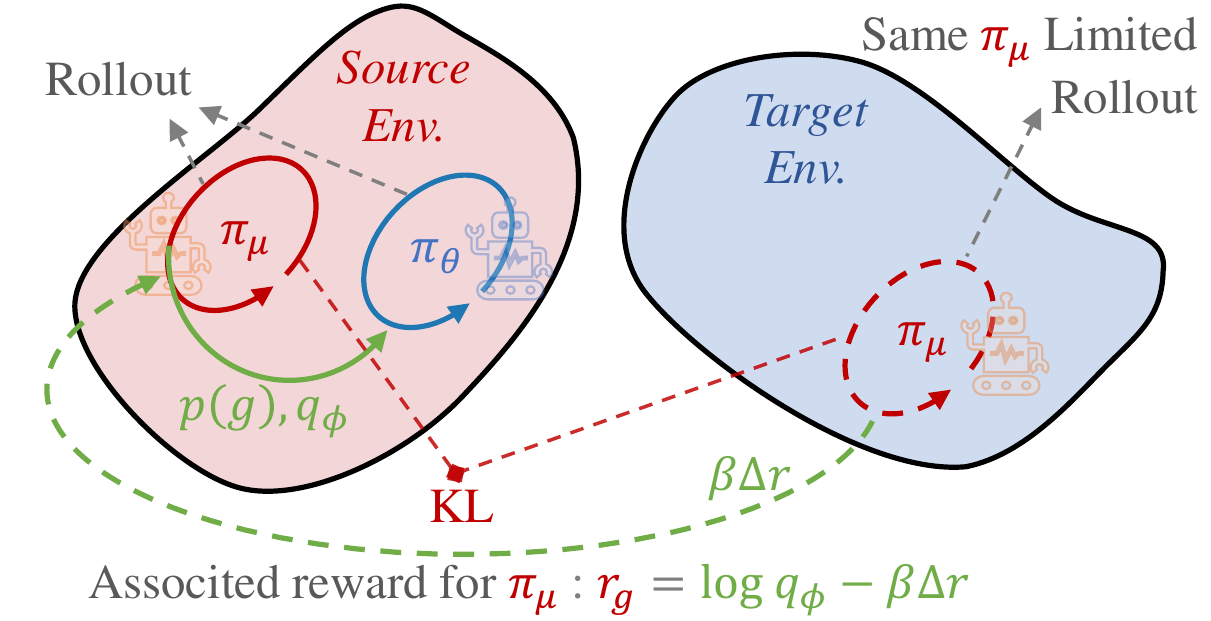}
	}
	\caption{(a) Framework of DARS: the probing policy $\pi_\mu$ provides $p(g)$ and $q_\phi$ for learning $\pi_{\theta}$, associated with the reward modification $\beta \Delta r$. (b) Framework of the extension of DARS: the robe policy $\pi_\mu$ acquires the goal representation with the associated reward modification $\beta \Delta r$.} 
	\label{fig-framework-half-app} 
\end{figure}

The corresponding framework is shown in Figure~\ref{fig-framework-half-app} (b). 
The extension of DARS rewards the latent-conditioned probing policy $\pi_\mu$ with the dynamics-aware rewards (associating $q_\phi$ with $\beta \Delta r$), where $\beta \Delta r$ is derived from the difference in two dynamics. 
This indicates that the learned goal-representation ($p(g)$ and $q_\phi$) is shaped by source and target dynamics, holding the promise of acquiring adaptive skills for the target by training mostly in the target, even though facing the goal distribution shifts. 

\subsection{Experiments}

%Next, 

Here, we adopt two didactic experiments to build intuition for the extension of DARS facing goal distribution shifts, and show the effectiveness of $q_\phi$, respectively. 
\begin{figure}[h]
	\centering
	\subfigure[Trajectories induced by $\pi_\mu$ (DARS)]{
		\centering
		\includegraphics[scale=0.40]{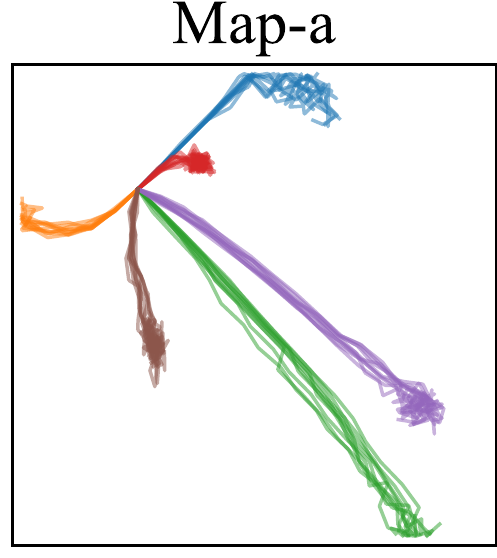}
		\includegraphics[scale=0.40]{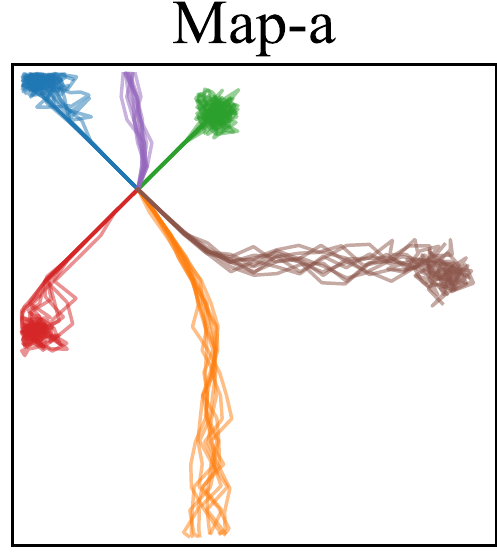}
		\includegraphics[scale=0.40]{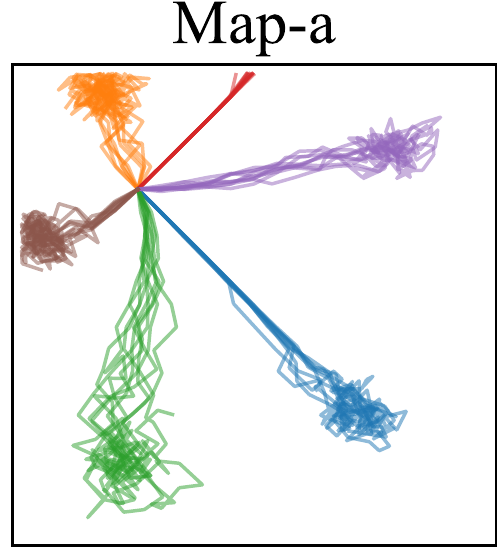}
	}
	\ \ \ \ \ 
	\subfigure[Trajectories induced by $\pi_\mu$ (the extension of DARS)]{
		\centering
		\includegraphics[scale=0.40]{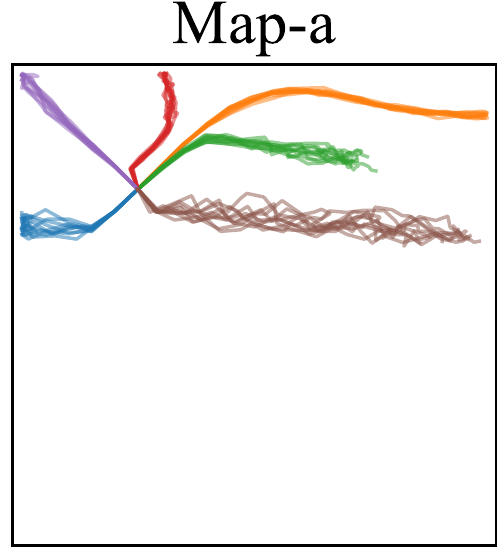}
		\includegraphics[scale=0.40]{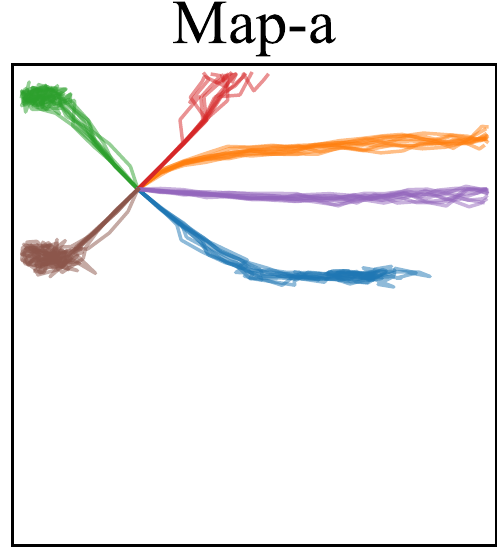}
		\includegraphics[scale=0.40]{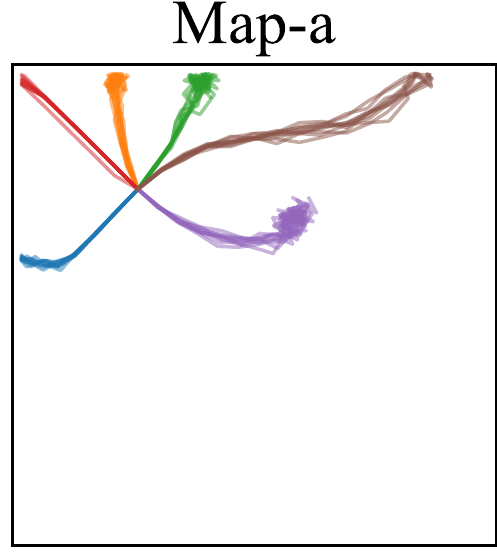}
	}
	\vspace{-3pt}
	\caption{We deploy the probing policy $\pi_\mu$ in the source environment (\emph{Map-a}), showing the span of the induced trajectories (, which will be relabeled as goals for $\pi_\theta$ to "imitate").} 
	\label{app-fig-skills-ext-q1}
	%	\vspace{-4pt}
\end{figure}

\textbf{Avoiding the goal distribution shifts.} 
We start with the \emph{(Map-a, Map-e)} task, shown in the Figure~\ref{fig-app-maze-e}, where we apply DARS and the extension of DARS to generate the goal distribution $p(g)$. 
The latent-conditioned trajectories learned by $\pi_\mu$ with DARS and the extension of DARS are shown in Figure~\ref{app-fig-skills-ext-q1}, where we deploy these skills in the source environment (\emph{Map-a}). 
We can find that trajectories induced by DARS span \emph{area-I} and \emph{area-II}, wherein parts of the goals will be impractical for the target environment (\emph{Map-e}). 
In contrast, all goals acquired by the extension of DARS are in \emph{area-I}, holding the promise to be "imitated" by $\pi_\theta$ and be adaptive for the target environment (\emph{Map-e}).

\begin{figure}[h]
	\centering
	\subfigure[$q_\phi$ learned by DARS.]{
		\centering
		\includegraphics[scale=0.50]{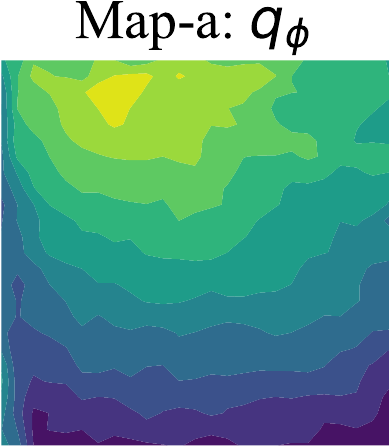}
		\includegraphics[scale=0.50]{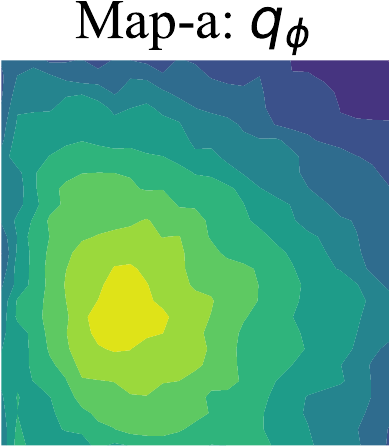}
		\includegraphics[scale=0.50]{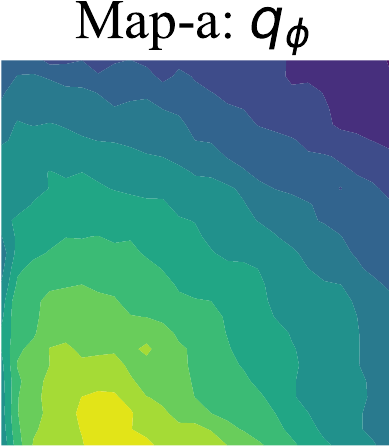}
	}
	\ \ \ \ \ 
	\subfigure[$q_\phi$ learned by the extension of DARS.]{
		\centering
		\includegraphics[scale=0.50]{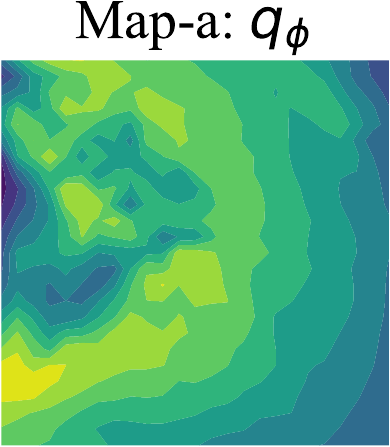}
		\includegraphics[scale=0.50]{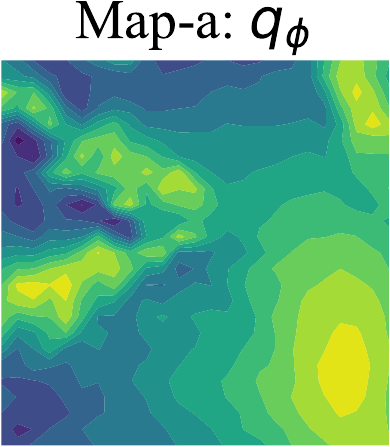}
		\includegraphics[scale=0.50]{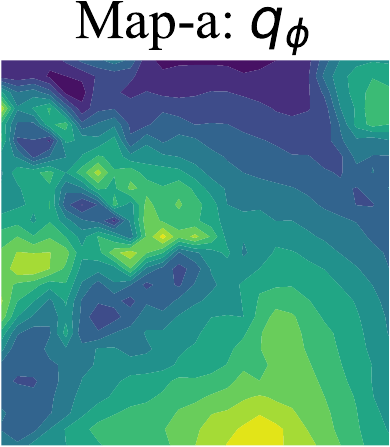}
	}
	\caption{Visualization of the learned $q_\phi$ in the source environment for \emph{(Map-a, Map-b)} task.} 
	\label{app-fig-skills-ext-q2}
\end{figure}
\textbf{The effectiveness of $q_\phi$.} 
We now verify the above analysis that $q_\phi$ is informative for $\pi_\theta$ to learn the skills in the source, meanwhile keeping the learned skills adaptive for the target environment. 
This justifies that it is sufficient to learn the goal-conditioned $\pi_\theta$ with the acquired $q_\phi$ (without further modification associated with $\Delta r$). 
To do so, 
we visualize the learned reward $q_\phi$ in Figure~\ref{app-fig-skills-ext-q2} for the \emph{(Map-a, Map-b)} task. 
The acquired $q_\phi$ by DARS resembles a negative L2-based reward, which associated with the modification $\Delta r$ could reward the goal-conditioned policy $\pi_\theta$ for learning adaptive skills for the target, \emph{as shown in the main paper Figure~7}. 
However, the extension of DARS could shape the $q_\phi$ with the target dynamics by explicitly regularizing the probing policy $\pi_\mu$. 
This suggests that the learned $q_\phi$ is also target-dynamics-oriented. As shown in Figure~\ref{app-fig-skills-ext-q2} (right), the acquired $q_\phi$ in the source dynamics is well shaped by the target dynamics (\emph{Map-b}). Consequently, we can adopt the well shaped $q_\phi$ to reward $\pi_\theta$, without incorporating the KL regularization for $\pi_\theta$ in the two environments (ie, $D_{\text{KL}}\left(p_{_\mathcal{P_S},\pi_\theta}(g, \tau) \Vert p_{_\mathcal{P_T},\pi_\theta}(g, \tau)\right)$).

\section{Additional Experimental Results for DARS}
\label{appendix-additional-experiment}

\begin{figure}[h]
	\centering
	\includegraphics[scale=0.375]{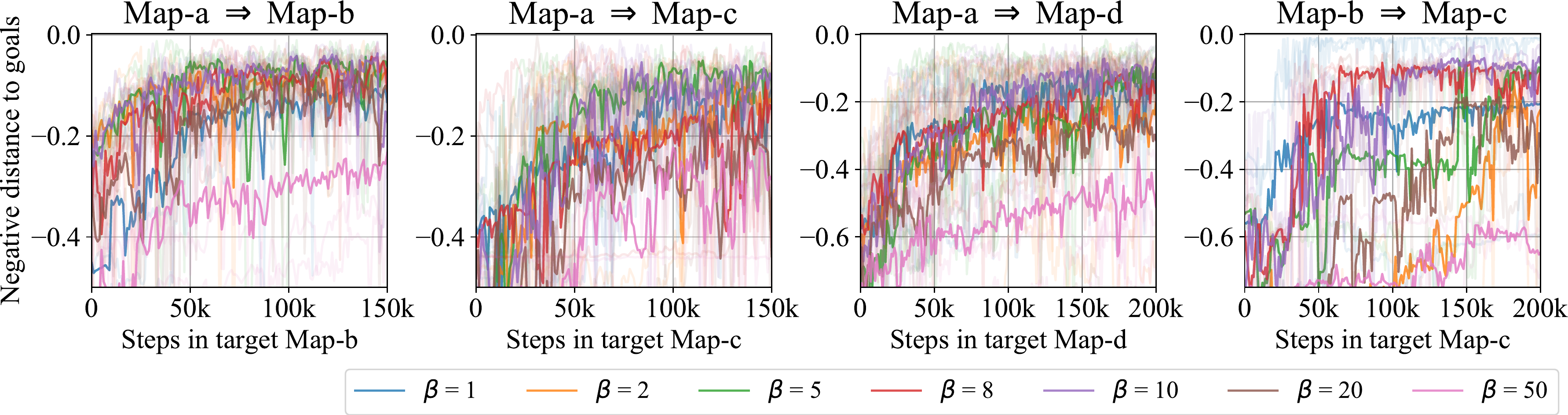} 
	\caption{Importance of the KL regularization coefficient.} 
	\label{app-fig-coefficient}
\end{figure}
\textbf{Analyzing the effects of coefficient $\beta$.} 
In the main paper, we show that the KL regularization term ($\beta D_{\text{KL}}\left(p_{_\mathcal{P_S},\pi_\theta}(g, \tau) \Vert p_{_\mathcal{P_T},\pi_\theta}(g, \tau)\right)$) is critical to align the trajectories induced by the same policy ($\pi_\theta$) in the source (with $\mathcal{P_S}$) and target (with $\mathcal{P_T}$) environments, enabling the acquired skills to be adaptive for the target environment. 
The coefficient $\beta$ determines the trade-off between the source-oriented exploration and the induced trajectory alignment. 
As show in Figure~\ref{app-fig-coefficient}, we analyze the effects of $\beta$. We can see that generally a higher $\beta$ gives better performance, while a value too high ($\beta = 50$) will lead to performance degradation.

\begin{figure}[h]
	\centering
	\includegraphics[scale=0.55]{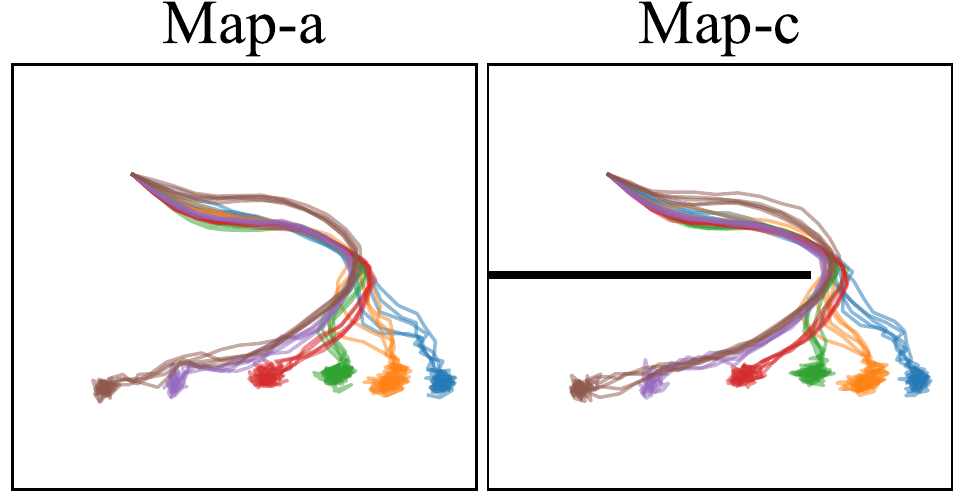} 
	\ \ \ \ \ \ \ \ 
	\includegraphics[scale=0.55]{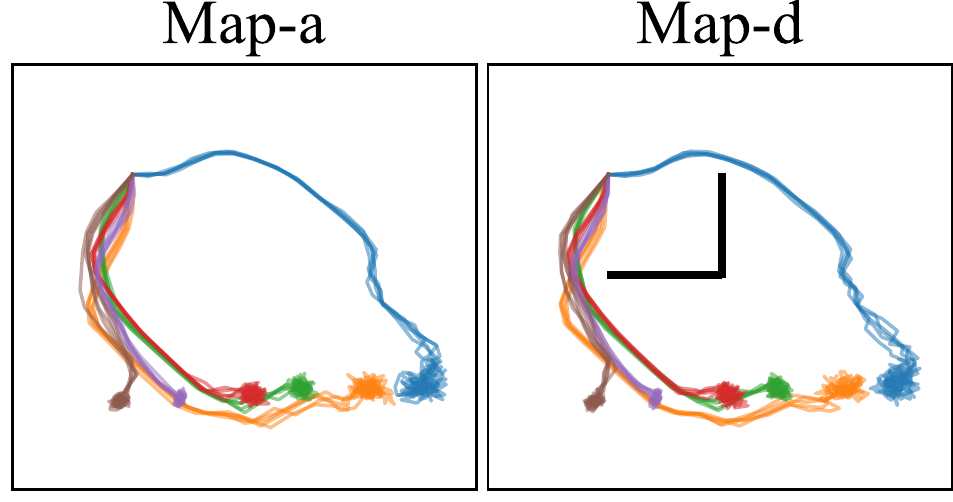}
	\caption{Interpolating skills learned by DARS. Interpolation is performed at the latent variable level by blending the $\omega$ vector of two
		skills (the far left and far right skills in the figures).} 
	\label{app-fig-skills-inter}
\end{figure}

\textbf{Interpolating between skills.} 
Here we show that DARS can interpolate previously learned skills, and these interpolated skills are also adaptive for the target environments. We adopt \emph{(Map-a, Map-c)} and \emph{(Map-a, Map-d)} tasks to interpolate skills, as shown in Figure~\ref{app-fig-skills-inter}.

\textbf{More skills.} We show more acquired skills in Figure~\ref{fig-skills}. 

\begin{figure}[h]
	\centering
	\includegraphics[scale=0.40]{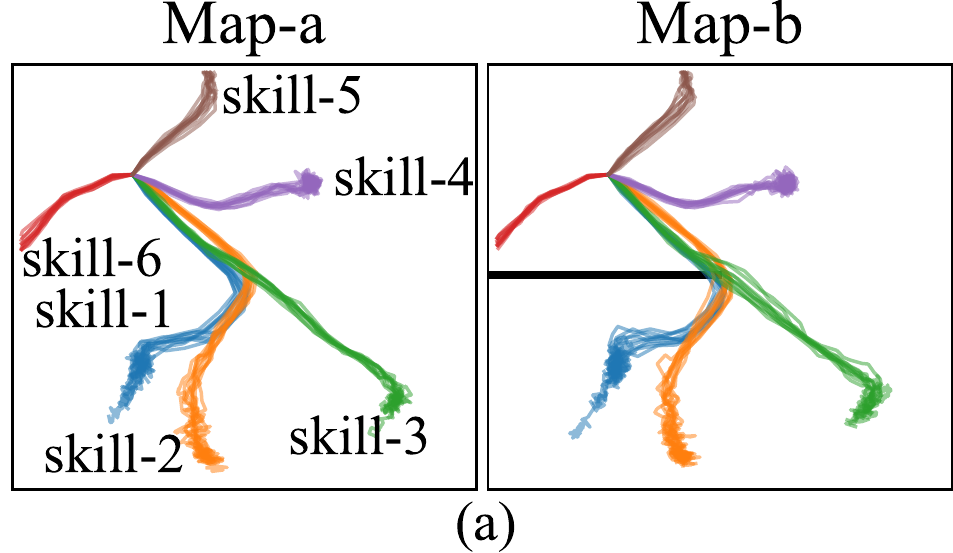} \ \ \ 
	\includegraphics[scale=0.40]{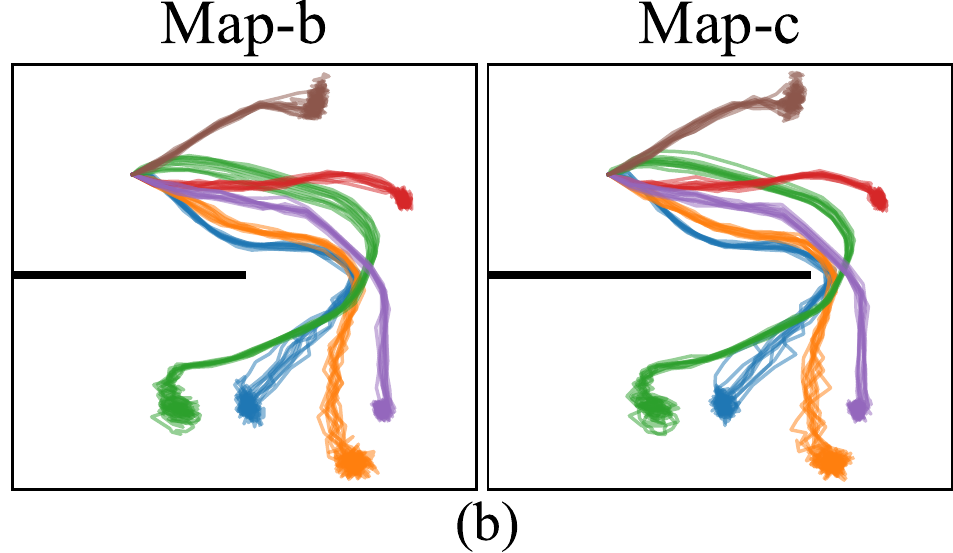} \ \ \ 
	\includegraphics[scale=0.40]{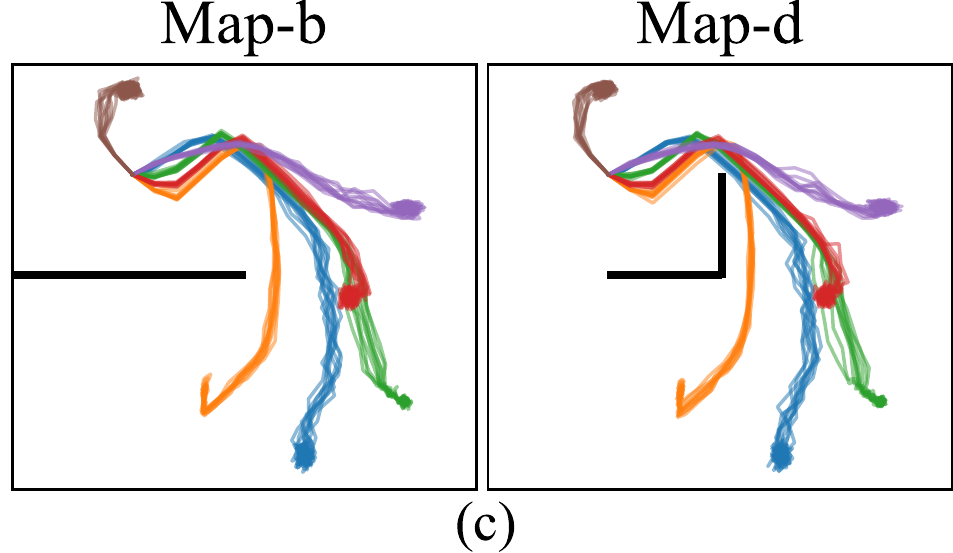} 
	\includegraphics[scale=0.415]{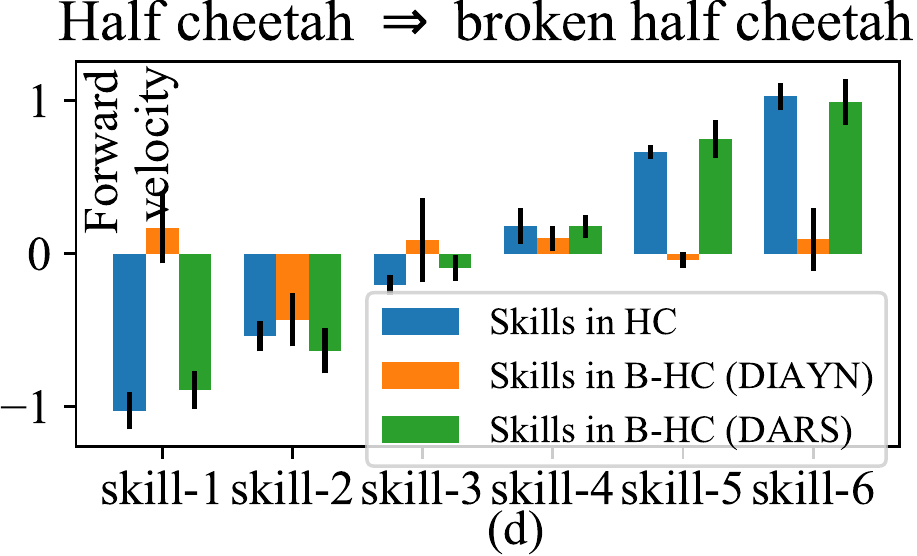}
	\ \ \  
	\includegraphics[scale=0.35]{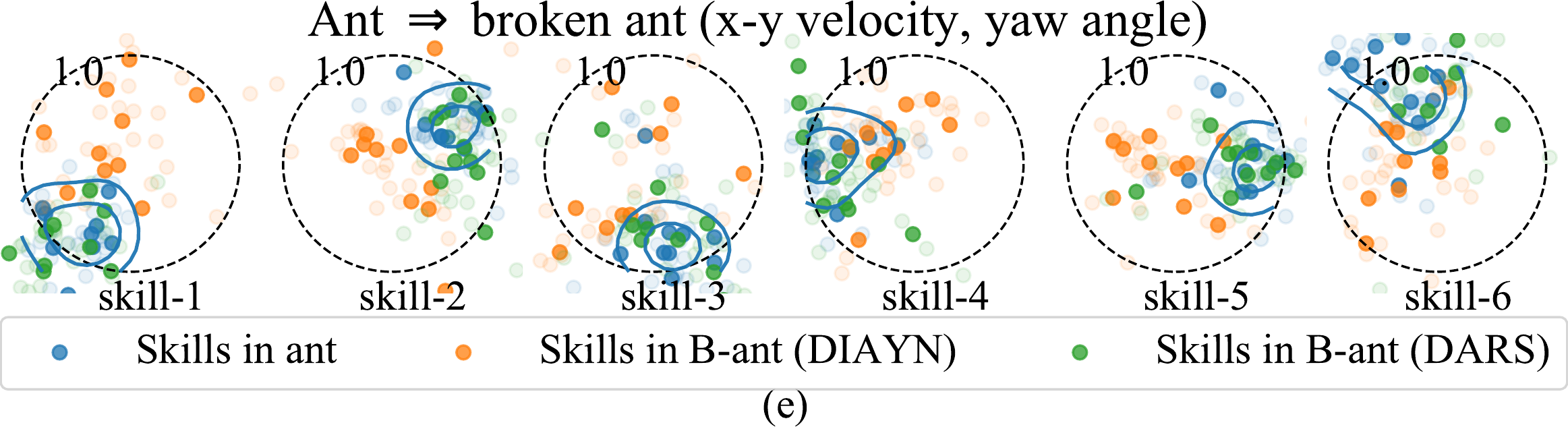} 
	%	\vspace{-7pt}
	\caption{Visualization of skills. 
		\emph{(a, b, c)}: colored trajectories in \emph{map} pairs depict the skills, learned with DARS, deployed in source (left) and target (right). 
		\emph{(d, e)}: colored bars and dots depict the~velocity of each skill wrt different environments of \emph{mujoco} and models.}  
	%	Visualization of the skills learned in source, which (deployed in source and target environments) keep the same behaviors. In comparison, we employ DIAYN as a representation of the standard unsupervised RL methods.
	\label{fig-skills}
\end{figure}

%\begin{wrapfigure}{r}{0.445\textwidth}
%	\centering
%	%	\includegraphics[scale=0.413321]{}
%	%	\ \ \ \ 
%	\includegraphics[scale=0.412321]{}
%	%	\caption{The recorded behaviors of sim2real transfer on the quadruped robot. (left): The learned skills in stable environments (moving forward and moving backward); (right): The learned skills in unstable environments (keeping balance). } 
%	\caption{The recorded behaviors of sim2real transfer on the quadruped robot in unstable environments (keeping balance).} 
%	\label{fig-rebuttal-dog} 
%\end{wrapfigure}
%\textbf{Acquired skills on the quadruped robot.} We record the RPY (roll pitch and yaw) for keeping balance skills in unstable environments in Figure~\ref{fig-rebuttal-dog}. 

%\begin{figure}[h] 
%	
%\end{figure}

\section{Implementation Details}

\subsection{Training process} 
\label{appendix-training-process}

\begin{figure*}[h]
	\centering
	\includegraphics[scale=0.3210]{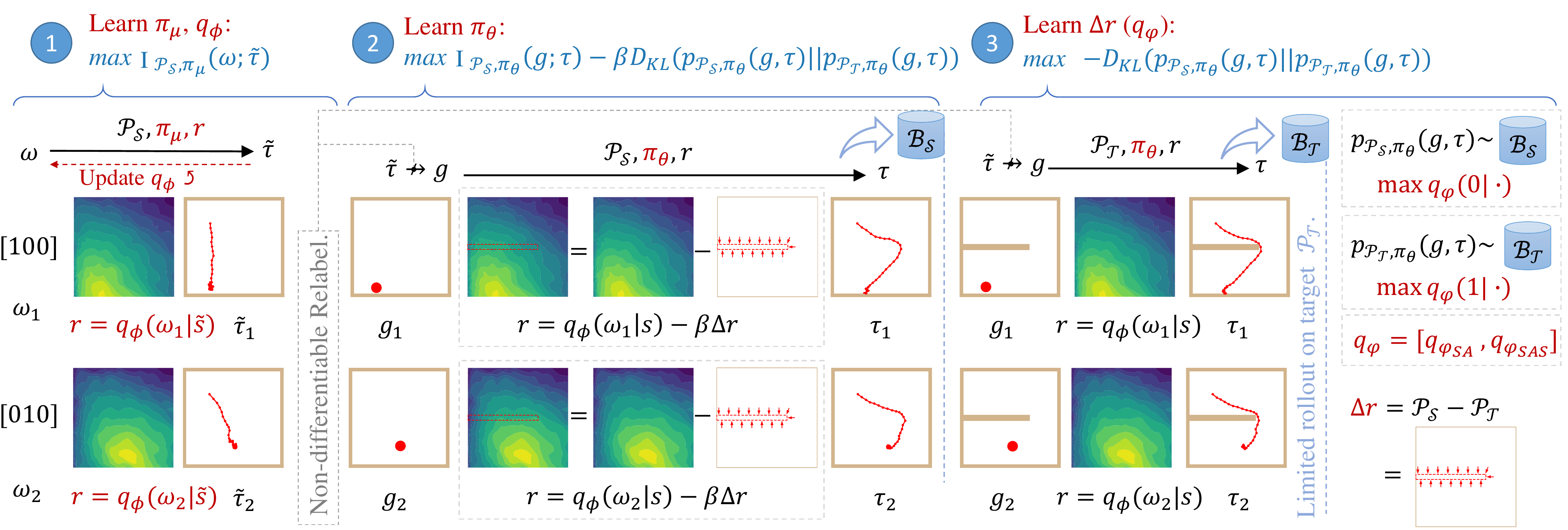}
	\caption{The training process of DARS, where 
		(1) the probing policy $\pi_\mu$ explores the source environment and learn the discriminator $q_\phi$ by summarizing the latent variables $\omega$; 
		(2) the goal-conditioned policy $\pi_\theta$ is trained in the source environment with $q_\phi$ and modification $\Delta r$ as its reward to reach the goal $g$ relabeled with the final state of $\pi_\mu$ (taking the final states of $\tilde{\tau}$ as the relabeled goals $g$); 
		(3) $\pi_\theta$ runs in both environments to collect data in buffers $\mathcal{B_S}$ and $\mathcal{B_T}$. $\Delta r$ reflects the discrepancy in trajectory distributions and is approximated with classifier $q_\varphi$ which distinguishes where a transition in a buffer takes place. }
	\label{fig-app-framework}
\end{figure*} 
The diagram of our training process is shown in Figure~\ref{fig-app-framework}. The whole procedure of DARS is devided into 3 steps: 1) learn the probing policy $\pi_\mu$ and the associated discriminator $q_\phi$ by maximizing $\mathcal{I}_{\mathcal{P_S}, \pi_{\mu}}(\omega; \tilde{\tau})$; 2) learn the goal-conditioned policy $\pi_\theta$ by maximizing the KL regularized objective $\mathcal{I}_{\mathcal{P_S}, \pi_{\theta}}(g; \tau) - \beta D_{\text{KL}}\left(p_{_\mathcal{P_S},\pi_\theta}(g, \tau) \Vert p_{_\mathcal{P_T},\pi_\theta}(g, \tau)\right) $; 3) adopt two classifiers to learn the reward modification $\Delta r$ by maximizing $- D_{\text{KL}}\left(p_{_\mathcal{P_S},\pi_\theta}(g, \tau) \Vert p_{_\mathcal{P_T},\pi_\theta}(g, \tau)\right)$ using the source and target buffers $\mathcal{B_S}$ and $\mathcal{B_T}$ (collected by the goal-conditioned policy $\pi_\theta$).  

\subsection{Environments}
\label{appendix-environments-details}

Here we introduce the details of the source and target environments, including the map pairs (\emph{Map-a}, \emph{Map-b}, \emph{Map-c}, \emph{Map-d}, \emph{Map-e}), the mujoco pairs (\emph{Ant}, \emph{Broken Ant}, \emph{Half Cheetah}, \emph{Broken Half Cheetah}), humanoid pairs (\emph{Humanoid}, \emph{Broken Humanoid}, \emph{Attacked Humanoid}) and the quadruped robot. 

\textbf{Map pairs.} An agent can autonomously explore the maps with continuous state space ($[0,1]^2$) and action space ($[0, 0.05]^2$), where the wall could block the corresponding transitions. The length of the wall in different maps varies: $\text{Len}(\text{\emph{Map-a}}) = 0$, $\text{Len}(\text{\emph{Map-b}}) = 0.5$, $\text{Len}(\text{\emph{Map-c}}) = 0.75$, $\text{Len}(\text{\emph{Map-e}}) = 1$, and the lengths of the horizon and vertical walls in \emph{Map-d} are both $0.25$. 

\textbf{Mujoco pairs.} We introduce \emph{Ant} and \emph{Half Cheetah} from OpenAI Gym. 
In the target environments, the 3rd (\emph{Broken Ant}) or 0th (\emph{Broken Half Cheetah}) joint is broken: zero torque is applied to this joint, regardless of the commanded torque. 

\textbf{Humanoid pairs.} 
The environments are based on the \emph{Humanoid} environment in OpenAI Gym, where the broken version (\emph{Broken Humanoid}) denotes the first three joints being broken, and the attacked version (\emph{Attacked Humanoid}) refers to the agent being attacked by a cube. 

\emph{\textbf{Quadruped robot.}}
We utilize the 18-DoF Unitree A1 quadruped\footnote{https://www.unitree.com/products/a1/.} (see the simulated environment in supplementary material). Note that, for more evident comparison, we break the left hind leg of the real robot: zero torque is applied to these joints. 
That is to say, the state of these joints keeps a fixed value. 
In the stable setting, e.g moving forward and moving backward: $$state[-3:0] = [0.000, 0.975, -1.800];$$ in the unstable setting, e.g. keeping standing: $$state[-3:0]=[0.000, 2.000, -2.500].$$

\subsection{Hyper-parameters}
\label{appendix-hyper-parameters}

The hyper-parameters are presented in Table~\ref{tab-app-hyper}.

%Source code is given in supplementary material. 

% Table generated by Excel2LaTeX from sheet 'Sheet1'
\begin{table}[htbp]
	\centering
	\caption{Hyper-parameters}
	\begin{tabular}{c|c|c}
		\toprule
		\multicolumn{2}{c|}{learning rate (Training)} & 0.0003 \\
		\cmidrule{1-2}    \multicolumn{2}{c|}{batch size (Training)} & 256 \\
		\cmidrule{1-2}    \multicolumn{2}{c|}{Discount factor (RL)} & 0.99 \\
		\cmidrule{1-2}    \multicolumn{2}{c|}{Smooth coefficient (SAC)} & 0.05 \\
		\cmidrule{1-2}    \multicolumn{2}{c|}{Temperature (SAC)} & 0.2 \\
		\cmidrule{1-2}    \multicolumn{2}{c|}{Coefficient factor $\beta$ (DARS)} & 10 \\
		\midrule
		\multirow{3}[2]{*}{Buffer size of $\pi_\mu$ in source (SAC)} & Map-a, Map-b, Map-c, Map-d, Map-e & 2500 \\
		& Ant, Broken Ant, Half Cheetah, Broken Half Cheetah & 5000 \\
		& Humanoid, Attacked Humanoid, Broken Humanoid, Quadruped robot & 5000 \\
		\midrule
		\multirow{3}[2]{*}{Buffer size of $\pi_\theta$ in source (SAC)} & Map-a, Map-b, Map-c, Map-d, Map-e & 5000 \\
		& Ant, Broken Ant, Half Cheetah, Broken Half Cheetah & 10000 \\
		& Humanoid, Attacked Humanoid, Broken Humanoid, Quadruped robot & 10000 \\
		\midrule
		\multirow{3}[2]{*}{Buffer size of $\pi_\theta$ in target (SAC)} & Map-a, Map-b, Map-c, Map-d, Map-e & 20000 \\
		& Ant, Broken Ant, Half Cheetah, Broken Half Cheetah & 50000 \\
		& Humanoid, Attacked Humanoid, Broken Humanoid, Quadruped robot & 50000 \\
		\midrule
		\multirow{3}[2]{*}{Steps of single rollout} & Map-a, Map-b, Map-c, Map-d, Map-e & 50 \\
		& Ant, Broken Ant, Half Cheetah, Broken Half Cheetah & 200 \\
		& Humanoid, Attacked Humanoid, Broken Humanoid, Quadruped robot & 250 \\
		\bottomrule
	\end{tabular}%
	\label{tab-app-hyper}
\end{table}%

\end{document}